\pgfplotsset{compat=newest}
\pgfplotsset{compat=newest}
\DeclareMathOperator{\tr}{tr}
\newcommand{\T}{{\sf T}}
\DeclareMathOperator*{\erf}{erf}
\DeclareMathOperator{\CK}{CK}
\DeclareMathOperator{\NTK}{NTK}
\DeclareMathOperator{\ReLU}{ReLU}
\DeclareMathOperator{\poly}{poly}
\newcommand{\RR}{{\mathbb{R}}}
\newcommand{\EE}{{\mathbb{E}}}
\newcommand{\NN}{{\mathcal{N}}}
\newcommand{\A}{\mathbf{A}}
\newcommand{\B}{\mathbf{B}}
\newcommand{\C}{\mathbf{C}}
\newcommand{\K}{\mathbf{K}}
\newcommand{\J}{\mathbf{J}}
\newcommand{\M}{\mathbf{M}}
\newcommand{\V}{\mathbf{V}}
\newcommand{\W}{\mathbf{W}}
\newcommand{\X}{\mathbf{X}}
\newcommand{\Z}{\mathbf{Z}}
\renewcommand{\P}{\mathbf{P}}
\newcommand{\x}{\mathbf{x}}
\newcommand{\z}{\mathbf{z}}
\newcommand{\s}{\mathbf{s}}
\newcommand{\w}{\mathbf{w}}
\newcommand{\zo}{\mathbf{0}}
\newcommand{\one}{\mathbf{1}}
\newcommand{\I}{\mathbf{I}}
\newcommand{\bt}{\mathbf{t}}
\newcommand{\bT}{\mathbf{T}}
\newcommand{\bmu}{\boldsymbol{\mu}}
\newcommand{\bSigma}{\boldsymbol{\Sigma}}
\newcommand{\bpsi}{\boldsymbol{\psi}}
\definecolor{RED}{rgb}{0.7,0,0}
\definecolor{BLUE}{rgb}{0,0,0.69}
\definecolor{GREEN}{rgb}{0,0.6,0}
\definecolor{PURPLE}{rgb}{0.69,0,0.8}
\definecolor{ORANGE}{RGB}{255,103,0}
\definecolor{BROWN}{RGB}{100,20,45}
\newcommand{\RED}{\color[rgb]{0.70,0,0}}
\newcommand{\BLUE}{\color[rgb]{0,0,0.69}}
\newcommand{\GREEN}{\color[rgb]{0,0.6,0}}
\newcommand{\PURPLE}{\color[rgb]{0.69,0,0.8}}
\newcommand{\ORANGE}{\color[RGB]{255,103,0}}
\newcommand{\BROWN}{\color[RGB]{100,20,45}}
\newtheorem{Assumption}{Assumption}
\newtheorem{Theorem}{Theorem}
\newtheorem{Corollary}{Corollary}
\newtheorem{Lemma}{Lemma}
\newtheorem{Remark}{Remark}
\title{``Lossless'' Compression of Deep Neural Networks: \\A High-dimensional Neural Tangent Kernel Approach}
\author{
  Lingyu Gu$^{*1}$
  \And
  Yongqi Du$^{*1}$
  \And
  Yuan Zhang$^2$
  \And 
  Di Xie$^2$
  \And
  Shiliang Pu$^2$
  \And 
  Robert C.~Qiu$^1$
  \And
  Zhenyu Liao$^{\dagger1}$ \\
  $^1$EIC, Huazhong University of Science and Technology, Wuhan, China\\
  $^2$Hikvision Research Institute, Hangzhou, China
}
\begin{document}

\maketitle

\begin{abstract}
Modern deep neural networks (DNNs) are extremely powerful; however, this comes at the price of increased depth and having more parameters per layer, making their training and inference more computationally challenging. 
In an attempt to address this key limitation, efforts have been devoted to the compression (e.g., sparsification and/or quantization) of these large-scale machine learning models, so that they can be deployed on low-power IoT devices.
In this paper, building upon recent advances in neural tangent kernel (NTK) and random matrix theory (RMT), we provide a novel compression approach to wide and fully-connected \emph{deep} neural nets. 
Specifically, we demonstrate that in the high-dimensional regime where the number of data points $n$ and their dimension $p$ are both large, and under a Gaussian mixture model for the data, there exists \emph{asymptotic spectral equivalence} between the NTK matrices for a large family of DNN models.
This theoretical result enables ``lossless'' compression of a given DNN to be performed, in the sense that the compressed network yields asymptotically the same NTK as the original (dense and unquantized) network, with its weights and activations taking values \emph{only} in $\{ 0, \pm 1 \}$ up to a scaling. 
Experiments on both synthetic and real-world data are conducted to support the advantages of the proposed compression scheme, with code available at \url{https://github.com/Model-Compression/Lossless_Compression}.
\end{abstract}

\section{Introduction}
\label{sec:intro}

\def\thefootnote{$*$}\footnotetext{Equal contribution, listed in a random order by rolling a dice on WeChat.}
\def\thefootnote{$\dagger$}\footnotetext{Author to whom any correspondence should be addressed. Email: \href{mailto:zhenyu_liao@hust.edu.cn}{zhenyu\_liao@hust.edu.cn}}
\def\thefootnote{\arabic{footnote}}

Modern deep neural networks (DNNs) are becoming increasingly over-parameterized, having more parameters than required to fit the also increasingly large, complex, and high-dimensional data.
While the list of successful applications of these large-scale machine learning (ML) models is rapidly growing, the energy consumption of these models is also increasing, making them more challenging to deploy on close-to-user and low-power devices.
To address this issue, compression techniques have been proposed that prune, sparsify, and/or quantize DNN models \cite{deng2020model,hoefler2021sparsity}, thereby yielding DNNs of a much smaller size that can still achieve satisfactory performance on a given ML task.
As an illustrative example, it has been recently shown that at least $90\%$ of the weights in popular DNN models such as VGG19 and ResNet32 can be removed with virtually no performance loss \cite{su2020sanity}.

Despite the remarkable progress achieved by various DNN model compression techniques, due to the nonlinear and highly non-convex nature of DNNs, our theoretical understanding of these large-scale ML models, as well as of their compression schemes, is progressing at a more modest pace. 
For example, it remains unclear how much a given DNN model can be compressed \emph{without} severe performance degradation; perhaps more importantly, on the degree to which such a \emph{performance and complexity trade-off} depends on the ML task and the data also remains unknown.

In this respect, neural tangent kernels (NTKs) \cite{jacot2018neural}, provide a powerful tool for use in assessing the convergence and generalization properties of very wide (sometimes unrealistically so) DNNs by studying their corresponding NTK eigenspectra, which are \emph{solely} dependent on the input data, the network architecture and activation function, as well as the random weights distribution.\footnote{In the remainder of this article, what we refer to as the ``NTK matrix'' is essentially the limiting (and nonrandom) NTK matrix to which the random NTK converges under the infinite-wide limit. }

In this paper, building upon recent advances in random matrix theory (RMT) and high-dimensional statistics, we demonstrate that for data $\x_1, \ldots, \x_n \in \RR^p$ drawn from a $K$-class Gaussian mixture model (GMM), in a high-dimensional and non-trivial classification regime where the input data dimension $p$ and their size $n$ are  both large and comparable, the \emph{eigenspectra} of both the NTK and the closely related conjugate kernel (CK) matrices at \emph{any} layer $\ell \in \{1, \ldots, L \}$ are \emph{independent} of the distribution of the i.i.d.\@ entries of the (random) weight matrix $\W_\ell$, provided that they are ``normalized'' to have zero mean and unit variance, and \emph{only} depend on the activation function $\sigma_\ell(\cdot)$ via \emph{four} scalar parameters.
In a sense, we establish, at least for GMM data, the asymptotic \emph{spectral equivalence} between the CK and NTK matrices of the corresponding network layers, and consequently of the whole network, for a large family of DNN models with possibly very different weights and activations, given that they have normalized weights and share the same few activation-related parameters.

Since the convergence and generalization properties of wide DNNs depend only on the eigenspectra (i.e., eigenvalue-eigenvector pairs; see also Remark~\ref{rem:spectral} below) of the corresponding NTK matrices \cite{jacot2018neural,fan2020spectra}, in the sense that, e.g., the time evolutions (when trained with gradient descent using a sufficiently small step size) of both the residual error and the in-sample prediction of the network can be expressed as \emph{explicit} functions of NTK eigenvalues and eigenvectors \cite{fan2020spectra,adlam2020neural,huang2020dynamics}, we further exploit the above theoretical results to propose a novel ``lossless'' compression approach for \emph{fully-connected} DNN models, by designing a sparse and quantized DNN that (i) has asymptotically the \emph{same} NTK eigenspectra as the original ``dense and full precision'' network, and (ii) has both weights and activations taking values in the set $\{-1, 0, +1\}$ before scaling, and can thus be stored and computed much more efficiently.

Despite being derived here for Gaussian mixture data, an unexpected close match is observed between our theory and the empirical results on real-world datasets, suggesting possibly wider applicability for the proposed ``lossless'' compression approach.
Looking forward, we expect that our analysis will open the door to improved analysis of involved ML methods based on RMT and high-dimensional statistics, which will demystify the seemingly striking empirical observations in, say, modern DNNs.

\subsection{Our contributions}

Our main results can be summarized as follows: 
\begin{enumerate}
  \item We provide, in Theorems~\ref{theo:CK}~and~\ref{theo:NTK} respectively, for GMM data and in the high-dimensional regime 
  of Assumption~\ref{ass:high-dimen}, \emph{precise} eigenspectral characterizations of CK and NTK matrices of fully-connected DNNs; we particularly show that the CK and NTK eigenspectra do \emph{not} depend on the distribution of i.i.d.\@ network weights, and \emph{depend} solely on the activation function of each layer via a few scalar parameters.
  \item In Corollary~\ref{coro:sparse_quantized} and Algorithm~\ref{alg:sparse_quantized}, we exploit these results to propose a novel DNN compression scheme, with \emph{sparsified and ternarized} weights and activations, without affecting the NTK spectral behavior, and thus the convergence and generalization properties of the network.
  \item 
  We provide empirical evidence on (not so) wide DNNs trained on both synthetic Gaussian and real-world datasets such as MNIST~\cite{lecun1998gradient} and CIFAR10~\cite{krizhevsky2009learning}, and show a factor of $10^3$ less memory is needed with the proposed DNN compression approach, with virtually no performance loss.
\end{enumerate}

\subsection{Related work}

\textbf{Neural network model compression.} The study of NN compression dates back to early 
1990 \cite{lecun1990optimal}, at which point, in the absence of the (possibly more than) sufficient computational power that we have today, compression techniques allowed neural networks to be empirically evaluated on computers with limited computational and/or storage resources \cite{schmidhuber2015deep}.
Alongside the rapid growth of increasingly powerful computing devices, the development of more efficient NN architectures and training/inference protocols, and the need to implement NNs on mobile and low-power devices, (D)NN model compression has become an active research topic and many elegant and efficient compression approaches have been proposed over the years \cite{han2015deep,howard2017mobilenets,hubara2016binarized,hoefler2021sparsity}.
However, due to the nonlinear and highly non-convex nature of DNNs, our theoretical understanding of these large-scale ML models, as well as of (e.g., the fundamental ``performance and complexity'' trade-off of) compressed DNNs, is somewhat limited \cite{hoefler2021sparsity}.

\textbf{Neural tangent kernel.}
Neural tangent kernel (NTK) theory proposed in \cite{jacot2018neural}, by considering the limit of infinitely wide DNNs, characterizes the convergence and generalization properties of very wide DNNs when trained using gradient descent with small steps. Initially proposed for fully-connected nets, the NTK framework has been subsequently extended to convolutional \cite{arora2019exact}, graph \cite{du2019graph},  and recurrent \cite{alemohammad2020recurrent} settings. 
The NTK theory, while having the advantage of being mathematically more tractable (via, e.g., the characterization of the associated reproducing kernel Hilbert space \cite{bietti2019inductive}), seems to diverge from the regime on which modern (and not so wide) DNNs operate, see~\cite{chizat2019lazy,liu2020linearity,fan2020spectra}.

\textbf{Random matrix theory and neural networks.}
Random matrix theory (RMT), a powerful and flexible tool for assessing the behavior of large-scale systems with a large ``degree of freedom'', is gaining popularity in the field of NN analysis \cite{martin2019traditional,martin2020heavy}, in both shallow \cite{pennington2017nonlinear,liao2018spectrum,liao2018dynamics} and deep \cite{benigni2019eigenvalue,fan2020spectra,pastur2020gauss} settings, 
and for both homogeneous (e.g., standard normal) \cite{pennington2017nonlinear,mei2019generalization} and mixture-type data \cite{liao2018spectrum,ali2022random}.
From a technical perspective, the most relevant paper is \cite{ali2022random} , in which the authors proposed a RMT-inspired NN compression scheme, albeit only in the single-hidden-layer setting. 
This paper extends the analysis in \cite{ali2022random} to multi-layer fully-connected DNNs, by focusing on both the CK and NTK matrices, and proposes a novel sparsification and quantization scheme for fully-connected DNNs (which is in spirit similar to, although formally different from, that in \cite{ali2022random}).

\subsection{Notations and organization of the paper}

We denote scalars by lowercase letters, vectors by bold lowercase, and matrices by bold uppercase. 
We denote the transpose operator by $(\cdot)^{\sf T}$, and use $\|\cdot\|$ to denote the Euclidean norm for vectors and the spectral/operator norm for matrices. 
For a random variable $z$, $\mathbb{E}[z]$ denotes the expectation of $z$. 
We use $\one_p$ and $\I_{p}$ to represent an all-ones vector of dimension $p$ and the identity matrix of size $p\times p$.

The remainder of this article is structured as follows.  
In Section~\ref{sec:preliminaries}, we present the fully-connected DNN model under study, together with our working assumptions.
Section~\ref{sec:main_results} contains our main technical results on the eigenspectra of the conjugate kernel $\K_{\CK}$ and NTK matrix $\K_{\NTK}$, along with an account of how they apply to the compression of fully-connected DNNs with the proposed ``lossless'' compression scheme.
Empirical evidence is provided in Section~\ref{sec:experiments} to demonstrate the significant computation and storage savings, with minimal performance degradation, that can be obtained using the proposed approach. 
Conclusion and future perspectives are placed in Section~\ref{sec:conclusion}.

\section{Preliminaries}
\label{sec:preliminaries}

Let $\x_1, \ldots, \x_n \in \RR^p$ be $n$ data vectors independently drawn from one of the $K$-class Gaussian mixtures $\mathcal C_1, \ldots, \mathcal C_K$, and let $\X = [\x_1, \ldots, \x_n] \in \RR^{p \times n}$, with class $\mathcal C_a$ having cardinality $n_a$; that is,
\begin{equation}\label{eq:def_GMM}
  \x_i \in \mathcal C_a \Leftrightarrow \x_i \sim \mathcal N(\bmu_a/\sqrt p, \C_a/p),
\end{equation}
for mean vector $\bmu_a \in \RR^p$ and covariance matrix $\C_a \in \RR^{p \times p}$ associated with class $\mathcal C_a$.

In the high-dimensional scenario where $n,p$ are both large and comparable, we position ourselves in the following non-trivial classification setting, so that the classification of the $K$-class mixture is neither trivially easy nor impossible; see also \cite{couillet2018classif} and \cite[Section~2]{blum2020foundations}.

\begin{Assumption}[High-dimensional asymptotics]\label{ass:high-dimen}
As $n \to \infty$, we have, for $a \in \{1,\ldots,K\}$ that 
(i) $p/n \to c \in (0,\infty)$ and $n_a/n \to c_a \in [0,1)$; 
(ii) $\| \bmu_a \| = O(1)$; 
(iii) for $\C^\circ \equiv \sum_{a=1}^K \frac{n_a}n \C_a$ 
and $\C_a^\circ \equiv \C_a - \C^\circ$, 
we have $\| \C_a\| = O(1)$, $\tr \C_a^\circ = O(\sqrt p)$ 
and $\tr (\C_a \C_b) = O(p)$ for $a,b \in \{1,\ldots,K\}$; 
and (iv) $\tau_0 \equiv \sqrt{\tr \C^\circ/p}$ converges in $(0,\infty)$.

\end{Assumption}

We consider using a fully-connected DNN model of depth $L$ for the classification of the above $K$-class Gaussian mixture. 
Such a network can be parameterized by a sequence of weight matrices $\W_1 \in \RR^{d_1 \times d_0}, \ldots, \W_L \in \RR^{d_L \times d_{L-1}}$ (with the convention $d_0 = p$), and nonlinear activation functions $\sigma_1, \ldots, \sigma_L$ that apply entry-wise, so that the network output $f(\x) \in \RR$ is given by:
\begin{equation}\label{eq:NN_model}
  f(\x) = \frac1{\sqrt {d_L}} \w^\T  \sigma_L \left( \frac1{\sqrt {d_{L-1}} } \W_L  \sigma_{L-1} \left( \ldots \frac1{ \sqrt{d_2}} \sigma_2 \left( \frac1{\sqrt{d_1} } \W_2 \sigma_1 (\W_1 \x ) \right) \right) \right),
\end{equation}
for an input data vector $\x \in \RR^p$ and output vector $\w \in \RR^{d_L}$. 
We denote $\bSigma_\ell \in \RR^{d_{\ell} \times n}$ the representations of the data matrix $\X \in \RR^{p \times n}$ at layer $\ell \in \{1, \ldots, L \}$ defined as
\begin{equation}\label{eq:Sigma_ell}
  \bSigma_\ell = \frac1{\sqrt {d_{\ell}}} \sigma_{\ell} \left( \frac1{\sqrt {d_{\ell-1}} } \W_{\ell}  \sigma_{\ell-1} \left( \ldots \frac1{ \sqrt{d_2}} \sigma_2 \left( \frac1{\sqrt{d_1} } \W_2 \sigma_1 (\W_1 \X ) \right) \right) \right).
\end{equation}
The normalization by $1/\sqrt{d_\ell}$ follows from the NTK literature and ensures the consistent asymptotic behavior of the network in the high-dimensional setting in Assumption~\ref{ass:high-dimen}~and~\ref{ass:W}; see also \cite{jacot2018neural,bietti2019inductive,fan2020spectra}.

The training and generalization performance of the NN model defined in \eqref{eq:NN_model} are closely related to two types of kernel matrices: the Conjugate Kernel (CK) matrix and Neural Tangent Kernel (NTK) matrix, 
defined respectively for $\ell \in \{1, \ldots,L\}$ as follows:
\begin{equation}\label{eq:def_K_CK_K_NTK}
  \K_{\CK,\ell} = \EE[ \bSigma_\ell^\T \bSigma_\ell] \in \RR^{n \times n}, 
\end{equation}
with expectation taken with respect to the random weights $\W_1, \ldots, \W_\ell$ and $\bSigma_\ell \in \RR^{d_{\ell} \times n}$ the data representation at the output of layer $\ell$ defined in \eqref{eq:Sigma_ell}. In particular, CK matrices are known to satisfy the following recursive relation \cite{jacot2018neural,bietti2019inductive}
\begin{equation}\label{eq:K_CK_relation}
  [\K_{\CK,\ell}]_{ij} = \EE_{u,v} [\sigma_\ell(u) \sigma_\ell(v)],~\text{with}~u,v \sim \NN \left(\zo, \begin{bmatrix} [\K_{\CK,\ell-1}]_{ii} & [\K_{\CK,\ell-1}]_{ij} \\ [\K_{\CK,\ell-1}]_{ij} & [\K_{\CK,\ell-1}]_{jj} \end{bmatrix}\right),
\end{equation}
while for the NTK matrix $\K_{\NTK,\ell} \in \RR^{n \times n}$ of layer $\ell$, we have:
\begin{equation}\label{eq:K_NTK_relation}
  \K_{\NTK,\ell} = \K_{\CK,\ell} + \K_{\NTK,\ell-1} \circ \K'_{\CK,\ell}, \quad \K_{\NTK,0} = \K_{\CK,0} = \X^\T \X,
\end{equation}
where `$\A \circ \B$' denotes the Hadamard product between two matrices $\A,\B$ of the same size, and $\K'_{\CK,\ell}$ denotes the CK matrix with nonlinear function $\sigma_\ell'$ instead of $\sigma_\ell$ as for $\K_{\CK,\ell}$ defined in \eqref{eq:K_CK_relation}; that is, $[\K'_{\CK,\ell}]_{ij} = \EE_{u,v} [\sigma'_\ell(u) \sigma'_\ell(v)]$. 
Note in particular that for a given DNN model, the corresponding CK and NTK matrices depend  \emph{only} on the network structure (i.e., the number of layers and the activation function in each layer), the \emph{distribution} of the random (initializations of the) weights to be integrated over (e.g., in the expectation in equation \eqref{eq:def_K_CK_K_NTK}), and the input data.

It has been shown in a series of previous efforts \cite{jacot2018neural,fan2020spectra,huang2020dynamics} that for very (and sometimes unrealistically) wide DNNs trained using gradient descent with a small step size, the time evolution of the residual errors and in-sample predictions of a given DNN are \emph{explicit} functionals of the corresponding $\K_{\NTK}$ involving its eigenvalues and eigenvectors. 
In this respect, the NTK theory provides, via the eigenspectral behavior of $\K_{\NTK}$, precise characterizations of the convergence and generalization properties of DNNs \cite{jacot2018neural,bietti2019inductive}, by focusing on the impact of the network structure, the input data, and the weight initialization schemes.

In this paper, we focus on fully-connected nets under the following assumption regarding the weights.

\begin{Assumption}[On weight initializations]\label{ass:W}
The random weights $\W_1 \in \RR^{d_1 \times p}, \ldots, \W_L \in \RR^{d_L \times d_{L-1}}$ are independent and have i.i.d.\@ entries of zero mean, unit variance, and finite fourth-order moment.
\end{Assumption}
Assumption~\ref{ass:W}, together with the $1/\sqrt{d_\ell}$ normalization, is compatible with fully-connected DNNs in \eqref{eq:NN_model}, which are admittedly less interesting, from a practical perspective, compared to their convolutional counterparts. 
The proposed framework is envisioned to be extendable to a convolutional \cite{arora2019exact,bietti2019inductive} and more involved setting (e.g., graph NNs \cite{du2019graph}) by considering (e.g., Toeplitz-type) structures on $\W$s.

Unlike most existing NTK literature \cite{jacot2018neural,bietti2019inductive,fan2019spectral}, we do not assume the Gaussianity of the entries of $\W_\ell$s, but only that they are i.i.d.\@ and ``normalized'' to have zero mean and unit variance. As it turns out, this assumption together with a (Lyapunov-type) central limit theorem argument, is sufficient to establish most existing results on the convergence and generalization of DNNs; see for example \cite{lee2018deep}.

We also need the following assumption on the activation functions in each layer.

\begin{Assumption}[On activation functions]\label{ass:activation}
The activations $\sigma_1, \ldots, \sigma_L$ are at least four-times differentiable with respect to standard normal measure, in the sense that $\max_{k \in \{0,1,2,3,4\}} \{ |\EE[\sigma_\ell^{(k)}(\xi)]| \} < C$ for some universal constant $C > 0$, $\xi \sim \NN(0,1)$, and $\ell \in \{1,\ldots, L\}$.
\end{Assumption}
Using the Gaussian integration by parts formula, one has $\EE[\sigma'(\xi)] = \EE[\xi \sigma(\xi)]$ for $\xi \sim \NN(0,1)$, as long as the right-hand side expectation exists.
As a result, for non-differentiable functions, it suffices to have $|\sigma_\ell|$ upper-bounded by some (high-degree) polynomial function for Assumption~\ref{ass:activation} to hold.

With these preliminaries, we are now ready to present our main technical results on the eigenspectral behavior of the CK and NTK matrices for a large family of fully-connected DNN models.

\section{Main results}
\label{sec:main_results}

For a fully-connected DNN defined in \eqref{eq:NN_model}, our first result is on the eigenspectral behavior of the corresponding CK matrices $\K_{\CK}$ defined in \eqref{eq:def_K_CK_K_NTK}. More specifically, we show for Gaussian mixture data in \eqref{eq:def_GMM} and in the high-dimensional setting of Assumption~\ref{ass:high-dimen}, that the $\K_{\CK,\ell}$ of layer $\ell \in \{1, \ldots, L \}$ is asymptotically \emph{spectrally equivalent} to another random matrix $\tilde \K_{\CK,\ell}$, in the sense that their spectral norm difference $\| \K_{\CK,\ell} - \tilde \K_{\CK,\ell} \|$ vanishes as $n,p \to \infty$. 
This result is stated as follows, the proof of which is based on an induction on $\ell$ and is given in Section~\ref{subsec:proof_theo_CK} of the appendix.

\begin{Theorem}[Asymptotic spectral equivalents for CK matrices]\label{theo:CK}
Let Assumptions~\ref{ass:high-dimen}--\ref{ass:activation} hold, and let $\tau_0, \tau_1, \ldots, \tau_L \geq 0$ be a sequence of non-negative numbers satisfying the following recursion:
\begin{equation}\label{eq:def_tau}
  \tau_\ell = \sqrt{  \EE[ \sigma_\ell^2 (\tau_{\ell-1} \xi) ] }, \quad \xi \sim \NN(0,1), \quad \ell \in \{ 1, \ldots, L \}.
\end{equation}
Further assume that the activation functions $\sigma_\ell(\cdot)$s are ``centered,'' such that $\EE[\sigma_\ell(\tau_{\ell-1} \xi)] = 0$. Then, for the CK matrix $\K_{\CK,\ell}$ of layer $\ell \in \{0, 1, \ldots, L\}$ defined in \eqref{eq:def_K_CK_K_NTK}, as $n,p \to \infty$, one has that
\begin{equation}\label{eq:def_tilde_K_CK}
\| \K_{\CK,\ell} (\X) - \tilde \K_{\CK,\ell} (\X) \| \to 0, \quad \tilde \K_{\CK,\ell} (\X) \equiv \alpha_{\ell,1} \X^\T \X + \V \A_\ell \V^\T + \alpha_{\ell,0} \I_n,
\end{equation}
almost surely, with $ \alpha_{\ell,0} = \tau_\ell^2 - \tau_0^2 \alpha_{\ell,1} \geq 0$ and
\begin{equation}\label{eq:def_V_A}
\V = [\J/\sqrt p,~\bpsi] \in \RR^{n \times (K+1)}, \quad \A_\ell = \begin{bmatrix} \alpha_{\ell,2} \bt \bt^\T + \alpha_{\ell,3} \bT & \alpha_{\ell,2}\bt \\ \alpha_{\ell,2}\bt^\T & \alpha_{\ell,2} \end{bmatrix} \in \RR^{(K+1)\times(K+1)},
\end{equation}

for class label vectors $\J = [\mathbf{j}_1, \ldots, \mathbf{j}_K] \in \RR^{n \times K}$ with $[\mathbf{j}_a] = \delta_{\x_i \in \mathcal C_a}$, second-order data fluctuation vector $\bpsi = \{ \| \x_i - \EE[\x_i] \|^2 -  \EE[ \|\x_i - \EE[\x_i] \|^2 ] \}_{i=1}^n \in \RR^n$, second-order discriminative statistics $\bt = \{ \tr \C_a^\circ/\sqrt p \}_{a=1}^K \in \RR^K$ and $\bT = \{ \tr \C_a \C_b/p \}_{a,b=1}^K \in \RR^{K \times K}$ of the Gaussian mixture in \eqref{eq:def_GMM}, as well as non-negative scalars $\alpha_{\ell,1}, \alpha_{\ell,2}, \alpha_{\ell,3} \geq 0$ satisfying the following recursions:
\begin{align}
  \alpha_{\ell,1} &= \EE [ \sigma_\ell'( \tau_{\ell-1} \xi)]^2 \alpha_{\ell-1,1}, \quad \alpha_{\ell,2} = \EE [ \sigma_\ell'( \tau_{\ell-1} \xi)]^2 \alpha_{\ell-1,2} + \frac14 \EE [ \sigma_\ell''( \tau_{\ell-1} \xi)]^2 \alpha_{\ell-1,4}^2,\label{eq:def_ds1}
  \\ 
  \alpha_{\ell,3} &= \EE [ \sigma_\ell'( \tau_{\ell-1} \xi)]^2 \alpha_{\ell-1,3} + \frac12 \EE [ \sigma_\ell''( \tau_{\ell-1} \xi)]^2 \alpha_{\ell-1,1}^2,\label{eq:def_ds2}
\end{align}
with $\alpha_{\ell,4} = \alpha_{\ell-1,4} \EE \left[ (\sigma_\ell'(\tau_{\ell - 1} \xi) )^2 + \sigma_\ell(\tau_{\ell - 1} \xi) \sigma_\ell''(\tau_{\ell - 1} \xi) \right]$ for $\xi \sim \NN(0,1)$.
\end{Theorem}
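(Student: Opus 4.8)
The plan is to argue by induction on the layer index $\ell$, and to \emph{strengthen} the induction hypothesis: besides the stated $\| \K_{\CK,\ell} - \tilde\K_{\CK,\ell} \| \to 0$ almost surely, I will carry along (i) the entrywise diagonal estimate $[\K_{\CK,\ell}]_{ii} = \tau_\ell^2 + \alpha_{\ell,4}\,\phi_i + o(p^{-1/2})$, uniformly in $i$, where $\boldsymbol{\phi} \equiv \bpsi + \J\bt/\sqrt p$ is the centered squared-norm vector (so that $\|\x_i\|^2 = \tau_0^2 + \phi_i + o(p^{-1/2})$ under Assumption~\ref{ass:high-dimen}), and (ii) that the off-diagonal entries of $\K_{\CK,\ell}$ match those of $\tilde\K_{\CK,\ell}$ up to an entrywise remainder of size $o(p^{-1})$. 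This strengthening is essential rather than cosmetic: in the regime of Assumption~\ref{ass:high-dimen}, a diagonal perturbation of magnitude $O(p^{-1/2})$ is negligible in operator norm, yet its outer square $(\phi_i\phi_j)_{i,j}$ and the Hadamard square of an $O(p^{-1/2})$-entry off-diagonal block are \emph{both} of order one, so operator-norm closeness alone cannot be propagated. The base case $\ell=0$ is immediate with $\alpha_{0,1}=\alpha_{0,4}=1$, $\alpha_{0,2}=\alpha_{0,3}=0$ and $\A_0$ the zero matrix, since $\K_{\CK,0}=\X^\T\X$, $\|\x_i\|^2=\tau_0^2+\phi_i+o(p^{-1/2})$ uniformly, and $\x_i^\T\x_j=O(p^{-1/2})$ for $i\ne j$.

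For the inductive step I would start from the recursion \eqref{eq:K_CK_relation}, i.e.\ $\K_{\CK,\ell}=g_\ell(\K_{\CK,\ell-1})$, where $[g_\ell(\K)]_{ij}=\EE_{u,v}[\sigma_\ell(u)\sigma_\ell(v)]$ with $(u,v)$ a centered bivariate Gaussian whose covariance is the $\{i,j\}$ principal submatrix of $\K$, as in \eqref{eq:K_CK_relation}; for the non-Gaussian weights permitted by Assumption~\ref{ass:W}, this recursion holds only asymptotically, so one first verifies, via a Lyapunov central limit theorem for the rows of $\W_\ell\bSigma_{\ell-1}$ together with the uniform integrability granted by Assumptions~\ref{ass:W}--\ref{ass:activation} (cf.~\cite{lee2018deep}), that replacing $\K_{\CK,\ell}$ by $g_\ell(\K_{\CK,\ell-1})$ costs only a vanishing operator-norm error. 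I would then Taylor-expand $[g_\ell(\K_{\CK,\ell-1})]_{ij}$ entrywise about the reference point $\K_{ii}=\K_{jj}=\tau_{\ell-1}^2$, $\rho_{ij}\equiv[\K_{\CK,\ell-1}]_{ij}=0$, writing $\delta_{ii}\equiv[\K_{\CK,\ell-1}]_{ii}-\tau_{\ell-1}^2$; by the inductive hypothesis $\delta_{ii}=O(p^{-1/2})$ and $\rho_{ij}=O(p^{-1/2})$ for $i\ne j$, so Assumption~\ref{ass:activation} makes the expansion to third order legitimate with controlled remainder. Price's theorem and Gaussian integration by parts give the needed derivatives at the reference point: the value is $\tau_\ell^2\delta_{ij}$ (the would-be rank-one $\one_n\one_n^\T$ term vanishes \emph{exactly} thanks to the centering $\EE[\sigma_\ell(\tau_{\ell-1}\xi)]=0$), while $\partial_\rho[g_\ell]_{ij}=\EE[\sigma_\ell'(\tau_{\ell-1}\xi)]^2$, $\partial_\rho^2[g_\ell]_{ij}=\EE[\sigma_\ell''(\tau_{\ell-1}\xi)]^2$, $\tfrac{d}{dt}\EE[\sigma_\ell(\sqrt t\,\xi)]\big|_{t=\tau_{\ell-1}^2}=\tfrac12\EE[\sigma_\ell''(\tau_{\ell-1}\xi)]$, and $\tfrac{d}{dt}\EE[\sigma_\ell(\sqrt t\,\xi)^2]\big|_{t=\tau_{\ell-1}^2}=\EE[(\sigma_\ell'(\tau_{\ell-1}\xi))^2+\sigma_\ell(\tau_{\ell-1}\xi)\sigma_\ell''(\tau_{\ell-1}\xi)]$ — the last of these being precisely the multiplier in the recursion for $\alpha_{\ell,4}$.

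Collecting only the terms of non-vanishing operator norm, the expansion collapses to $\K_{\CK,\ell}=\tau_\ell^2\I_n+\EE[\sigma_\ell'(\tau_{\ell-1}\xi)]^2\,\K_{\CK,\ell-1}^{\mathrm{off}}+\tfrac14\EE[\sigma_\ell''(\tau_{\ell-1}\xi)]^2(\delta_{ii}\delta_{jj})_{i\ne j}+\tfrac12\EE[\sigma_\ell''(\tau_{\ell-1}\xi)]^2(\rho_{ij}^2)_{i\ne j}+o(1)$, where $\M^{\mathrm{off}}\equiv\M-\diag(\M)$, the matrices $(\delta_{ii}\delta_{jj})_{i\ne j}$ and $(\rho_{ij}^2)_{i\ne j}$ have the indicated off-diagonal entries and zero diagonal, and the diagonal perturbation $\EE[(\sigma_\ell')^2+\sigma_\ell\sigma_\ell''](\tau_{\ell-1}\xi)\,\diag(\delta_{ii})$ is absorbed into the $o(1)$. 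Substituting the strengthened layer-$(\ell-1)$ hypothesis — $\delta_{ii}=\alpha_{\ell-1,4}\phi_i+o(p^{-1/2})$ and $\K_{\CK,\ell-1}^{\mathrm{off}}=\big(\alpha_{\ell-1,1}\X^\T\X+\alpha_{\ell-1,2}\boldsymbol{\phi}\boldsymbol{\phi}^\T+\tfrac{\alpha_{\ell-1,3}}{p}\J\bT\J^\T\big)^{\mathrm{off}}+o(p^{-1})$ entrywise — together with the standard deterministic-equivalent fact that, for $i\ne j$, the Hadamard square $(\rho_{ij}^2)_{i\ne j}$ concentrates in operator norm around $\tfrac{\alpha_{\ell-1,1}^2}{p}(\J\bT\J^\T)^{\mathrm{off}}$ (the GMM-Gram Hadamard-square estimate of \cite{liao2018spectrum,ali2022random}; cross terms with the $\boldsymbol{\phi}$-part vanish by Gaussian parity and the class means contribute only $O(p^{-1})$), and using the algebraic identity $\alpha\,\boldsymbol{\phi}\boldsymbol{\phi}^\T+\tfrac{\beta}{p}\J\bT\J^\T=\V\begin{bmatrix}\alpha\bt\bt^\T+\beta\bT & \alpha\bt\\ \alpha\bt^\T & \alpha\end{bmatrix}\V^\T$ for any scalars $\alpha,\beta$ (obtained by expanding $\V=[\J/\sqrt p,\bpsi]$ and $\boldsymbol{\phi}=\J\bt/\sqrt p+\bpsi$), one reads off the recursions \eqref{eq:def_tau}, \eqref{eq:def_ds1}, \eqref{eq:def_ds2} and the identity for $\alpha_{\ell,4}$, and regroups the surviving matrix into $\tilde\K_{\CK,\ell}=\alpha_{\ell,1}\X^\T\X+\V\A_\ell\V^\T+\alpha_{\ell,0}\I_n$ with exactly the $\V,\A_\ell$ of \eqref{eq:def_V_A}; re-deriving $[\K_{\CK,\ell}]_{ii}=\tau_\ell^2+\alpha_{\ell,4}\phi_i+o(p^{-1/2})$ from the $t$-derivative of $\EE[\sigma_\ell(\sqrt t\,\xi)^2]$ closes the induction. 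Non-negativity of $\alpha_{\ell,1},\alpha_{\ell,2},\alpha_{\ell,3}$ is inductive (each increment is a square times a non-negative factor, and $\alpha_{\ell-1,4}$ enters only through its square), and $\alpha_{\ell,0}=\tau_\ell^2-\tau_0^2\alpha_{\ell,1}\ge0$ follows from the Hermite inequality $\EE[\sigma_\ell(\tau_{\ell-1}\xi)^2]\ge\tau_{\ell-1}^2\,\EE[\sigma_\ell'(\tau_{\ell-1}\xi)]^2$ combined with the recursions for $\tau_\ell$ and $\alpha_{\ell,1}$.

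The main difficulty is not the Hermite/Price algebra but the \emph{accounting of scales}: showing that the ``operator-norm negligible but outer-square relevant'' dichotomy is compatible with a self-contained induction is exactly what forces the strengthened hypothesis (diagonal at resolution $o(p^{-1/2})$, off-diagonal entrywise at resolution $o(p^{-1})$, so that Hadamard squares of errors remain $o(1)$) and the introduction of the fourth parameter $\alpha_{\ell,4}$, which never appears in $\tilde\K_{\CK,\ell}$ itself yet dictates how the norm-fluctuation direction feeds the $\bpsi\bpsi^\T$ and $\bt\bt^\T$ blocks at the next layer. Secondary technical points, routine in this literature but needing care uniformly over $n,p$: bounding the operator norm of the third-order Taylor remainder and of the mixed $\partial_{\K_{ii}}\partial_{\rho_{ij}}$ terms (their entries are $O(p^{-3/2})$, so the crude bound $\|\M\|\le n\max_{ij}|\M_{ij}|$ suffices), proving the Hadamard-square concentration by moment/trace computations under Assumption~\ref{ass:high-dimen}, and, in the non-Gaussian case, making the Lyapunov-CLT reduction to \eqref{eq:K_CK_relation} uniform over all $n^2$ entries.
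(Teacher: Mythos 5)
Your overall strategy — induction on $\ell$, a strengthened entrywise hypothesis on the diagonal and off-diagonal, Taylor expansion of the bivariate Gaussian functional around $(\tau_{\ell-1}^2,\tau_{\ell-1}^2,0)$, and reassembly into $\alpha_{\ell,1}\X^\T\X+\V\A_\ell\V^\T+\alpha_{\ell,0}\I_n$ via the identity $\alpha\boldsymbol{\phi}\boldsymbol{\phi}^\T+\tfrac{\beta}{p}\J\bT\J^\T=\V[\cdots]\V^\T$ — is essentially the paper's proof (the paper uses a Gram--Schmidt substitution $u=\sqrt{\K_{ii}}\xi_i$, $v=\cdots$ and expands $\sigma_\ell(u)\sigma_\ell(v)$ directly rather than invoking Price's theorem, but this is the same computation), and your identification of the recursions for $\alpha_{\ell,1},\alpha_{\ell,2},\alpha_{\ell,3},\alpha_{\ell,4}$ and the non-negativity of $\alpha_{\ell,0}$ via the Hermite inequality are correct.

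There is, however, one genuine gap, and it sits exactly at the point you single out as ``the main difficulty'': the mixed second-order Taylor terms $\partial_{\K_{ii}}\partial_{\rho_{ij}}g_\ell\cdot\delta_{ii}\rho_{ij}$ are \emph{not} $O(p^{-3/2})$ entrywise. Since $\delta_{ii}=\alpha_{\ell-1,4}\phi_i+o(p^{-1/2})=\Theta(p^{-1/2})$ and $\rho_{ij}=\alpha_{\ell-1,1}\x_i^\T\x_j+\cdots=\Theta(p^{-1/2})$, and the coefficient $\tfrac12\EE[\sigma_\ell'''(\tau_{\ell-1}\xi)]\,\EE[\sigma_\ell'(\tau_{\ell-1}\xi)]$ is generically nonzero, these terms are exactly of order $p^{-1}$ per entry; the crude bound $\|\M\|\le n\max_{ij}|\M_{ij}|$ then gives $O(1)$, not $o(1)$, so they cannot be ``absorbed into the $o(1)$'' as you claim. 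For the same reason your strengthened induction hypothesis (ii) — off-diagonal entrywise accuracy $o(p^{-1})$ — is false as stated and cannot be re-established at layer $\ell$: the remainder genuinely contains terms of the form $\tfrac1p\z_i^\T\z_j\,(\gamma_1(t_i+\psi_i)+\gamma_2(t_j+\psi_j))=\Theta(p^{-1})$. The repair is structural rather than entrywise: these terms assemble in matrix form into $\D\,(\tfrac1p\Z^\T\Z)+(\tfrac1p\Z^\T\Z)\,\D$ with $\D$ diagonal of entries $O(p^{-1/2})$, hence have operator norm $O(p^{-1/2})\to0$ despite their entries not being $o(p^{-1})$; and one must check they reproduce this form under the recursion (first-order propagation multiplies them by $\EE[\sigma_\ell'(\tau_{\ell-1}\xi)]^2$, and their products with other $O(p^{-1/2})$ quantities are $O(p^{-3/2})$). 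This is precisely the role of the paper's $S_{ij}$ bookkeeping (Remark~\ref{rem:S} in the appendix, relying on \cite{couillet2016kernel}): the correct induction hypothesis is ``main terms $+$ a structured $S_{ij}$-class error of vanishing spectral norm $+$ $O(p^{-3/2})$ entrywise,'' not a pure $o(p^{-1})$ entrywise bound. Without this third error class your induction does not close.
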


A few remarks on Theorem~\ref{theo:CK} are in order. The first remark is on the assumption $\EE[\sigma_\ell(\tau_{\ell-1} \xi)] = 0$.

\begin{Remark}[On activation centering]\label{rem:ass_alpha_0}\normalfont
The condition {$\EE[\sigma_\ell(\tau_{\ell-1} \xi)] = 0$}, seemingly restrictive at first sight, in fact only subtracts an \emph{identical constant} from all entries of the data representation $\bSigma_\ell$ at layer $\ell$, and should therefore \emph{not} restrict the expressive power of the network, nor its performance on downstream ML tasks.
For a given DNN model of interest, it suffices to ``center'' the output of each layer by subtracting a constant to satisfy Assumption~\ref{ass:activation}, and to further apply our Theorem~\ref{theo:CK}.
\end{Remark}

Theorem~\ref{theo:CK} unveils the (possibly surprising) fact that, for the high-dimensional and non-trivial Gaussian mixture classification in \eqref{eq:def_GMM}, the spectral behavior of $\tilde \K_{\CK,\ell}$, and thus that of the CK matrix $\K_{\CK,\ell}$, is (i) \emph{independent} of the distribution of the (entries of the) weights $\W_\ell$ when they are ``normalized'' to have zero mean and unit variance, as demanded in Assumption~\ref{ass:W}, and (ii) depends on the activation function $\sigma_\ell$ \emph{only} via four\footnote{It is worth noting that the parameter $\tau_\ell$ appears in the CK eigenspectrum \emph{only} by shifting all its eigenvalues  (by $\tau_\ell^2$), thereby acting as an (implicit) ridge-type regularization in DNN models, see also \cite{jacot2020implicit,derezinski2020exact,liu2021kernel}.\label{foot:regularization}} scalar parameters $\alpha_{\ell,1}, \alpha_{\ell,2}, \alpha_{\ell,3}$ and $\tau_\ell$: such universal results have been previously observed in random matrix theory and high-dimensional statistics literature (see for example \cite{couillet_liao_2022,bai2010spectral,vershynin2018high}) and indicate some kind of universality of DNN models.

On closer inspection of Theorem~\ref{theo:CK}, we further observe that:
\begin{itemize}
  \item[(i)] for a given DNN, Theorem~\ref{theo:CK} characterizes, via the form of $\tilde \K_{\CK,\ell}$ in \eqref{eq:def_tilde_K_CK} and the recursions in \eqref{eq:def_ds1}~and~\eqref{eq:def_ds2}, how the linear (via $\alpha_{\ell,1}$, which is multiplied by $\X^\T \X$) and nonlinear (via $\alpha_{\ell,2}$ and $\alpha_{\ell,3}$ in $\A_\ell$, which respectively weight the second-order data statistics $\bt$ and $\bT$) data features ``propagate'' in a DNN, in a layer-by-layer fashion, as $\ell$ increases, as \emph{quantitatively} measured by the corresponding $\alpha_\ell$s; and
  \item[(ii)] for two DNNs with the same number of layers, but possibly different weights and activations, given the same input data $\X$ (so that the two nets have the same $\K_{\CK,0}$), if they have asymptotically equivalent CK matrices $\K_{\CK,\ell-1}$ at layer $\ell-1$ with the \emph{same} $\alpha_{\ell-1,1}$, $\alpha_{\ell-1,2}$ and $\alpha_{\ell-1,3}$, then its follows from Equation~\eqref{eq:def_ds1}~and~\eqref{eq:def_ds2} that having the \emph{same} $\EE[\sigma_\ell'(\tau_{\ell-1}\xi)]^2$, $\EE[\sigma_\ell''(\tau_{\ell-1}\xi)]^2$, and $\EE[(\sigma_\ell^2(\tau_{\ell-1}\xi))'']$ (which \emph{only} depends on the activation $\sigma_\ell$ of layer $\ell$ and $\tau_{\ell-1}$) suffices for the two nets to have asymptotically equivalent $\K_{\CK,\ell}$ at layer $\ell$.
\end{itemize}

It follows from the above item~(ii) that for a given DNN of depth $L$, it is possible to design a novel DNN model that ``matches'' the original one -- in the sense that both models will have asymptotically equivalent CK matrices \emph{at each layer}, by using the following layer-by-layer matching strategy: Starting from the same $\K_{\CK,0} = \X^\T \X$, one chooses the first-layer weights $\W_1$ of the novel DNN according to Assumption~\ref{ass:W}, and then select the first-layer activation $\sigma_1$ in such a way that the novel net has the same parameters $\alpha_{1,1}$, $\alpha_{1,2}$ and $\alpha_{1,3}$ as the original one, so that the first-layer CK matrices $\K_{\CK,1}$ of the two nets are \emph{spectrally} matched as per Theorem~\ref{theo:CK}; one then proceeds similarly to match the second, the third, etc., and eventually the $L$th layer of the two nets.
As we shall see below, this layer-by-layer matching strategy facilitates the ``lossless'' compression of a given DNN.

Using the relation in \eqref{eq:K_NTK_relation}, a similar result (as in Theorem~\ref{theo:CK} for CKs) can be established for NTK matrices, as shown in the following theorem. The proof is also based on an induction on $\ell$ and is provided in Section~\ref{subsec:proof_theo_NTK} of the appendix.

\begin{Theorem}[Asymptotic spectral equivalent for NTK matrices]\label{theo:NTK}
Let Assumptions~\ref{ass:high-dimen}--\ref{ass:activation} hold, under the same settings and notations of Theorem~\ref{theo:CK}, and let $\dot{\tau}_0 = 0$, $\dot{\tau}_1, \ldots, \dot{\tau}_L \geq 0$ be a sequence of non-negative numbers such that
\begin{equation}\label{eq:def_tau'}
  \dot{\tau}_\ell = \sqrt{  \EE \left[ \left(\sigma_\ell' ( \tau_{\ell-1} \xi) \right)^2 \right] }, \quad \xi \sim \NN(0,1), \quad \ell \in \{ 1, \ldots, L \},
\end{equation}
which is similar to the $\tau_\ell$s defined in \eqref{eq:def_tau}, but on the derivative $\sigma'_{\ell}$ instead of $\sigma_{\ell}$,
and let $\kappa_\ell^2 = \tau^2_\ell + \dot{\tau}^2_\ell$, with $\kappa_0 = \tau_0$. 
Then, for the NTK matrix $\K_{\NTK,\ell}$ of layer $\ell$ defined in \eqref{eq:K_NTK_relation}, as $n,p \to \infty$, one has:
\begin{equation}\label{eq:def_tilde_K_NTK}
  \| \K_{\NTK,\ell}(\X) - \tilde \K_{\NTK,\ell}(\X) \| \to 0, \quad \tilde \K_{\NTK,\ell} (\X) \equiv \beta_{\ell,1} \X^\T \X + \V \B_\ell \V^\T + \beta_{\ell,0} \I_n,
\end{equation}
almost surely, with $\beta_{\ell,0} \equiv \kappa_\ell^2 - \tau_0^2 \beta_{\ell,1} \geq 0$, $\V \in \RR^{n \times (K+1)}, \bt \in \RR^K, \bT \in \RR^{K \times K}$ as defined in Theorem~\ref{theo:CK}, and
\begin{equation}
  \B_\ell \equiv \begin{bmatrix} \beta_{\ell,2} \bt \bt^\T + \beta_{\ell,3} \bT & \beta_{\ell,2}\bt \\ \beta_{\ell,2}\bt^\T & \beta_{\ell,2} \end{bmatrix} \in \RR^{(K+1) \times (K+1)},
\end{equation}
and non-negative scalars $\beta_{\ell,1}, \beta_{\ell,2}, \beta_{\ell,3} \geq 0$ satisfying\footnote{\label{foot:errata2} Note that the system of equations here is different from that of the \href{https://papers.nips.cc/paper_files/paper/2022/hash/185087ea328b4f03ea8fd0c8aa96f747-Abstract-Conference.html}{online version} of this paper in the NeurIPS 2022 proceeding. This is due to an important (although very basic) algebraic mistake in the proof of Theorem~\ref{theo:NTK} in the definition of $\K'_{\CK,\ell}(\X)$. }
    \begin{align}
        \beta_{\ell,1} & =\alpha_{\ell,1} + \EE\left[ \sigma_\ell'( \tau_{\ell-1} \xi)\right]^2 \beta_{\ell - 1,1}, \quad \beta_{\ell,2} = \alpha_{\ell,2} + \EE\left[ \sigma_\ell'( \tau_{\ell-1} \xi)\right]^2 \beta_{\ell - 1,2}, \\
        \beta_{\ell,3} & = \alpha_{\ell,3}  + \EE\left[ \sigma_\ell'( \tau_{\ell-1} \xi)\right]^2 \beta_{\ell - 1,3} + \EE\left[\sigma_\ell''( \tau_{\ell-1} \xi)\right]^2 \alpha_{\ell-1, 1} \beta_{\ell - 1,1}.
    \end{align}
with $\alpha_{\ell-1,1 }$ as defined in Theorem~\ref{theo:CK} and
\begin{equation}\label{eq:beta_0s}
    \beta_{0,1} = 1, \quad \beta_{0,2} = \beta_{0,3} = 0.
  \end{equation}
\end{Theorem}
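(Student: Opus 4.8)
The plan is to prove the statement by induction on the layer index $\ell$, propagating the representation ``$\beta_{\ell,1}\X^\T\X+\V\B_\ell\V^\T+\beta_{\ell,0}\I_n$, up to a vanishing operator-norm error'' through the recursion \eqref{eq:K_NTK_relation}. The base case $\ell=0$ is immediate: with $\beta_{0,1}=1$, $\B_0=\zo$ (since $\beta_{0,2}=\beta_{0,3}=0$) and $\beta_{0,0}=\kappa_0^2-\tau_0^2=0$, one has $\tilde\K_{\NTK,0}=\X^\T\X=\K_{\NTK,0}$. For the inductive step I would assemble three ingredients and feed them into $\K_{\NTK,\ell}=\K_{\CK,\ell}+\K_{\NTK,\ell-1}\circ\K'_{\CK,\ell}$: (i) Theorem~\ref{theo:CK} for $\K_{\CK,\ell}$, which in passing also supplies the $O(p^{-1/2})$ concentration of the diagonal $[\K_{\CK,\ell-1}]_{ii}$ around $\tau_{\ell-1}^2$; (ii) the induction hypothesis $\|\K_{\NTK,\ell-1}-\tilde\K_{\NTK,\ell-1}\|\to 0$; and (iii) a companion asymptotic equivalent for the ``derivative CK'' matrix $\K'_{\CK,\ell}$. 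The bookkeeping is organised around the class of matrices of the form $a\,\X^\T\X+\V\bLambda\V^\T+d\,\I_n+\mathbf E$ with $\bLambda\in\RR^{(K+1)\times(K+1)}$ bounded and $\|\mathbf E\|\to0$; one checks that this class is stable under the operations that appear — sums, Hadamard product with $\one_n\one_n^\T$ (a scaling), Hadamard product with a diagonal matrix (returns a diagonal matrix), and entrywise products of the $\X^\T\X$- and $\V\bLambda\V^\T$-parts — and then tracks the scalars.

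For ingredient (iii), since $\sigma'_\ell$ is in general \emph{not} centered, $\K'_{\CK,\ell}$ does not follow verbatim from Theorem~\ref{theo:CK}, so I would rerun that machinery on $\sigma'_\ell$ while isolating the non-centered piece. Using Mehler's formula,
\begin{equation*}
  [\K'_{\CK,\ell}]_{ij}=\sum_{k\ge0}\frac{[\K_{\CK,\ell-1}]_{ij}^{\,k}}{k!}\,\EE\big[\sigma_\ell^{(k+1)}(\sqrt{[\K_{\CK,\ell-1}]_{ii}}\,\xi)\big]\,\EE\big[\sigma_\ell^{(k+1)}(\sqrt{[\K_{\CK,\ell-1}]_{jj}}\,\xi)\big],
\end{equation*}
and Taylor-expanding each factor around variance $\tau_{\ell-1}^2$ (licit by the diagonal concentration from Theorem~\ref{theo:CK}), the $k=0$ term gives $\EE[\sigma_\ell'(\tau_{\ell-1}\xi)]^2\,\one_n\one_n^\T$ plus rank-$\le K{+}1$ corrections with $O(p^{-1/2})$ columns; the $k=1$ term gives $\EE[\sigma_\ell''(\tau_{\ell-1}\xi)]^2\,[\K_{\CK,\ell-1}]_{ij}$ off the diagonal, hence by Theorem~\ref{theo:CK} the off-diagonal of $\EE[\sigma_\ell''(\tau_{\ell-1}\xi)]^2(\alpha_{\ell-1,1}\X^\T\X+\V\A_{\ell-1}\V^\T)$; the diagonal is $\EE[\sigma_\ell'(\sqrt{[\K_{\CK,\ell-1}]_{ii}}\,\xi)^2]\to\dot{\tau}_\ell^2$; and the $k\ge2$ terms are $O(p^{-1})$ off-diagonal and contribute only $o(1)$ in operator norm to the final result. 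This yields $\K'_{\CK,\ell}=\EE[\sigma_\ell'(\tau_{\ell-1}\xi)]^2\one_n\one_n^\T+\M_\ell$, where $\M_\ell$ has diagonal $\to(\dot{\tau}_\ell^2-\EE[\sigma_\ell'(\tau_{\ell-1}\xi)]^2)\I_n$ and off-diagonal dominated by $\EE[\sigma_\ell''(\tau_{\ell-1}\xi)]^2(\alpha_{\ell-1,1}\X^\T\X+\V\A_{\ell-1}\V^\T)$.

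Inserting this into the recursion, $\K_{\NTK,\ell}=\K_{\CK,\ell}+\EE[\sigma_\ell'(\tau_{\ell-1}\xi)]^2\K_{\NTK,\ell-1}+\K_{\NTK,\ell-1}\circ\M_\ell$: the first term is, up to $o(1)$, $\alpha_{\ell,1}\X^\T\X+\V\A_\ell\V^\T+\alpha_{\ell,0}\I_n$ by Theorem~\ref{theo:CK}, the second is $\EE[\sigma_\ell'(\tau_{\ell-1}\xi)]^2\tilde\K_{\NTK,\ell-1}$ up to $o(1)$ by the induction hypothesis. For the third, the diagonal of $\M_\ell$ Hadamard-multiplied with $\K_{\NTK,\ell-1}$ returns $(\dot{\tau}_\ell^2-\EE[\sigma_\ell'(\tau_{\ell-1}\xi)]^2)\diag(\K_{\NTK,\ell-1})$, which concentrates to $(\dot{\tau}_\ell^2-\EE[\sigma_\ell'(\tau_{\ell-1}\xi)]^2)\kappa_{\ell-1}^2\I_n$ once one verifies $[\K_{\NTK,\ell-1}]_{ii}\to\kappa_{\ell-1}^2$ uniformly (immediate from the inductive form of $\tilde\K_{\NTK,\ell-1}$); and the off-diagonal of $\M_\ell$ Hadamard-multiplied with $\K_{\NTK,\ell-1}$ in the form $\beta_{\ell-1,1}\X^\T\X+\V\B_{\ell-1}\V^\T$ leaves one surviving term, the entrywise square $\EE[\sigma_\ell''(\tau_{\ell-1}\xi)]^2\alpha_{\ell-1,1}\beta_{\ell-1,1}[\X^\T\X]_{ij}^2$, which off the diagonal concentrates to $\EE[\sigma_\ell''(\tau_{\ell-1}\xi)]^2\alpha_{\ell-1,1}\beta_{\ell-1,1}\sum_{a,b}\tfrac{\tr(\C_a\C_b)}{p^2}\mathbf j_a\mathbf j_b^\T=\EE[\sigma_\ell''(\tau_{\ell-1}\xi)]^2\alpha_{\ell-1,1}\beta_{\ell-1,1}\,(\J/\sqrt p)\bT(\J/\sqrt p)^\T$, while all other cross terms ($\X^\T\X$ against $\V\cdot\V^\T$, $\V\cdot\V^\T$ against $\V\cdot\V^\T$, the $O(p^{-1/2})$-column corrections, the $k\ge2$ residuals) have operator norm $O(p^{-1/2})$ via the elementary bound $\|\A\circ(\mathbf c\mathbf c^\T)\|=\|\diag(\mathbf c)\A\diag(\mathbf c)\|\le\|\mathbf c\|_\infty^2\|\A\|$. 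Collecting the $\X^\T\X$-, $\V\cdot\V^\T$- and $\I_n$-blocks — recalling the block forms of $\A_\ell,\B_{\ell-1}$ and that $(\J/\sqrt p)\bT(\J/\sqrt p)^\T=\V\left[\begin{smallmatrix}\bT&\zo\\\zo^\T&0\end{smallmatrix}\right]\V^\T$ — reproduces the recursions for $\beta_{\ell,1},\beta_{\ell,2},\beta_{\ell,3}$; the identity coefficient follows from $\alpha_{\ell,0}+\EE[\sigma_\ell'(\tau_{\ell-1}\xi)]^2\beta_{\ell-1,0}+(\dot{\tau}_\ell^2-\EE[\sigma_\ell'(\tau_{\ell-1}\xi)]^2)\kappa_{\ell-1}^2$ together with $\alpha_{\ell,0}=\tau_\ell^2-\tau_0^2\alpha_{\ell,1}$ and the inductive expression for $\beta_{\ell-1,0}$, and non-negativity of the $\beta$'s is inherited from that of the $\alpha$'s in Theorem~\ref{theo:CK} and from $\EE[\sigma_\ell'(\tau_{\ell-1}\xi)]^2\ge0$.

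The main obstacle, I expect, is the operator-norm control of the Hadamard product $\K_{\NTK,\ell-1}\circ\K'_{\CK,\ell}$ — in particular the claim that entrywise powers such as $[\X^\T\X]_{ij}^2$ (and more generally $[\K_{\CK,\ell-1}]_{ij}^k$) concentrate \emph{in spectral norm}, not merely entrywise, around their $\bT$-structured means. This requires a dedicated concentration lemma: decomposing $\X$ into its (deterministic) class-mean part and a Gaussian-noise part $\bZ$ with independent columns, one expands the entrywise product and bounds the fluctuation matrix by a trace/moment argument exploiting column independence — the ``noise$\times$noise'' piece being $\Theta(1)$ in norm but centering to the rank-$K$ $\bT$-term, and the ``mean$\times$noise'', ``mean$\times$mean'' and diagonal-leak pieces being $o(1)$. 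A second, subtler pitfall — the one flagged by the paper's erratum footnote — is the bookkeeping for $\K'_{\CK,\ell}$: because $\sigma'_\ell$ is not centered it carries a rank-one $\EE[\sigma_\ell'(\tau_{\ell-1}\xi)]^2\one_n\one_n^\T$ of operator norm $\Theta(n)$, and the argument closes only because Hadamard multiplication by $\one_n\one_n^\T$ is a scaling; conflating $\tilde\K_{\CK,\ell-1}$ with its off-diagonal truncation, or dropping the Taylor corrections that the $O(p^{-1/2})$ diagonal fluctuations of $\K_{\CK,\ell-1}$ induce, alters the $\V\cdot\V^\T$ bookkeeping and hence the $\beta$-recursions. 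The remaining steps — the uniform diagonal concentration $[\K_{\NTK,\ell-1}]_{ii}\to\kappa_{\ell-1}^2$ and the $k\ge2$ Hermite-tail bounds — are routine given Theorem~\ref{theo:CK} and Assumptions~\ref{ass:high-dimen}--\ref{ass:activation}.
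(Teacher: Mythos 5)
Your proposal is correct, and its overall architecture coincides with the paper's: induction on $\ell$, a companion asymptotic expansion of the \emph{uncentered} derivative kernel $\K'_{\CK,\ell}$ (a constant term $\EE[\sigma_\ell'(\tau_{\ell-1}\xi)]^2$, a term proportional to $[\K_{\CK,\ell-1}]_{ij}$ with coefficient $\EE[\sigma_\ell''(\tau_{\ell-1}\xi)]^2$, and $O(p^{-1/2})$ corrections in $\chi_i+\chi_j$), followed by a term-by-term treatment of the Hadamard product in \eqref{eq:K_NTK_relation} in which the only surviving new contribution is the entrywise square $\EE[\sigma_\ell''(\tau_{\ell-1}\xi)]^2\alpha_{\ell-1,1}\beta_{\ell-1,1}(\z_i^\T\z_j/p)^2$, feeding the $\beta_{\ell,3}$ recursion. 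The genuine difference is how the companion lemma is obtained: you derive it from the Hermite/Mehler-type identity $\EE[f(u)g(v)]=\sum_{k}\frac{\rho^k}{k!}\EE[f^{(k)}(u)]\,\EE[g^{(k)}(v)]$ together with a Taylor expansion in the (fluctuating) variances, whereas the paper's Lemma~\ref{lem:entry-wise-approx-CK'} substitutes $u=\sqrt{[\K_{\CK,\ell-1}]_{ii}}\,\xi_i$ and the Gram--Schmidt expression for $v$, then Taylor-expands $\sigma_\ell'$ bivariately; the two routes produce the same $\dot{\alpha}_{\ell,0},\dot{\alpha}_{\ell,1},\dot{\alpha}_{\ell,2}$, and your $k\ge2$ tail is disposed of for exactly the same reason as the paper's $O(p^{-1})$ remainder (it is Hadamard-multiplied by an $O(p^{-1/2})$ off-diagonal matrix, hence $o(1)$ in operator norm); the Mehler route does ask for slightly more than Assumption~\ref{ass:activation} literally grants (summability of the Hermite tail), but this is at the same level of rigor as the paper's truncated Taylor expansions. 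A second, organisational difference is that you run the error bookkeeping at the matrix level, via the stability of the class $a\,\X^\T\X+\V\bLambda\V^\T+d\,\I_n+o_{\|\cdot\|}(1)$ under Hadamard products and the bound $\|\A\circ(\mathbf{b}\mathbf{c}^\T)\|\le\|\mathbf{b}\|_\infty\|\mathbf{c}\|_\infty\|\A\|$, while the paper works entrywise with the $S_{ij}$ calculus of Remark~\ref{rem:S} and converts to spectral norm only at the end; both are sound, and your version makes the role of the uncentered rank-one piece $\EE[\sigma_\ell'(\tau_{\ell-1}\xi)]^2\one_n\one_n^\T$ (precisely the term behind the erratum of Footnote~\ref{foot:errata2}) more transparent. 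The one place where the paper is lighter than you anticipate is the spectral-norm concentration of $\{(\z_i^\T\z_j/p)^2\}_{i\ne j}$ around $\frac1p\J\bT\J^\T$, which is simply cited from \cite{couillet2016kernel,kammounCovarianceDiscriminativePower2023} rather than reproved; the trace/moment argument you sketch is the standard proof of that fact, so this is a matter of self-containedness rather than a gap.
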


Roughly speaking, Theorem~\ref{theo:NTK} shows that the eigenspectral behavior established in Theorem~\ref{theo:CK} for CK matrices also holds for NTK matrices, up to a change of the associated coefficients $\alpha_\ell$s to $\beta_{\ell}$s.
The remarks after Theorem~\ref{theo:CK} thus remain valid, at least in spirit, for NTK matrices. 

\begin{Remark}[On spectral norm characterization]\label{rem:spectral}\normalfont
Note that the characterizations in Theorem~\ref{theo:CK}~and~\ref{theo:NTK} for CK and NTK matrices are provided in a spectral norm sense. 
It then follows from Weyl's inequality \cite{horn2012matrix} and the Davis--Kahan theorem \cite{yu2015useful} that the difference between the eigenvalues (e.g., when listed in a decreasing order) and the associated eigenvectors (when the eigenvalues under study are ``isolated'') of $\K_{\NTK}$ and $\tilde \K_{\NTK}$ vanish asymptotically as $n,p \to \infty$.
As such, the spectral norm guarantees in Theorem~\ref{theo:CK}~and~\ref{theo:NTK} provide more tractable access to the convergence and generalization properties of wide DNNs, at least for GMM data, via the spectral study of $\tilde \K_{\NTK}$ \cite{jacot2018neural,fan2020spectra}.
\end{Remark}

Despite being derived here for the Gaussian mixture model in \eqref{eq:def_GMM}, we conjecture that the results in Theorem~\ref{theo:CK}~and~\ref{theo:NTK} hold beyond the Gaussian setting and extend, for example, to the family of concentrated random vectors \cite{seddik2020random,ledoux2005concentration}.
As discussed after Assumption~\ref{ass:W} for the distribution of $\W$, such universality commonly arises in random matrix theory and high-dimensional statistics \cite{couillet_liao_2022,bai2010spectral,vershynin2018high}; we refer interested readers to Remark~\ref{rem:beyond_gaussian_data} in Appendix~\ref{sec:proofs_and_auxiliary} for further discussions.

From a technical perspective, the results in Theorem~\ref{theo:CK}~and~\ref{theo:NTK} extend the single-hidden-layer CK analysis in \cite{ali2022random,liao2018spectrum} to both CK and NTK matrices of fully-connected DNNs with an arbitrary number of layers.
In particular, taking $\ell=1$ in Theorem~\ref{theo:CK}, one obtains \cite[Theorem~1]{ali2022random} as a special case.\footnote{Note that in \cite[Theorem~1]{ali2022random}, the authors do \emph{not} assume $\EE[\sigma(\tau_0 \xi)] = 0$ as in our Theorem~\ref{theo:CK}, but instead ``center'' the CK matrices by pre-~and~post-multiplying $\K_{\CK}$ with $\P = \I_n -  \one_n \one_n^\T/n$. This can be shown equivalent to taking $\EE[\sigma(\tau_0 \xi)] = 0$ in the single-hidden-layer setting; see Appendix~\ref{subsec:equivalent_centering} for more details. }

\begin{Remark}[On CK and NTK matrices]\label{rem:macth}\normalfont
    It follows from Theorem~\ref{theo:CK} that for a given DNN model, it suffices to match the coefficients $\alpha_{\ell,1}, \alpha_{\ell,2}$ and $\alpha_{\ell,3}$ in a layer-by-layer manner to propose a novel DNN with asymptotically equivalent CK matrices.
    Similarly, matching the key coefficients $\beta_{\ell,1},\beta_{\ell,2}$, and $\beta_{\ell,3}$ leads to a DNN model with asymptotically equivalent NTK matrices.
    Moreover, for two nets with the same $\alpha_{\ell,1}, \alpha_{\ell,2}, \alpha_{\ell,3}$, and therefore the same CKs, it suffices to match the additional $\dot{d}_{\ell}$, $\dot{\alpha}_{\ell}$ to render the two nets asymptotically equivalent in both CK and NTK senses.
\end{Remark}

In the following corollary, we present a concrete example of how to apply the results in Theorem~\ref{theo:CK}~and~\ref{theo:NTK} in the design of a novel computationally and storage efficient DNN, that shares the same CK and NTK eigenspectra with any given fully-connected neural net having centered activation.

\begin{Corollary}[Sparse and quantized DNNs]\label{coro:sparse_quantized}
For a given fully-connected DNN (referred to as ${\sf DNN1}$) of depth $L$ with centered activation such that $\EE[\sigma_\ell(\tau_{\ell-1} \xi)] = 0$ for $\xi \sim \NN(0,1)$, one is able to construct, say in a layer-by-layer manner, a sparse and quantized ``equivalent'' DNN model, of depth $L$ and referred to as  ${\sf DNN2}$, such that the two nets have asymptotically the same eigenspectra for their CK (and NTK similarly, as per Remark~\ref{rem:macth}) matrices, by using the following ternary weights:
\begin{equation}\label{eq:def_W}
  [\W]_{ij} = 0~\text{with proba $\varepsilon \in [0,1)$}, \quad [\W]_{ij} = \pm (1-\varepsilon)^{-1/2}~\text{each with proba $1/2-\varepsilon/2$},
\end{equation}
as well as quantized activations (as visually displayed in Figure~\ref{fig:sigma}):
\begin{equation}\label{eq:def_sigmal}
    \sigma_T(t)=a \cdot (1_{t < s_1}+1_{t > s_2}), \quad \sigma_Q(t)=b_1 \cdot (1_{t < r_1}+1_{t > r_4}) + b_2 \cdot 1_{r_2 \leq t \leq r_3}.
\end{equation}
\end{Corollary}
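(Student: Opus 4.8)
\textbf{Proof proposal for Corollary~\ref{coro:sparse_quantized}.}

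The plan is to verify that the proposed ternary weights satisfy Assumption~\ref{ass:W}, and then to show that the free parameters in the quantized activations $\sigma_T$ and $\sigma_Q$ give us enough degrees of freedom to match, layer by layer, the scalar quantities that (by Theorems~\ref{theo:CK}~and~\ref{theo:NTK}) fully determine the CK and NTK eigenspectra. First I would check the weight distribution in \eqref{eq:def_W}: a direct computation gives $\EE[\W_{ij}] = 0$, $\Var[\W_{ij}] = (1-\varepsilon) \cdot (1-\varepsilon)^{-1} = 1$, and $\EE[\W_{ij}^4] = (1-\varepsilon) \cdot (1-\varepsilon)^{-2} = (1-\varepsilon)^{-1} < \infty$ for $\varepsilon \in [0,1)$, so Assumption~\ref{ass:W} holds and Theorems~\ref{theo:CK}~and~\ref{theo:NTK} apply to ${\sf DNN2}$ verbatim.

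Next I would set up the layer-by-layer matching. By Remark~\ref{rem:macth} (and item~(ii) after Theorem~\ref{theo:CK}), once ${\sf DNN1}$ and ${\sf DNN2}$ share the same input data (hence the same $\K_{\CK,0} = \X^\T\X$), matching the CK eigenspectra at every layer reduces to matching, at each $\ell$, the four scalars
\[
  \EE[\sigma_\ell(\tau_{\ell-1}\xi)] = 0, \quad \EE[\sigma_\ell'(\tau_{\ell-1}\xi)]^2, \quad \EE[\sigma_\ell''(\tau_{\ell-1}\xi)]^2, \quad \EE[(\sigma_\ell^2(\tau_{\ell-1}\xi))''],
\]
for $\xi \sim \NN(0,1)$, where $\tau_{\ell-1}$ is itself propagated through \eqref{eq:def_tau} and is common to both nets once matched. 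Because $\sigma_T$ and $\sigma_Q$ are step functions, their derivatives are sums of Dirac masses, and the Gaussian integration-by-parts identity $\EE[\sigma'(\tau\xi)] = \tau^{-1}\EE[\xi\sigma(\tau\xi)]$ (cf.\ the discussion after Assumption~\ref{ass:activation}) lets me rewrite all four target quantities as explicit expressions in the Gaussian CDF/PDF evaluated at the breakpoints $s_1,s_2$ (resp.\ $r_1,\dots,r_4$) and in the amplitudes $a$ (resp.\ $b_1,b_2$). Concretely, $\EE[\sigma_T(\tau\xi)]$, $\EE[\sigma_T'(\tau\xi)]$, $\EE[\sigma_T''(\tau\xi)]$ and $\EE[(\sigma_T^2)'']$ become closed-form functions of $(a,s_1,s_2)$ via $\Phi(\cdot/\tau)$ and $\phi(\cdot/\tau)$; $\sigma_T$ has three free parameters, which (generically) suffices to hit the centering constraint plus the two nonzero targets $\EE[\sigma_\ell']^2$ and $\EE[\sigma_\ell'']^2$ needed for $\alpha_{\ell,1},\alpha_{\ell,3}$ and $\beta$'s, while $\sigma_Q$'s five parameters $(b_1,b_2,r_1,\dots,r_4)$ (effectively more after using a symmetry ansatz $r_4=-r_1$, $r_3=-r_2$, which automatically forces $\EE[\sigma_Q(\tau\xi)]=0$ for a suitable sign pattern) provide the remaining slack to also match the third independent combination controlling $\alpha_{\ell,2}$. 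I would then invoke the recursions \eqref{eq:def_ds1}--\eqref{eq:def_ds2} and those of Theorem~\ref{theo:NTK} to conclude by induction on $\ell$ that all of $\alpha_{\ell,j}$, $\beta_{\ell,j}$, $\tau_\ell$, $\dot\tau_\ell$ agree between the two nets, so $\tilde\K_{\CK,L}$ and $\tilde\K_{\NTK,L}$ coincide and hence, by Remark~\ref{rem:spectral} (Weyl + Davis--Kahan), the eigenspectra match asymptotically.

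The main obstacle is the \emph{solvability} of the per-layer matching system: I need to show that for the $\tau_{\ell-1}$ values that actually arise and for the target scalar tuples produced by ${\sf DNN1}$, the nonlinear system in the breakpoints and amplitudes of $\sigma_T$ (or $\sigma_Q$) has a real solution with the required sign/nonnegativity constraints (e.g.\ $\alpha_{\ell,0}\ge 0$, $\tau_\ell>0$). This is not automatic — one must argue, e.g.\ via a continuity/range argument on the Gaussian-integral parametrization (the amplitude $a$ scales $\EE[\sigma_T']^2$ and $\EE[\sigma_T'']^2$ quadratically while the breakpoints tune their ratio, and the range of achievable ratios covers what centered smooth activations of ${\sf DNN1}$ can produce), that the map from $(a,s_1,s_2)$ (resp.\ the $\sigma_Q$ parameters) onto the target tuple is surjective onto the relevant region; I would also need to handle the degenerate edge cases where a target scalar vanishes. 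A secondary, more bookkeeping-level obstacle is propagating $\tau_\ell$ consistently: since $\tau_\ell$ depends on $\sigma_\ell$ which we are choosing, I would fix the scale of each chosen activation so that $\tau_\ell^{\sf DNN2} = \tau_\ell^{\sf DNN1}$, making the induction self-consistent. Modulo these solvability checks — which I expect to carry out by exhibiting the explicit Gaussian-integral formulas and a dimension count — the corollary follows directly from Theorems~\ref{theo:CK}~and~\ref{theo:NTK}.
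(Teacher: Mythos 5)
Your proposal follows essentially the same route as the paper's proof: check that the ternary weights in \eqref{eq:def_W} have zero mean, unit variance and finite fourth moment so that Assumption~\ref{ass:W} (and hence Theorems~\ref{theo:CK}~and~\ref{theo:NTK}) applies to ${\sf DNN2}$, then write the moment quantities $\EE[\sigma'(\tau\xi)]$, $\EE[\sigma''(\tau\xi)]$, $\EE[(\sigma^2(\tau\xi))'']$, $\EE[\sigma^2(\tau\xi)]$ of the step activations in closed form via Gaussian error-function integrals, and match them layer by layer so that the $\alpha_{\ell,j}$ (and $\beta_{\ell,j}$) recursions coincide. Two points of divergence are worth flagging. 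First, you propose to enforce $\tau_\ell^{\sf DNN2}=\tau_\ell^{\sf DNN1}$ at \emph{every} layer; with the three free parameters $(a,s_1,s_2)$ of $\sigma_T$ and the three moment targets already consumed, this extra constraint makes the intermediate-layer systems overdetermined. The paper avoids this: each net propagates its \emph{own} $\tau_\ell$ (the recursions for $\tilde\alpha_{\ell,j}$ are evaluated at $\tilde\tau_{\ell-1}$, cf.\ Algorithm~\ref{alg:sparse_quantized}), since $\tau_\ell$ enters the spectrum only through the identity shift $\alpha_{\ell,0}\I_n$; only at the final layer is $\tau_L$ additionally matched, which is precisely why the richer five-parameter activation $\sigma_Q$ is reserved for layer $L$ while $\sigma_T$ suffices for layers $1,\ldots,L-1$. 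Second, the solvability of the nonlinear matching system, which you correctly single out as the main obstacle and propose to address by a continuity/range argument, is in fact \emph{not} established in the paper either: the authors simply observe that the system admits no explicit solution and solve it numerically by least squares with many random restarts. So your identification of that gap is accurate, but closing it would go beyond what the paper itself does; as written, both your argument and the paper's rest on the (unproven) existence of real solutions with the required sign constraints.
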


We refer readers to Appendix~\ref{subsec:proof_coro_sparse_quantized} for the proof and discussions of Corollary~\ref{coro:sparse_quantized}, as well as the detailed expressions of $\EE[\sigma'(\tau \xi)]$, $\EE[\sigma''(\tau \xi)]$, and $\EE[(\sigma^2(\tau \xi))'']$, of direct algorithmic use for both $\sigma_T$ and $\sigma_Q$ as functions of the parameters $a, s_1, s_2$ and $b_1, b_2, r_1, r_2, r_3, r_4$.
Built upon Corollary~\ref{coro:sparse_quantized}, we propose a DNN ``lossless'' compression scheme with equivalent CKs, as summarized in Algorithm~\ref{alg:sparse_quantized} below.

\begin{figure}[thb]
  \centering
  \begin{equation*}
    \vcenter{\hbox{
        \begin{tikzpicture}[font=\footnotesize, inner sep=1]
          \renewcommand{\axisdefaulttryminticks}{4}
          \pgfplotsset{every major grid/.append style={densely dashed}}
          \tikzstyle{every axis y label}+=[yshift=-10pt]
          \tikzstyle{every axis x label}+=[yshift=5pt]
          \pgfplotsset{every axis legend/.style={cells={anchor=east},fill=none,at={(.95,.95)}, anchor=north east, font=\footnotesize}}
          \begin{axis}[
              height=.32\linewidth,
              width=.45\linewidth,
              xmin=-2,xmax=2,
              ymin=-2.5,ymax=4,
              xtick={-1.5, -1, -0.5, 0, 0.7, 1, 1.5},
              xticklabels = { $r_1$, $r_2$, $s_1$,$0$, $s_2$, $r_3$, $r_4$ },
              ytick={-1.5,0,1,1.5},
              yticklabels = { $a$, $0$, $b_2$, $b_1$ },
              grid=major,
              scaled ticks=true,
            ]
            \def\s{1}
            \def\BB{3}
            \addplot[samples=3000,domain=-2.5:5,RED,line width=1.5pt] { (abs(x)<=1*\s)*(abs(sign(x)))+(abs(x)>=1.5*\s)*(1.5*abs(sign(x)))
            };
            \addlegendentry{ $\sigma_Q$ };
            \addplot[samples=3000,domain=-2.5:5,BLUE,line width=1.5pt] { ((x <= -0.5*\s)+(x>=0.7*\s))*(-1.5*abs(sign(x)))
            };
            \addlegendentry{ $\sigma_T$ };
          \end{axis}
        \end{tikzpicture}
      }}
    \quad
        \begin{aligned}
      \EE[\sigma_T(\tau\xi)] & = \frac{a}{2}\left(\erf \left( \frac{s_1}{\sqrt{2}\tau} \right)-\erf\left(  \frac{s_2}{\sqrt{2}\tau} \right) \right)+a \\
      \EE[\sigma_Q(\tau\xi)] & = \frac{b_1}{2} \left(\erf\left(\cfrac{r_1}{\sqrt{2}\tau}\right)
      - \erf\left(\frac{r_4}{\sqrt{2}\tau}\right) \right)
      + b_1                                                                                                                                           \\
                             & + \frac{b_2}{2} \left(\erf\left(\frac{r_3}{\sqrt{2}\tau}\right)
      - \erf\left(\frac{r_2}{\sqrt{2}\tau}\right) \right)
    \end{aligned}
  \end{equation*}
  \caption{ Visual representations of activations $\sigma_T$ and $\sigma_Q$ in \eqref{eq:def_sigmal} \textbf{(left)} and the expressions of $\EE[\sigma_T(\tau\xi)]$ and $\EE[\sigma_Q(\tau\xi)] $\textbf{(right)}, with $r_1 - r_2 = r_3 - r_4$ here and $\erf(\cdot)$ the Gaussian error function. }
  \label{fig:sigma}
\end{figure}
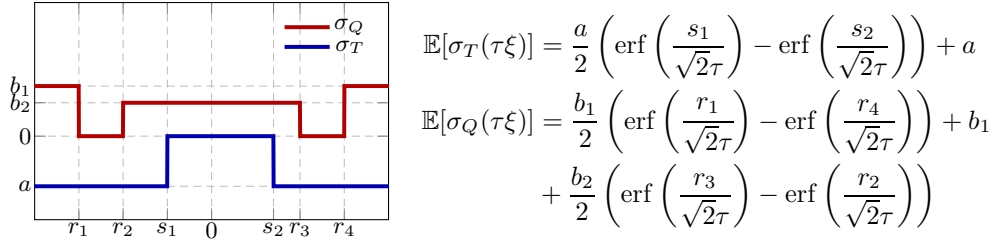

\begin{algorithm}[thb]
    \caption{ ``Lossless'' compression scheme for fully-connected DNNs } 
    \label{alg:sparse_quantized}
\begin{algorithmic}[1]
  \STATE {{\bfseries Input:} Input data $\x_1, \ldots, \x_n$, sparsity level $\varepsilon \in [0,1)$, and ${\sf DNN1}$ with activations $\sigma_1, \ldots, \sigma_L$. }
  \STATE {{\bfseries Output:} Sparse and quantized model ${\sf DNN2}$ with weights $\W_\ell$ and activations $\tilde \sigma_\ell$, $\ell \in \{1, \ldots, L\}$. }
  \STATE {Estimate $\tau_0$ from data as $\tau_0 = \sqrt{ \frac1n \sum_{i=1}^n \| \x_i \|^2 }$. 
  Set $\tau = \tau_0$ for ${\sf DNN1}$, and $\tilde \tau =  \tau_0$ for ${\sf DNN2}$. }
\FOR{ $\ell = 1 ,\ldots, L-1$ }
\STATE {Compute $\alpha_{\ell,1}$, $\alpha_{\ell,2}$, $\alpha_{\ell,3}$ of ${\sf DNN1}$ using $\tau_{\ell-1}$, and derive the expressions of $\tilde \alpha_{\ell,1}$, $\tilde \alpha_{\ell,2}$, $\tilde \alpha_{\ell,3}$ of ${\sf DNN2}$ using $\tilde \tau_{\ell-1}$ as per \eqref{eq:def_ds1} and \eqref{eq:def_ds2}.}
\STATE {Solve, with Corollary~\ref{coro:sparse_quantized} and the detailed expressions in Appendix~\ref{subsec:proof_coro_sparse_quantized}, the system of equations $(\alpha_{\ell,1}, \alpha_{\ell,2}, \alpha_{\ell,3}) = (\tilde \alpha_{\ell,1}, \tilde \alpha_{\ell,2}, \tilde \alpha_{\ell,3})$ for the parameters $a, s_1, s_2$, to get the activation $\tilde \sigma_\ell$ of ${\sf DNN2}$ at layer $\ell$. }
\STATE {Update $\tau, \tilde \tau$ as $\tau = \sqrt{  \EE[ \sigma_\ell^2 (\tau \xi) ] } $, and $\tilde \tau = \sqrt{  \EE[ \tilde \sigma_\ell^2 (\tilde \tau \xi) ] } $. }
\ENDFOR
\STATE { 
For the layer $\ell = L$, compute $\alpha_{L,1}$, $\alpha_{L,2}$, $\alpha_{L,3}$, $\tau_{L}$ and $\tilde \alpha_{L,1}$, $\tilde \alpha_{L,2}$, $\tilde \alpha_{L,3}$, $\tilde \tau_{L}$. 
Use them to solve for the parameters $b_1, b_2, r_1, r_2, r_3, r_4$ to obtain the activation $\tilde \sigma_L$ of ${\sf DNN2}$ at layer $\ell$. }
\STATE {Draw independently the i.i.d.\@ entries of $\W_1, \ldots, \W_L$ according to \eqref{eq:def_W} with sparsity level $\varepsilon$. }
\RETURN {${\sf DNN2}$ model with weights $\W_\ell$ and activations $\tilde \sigma_\ell$, $\ell = 1, \ldots, L$. }
\end{algorithmic}
\end{algorithm}

As a side remark, note that the ``sign'' of activations does not matter in Corollary~\ref{coro:sparse_quantized}~or~Algorithm~\ref{ass:high-dimen}, in the sense that the key parameters $\alpha_{\ell,1}, \alpha_{\ell,2}$, and $ \alpha_{\ell,3}$ for CKs,
as well as $\beta_{\ell,1}, \beta_{\ell,2}$, and $ \beta_{\ell,3}$ for NTKs, remain unchanged when $-\sigma_\ell(t)$ is used instead of $\sigma_\ell(t)$.

Before embarking on the detailed numerical experiments in Section~\ref{sec:experiments}, we would like to bring the readers' attention to the recent line of works \cite{lee2018snip,su2020sanity,Wang2020Picking,tanaka2020pruning,frankle2021pruning} showing that for wide and deep NN models, very efficient sparse sub-networks can be found that almost match the performance of the original dense nets \emph{with little or even no training}, for instance by uniformly pruning the network weights \cite{su2020sanity}.
To develop a theoretical grasp of these (extremely counterintuitive) empirical successes, a few attempts have been made, for example, to carefully prune the network weights to retain the same (limiting) NTK \cite{liu20finding}, or to show that randomly pruned sparse nets have the same (limiting) NTK as the original net up to a scaling factor \cite{yang2022on}. 
Instead, our work, by considering the \emph{statistical structure} of the input data, leverages tools from RMT to ``compress'' both the weights and activations (per Theorem~\ref{theo:NTK}~and~Corollary~\ref{coro:sparse_quantized}) without affecting the NTK eigenstructure.

\section{Numerical experiments}
\label{sec:experiments}

In this section, we provide numerical experiments to (i) validate the asymptotic characterizations in Theorem~\ref{theo:CK}~and~\ref{theo:NTK}, on both synthetic GMM and real-world data (such as MNIST and CIFAR10) of (in fact not so) large sizes and dimensions; 
and to (ii) show how these results can be used to sparsify and quantize fully-connected DNNs, leading to huge savings in computational and storage resources (up to a factor of $10^3$ in memory and a level of sparsity $\varepsilon=90\%$) without significant performance degradation. 
We refer readers to Section~\ref{sec:additional_experiments} in the appendix for further experiments and discussions.
The code to reproduce the numerical results in this section is publicly available at \url{https://github.com/Model-Compression/Lossless_Compression}.

\begin{figure}[htb]
  \centering
  \resizebox{\textwidth}{!}{
  \begin{tabular}{cc}
  \begin{tabular}{cc}
    \begin{tikzpicture}[font=\large,spy using outlines, inner sep=1.2]
    \renewcommand{\axisdefaulttryminticks}{4} 
    \pgfplotsset{every major grid/.append style={densely dashed}}  
    \tikzstyle{every axis y label}+=[yshift=-10pt] 
    \tikzstyle{every axis x label}+=[yshift=5pt]
    \pgfplotsset{every axis legend/.append style={cells={anchor=west},fill=white, at={(0.98,0.98)}, anchor=north east, font=\Large }}
    \begin{axis}[
      xlabel={Eigenvalue},
      ylabel={Count},
      height=.45\textwidth,
      width=.5\textwidth,
      xmin=-0.08,
      ymin=-4,
      xmax=0.9,
      ymax=1000,
      xtick={0,0.35,0.7},
      xticklabels={0,0.5,1},
      bar width=3pt,
      grid=none,
      ymajorgrids=false,
      scaled y ticks=base 10:-2,
      ]
      \addplot[area legend,ybar,mark=none,color=white,fill=BLUE] coordinates{
      (0.013922050835754755,6136)(0.03834835377640079,585)(0.06277465671704684,476)(0.08720095965769287,363)(0.11162726259833891,261)(0.13605356553898496,158)(0.160479868479631,19)(0.18490617142027704,0)(0.2093324743609231,0)(0.23375877730156913,0)(0.25818508024221515,0)(0.2826113831828612,0)(0.30703768612350724,0)(0.3314639890641533,0)(0.3558902920047993,0)(0.3803165949454454,0)(0.4047428978860914,0)(0.42916920082673743,0)(0.4535955037673835,0)(0.4780218067080295,0)(0.5024481096486756,0)(0.5268744125893216,0)(0.5513007155299676,0)(0.5757270184706137,0)(0.6001533214112597,0)(0.6245796243519057,1)(0.6490059272925518,0)(0.6734322302331978,0)(0.6978585331738438,0)(0.7222848361144899,0)(0.746711139055136,0)(0.7711374419957819,0)(0.795563744936428,20)
      };
      \addlegendentry{ {Eigs of $\K$} };
      \coordinate (spypoint) at (axis cs:0.8,40);
      \coordinate (magnifyglass) at (axis cs:0.4,300);
      \end{axis}
      \spy[black!50!white,size=1.5cm,circle,connect spies,magnification=3] on (spypoint) in node [fill=none] at (magnifyglass);
\end{tikzpicture}
&
     \begin{tikzpicture}[font=\large,spy using outlines, inner sep=1.2]
    \renewcommand{\axisdefaulttryminticks}{4} 
    \pgfplotsset{every major grid/.append style={densely dashed}}       
    \tikzstyle{every axis y label}+=[yshift=-10pt] 
    \tikzstyle{every axis x label}+=[yshift=5pt]
    \pgfplotsset{every axis legend/.append style={cells={anchor=west},fill=white, at={(0.98,0.98)}, anchor=north east, font=\Large }}
    \begin{axis}[
      xlabel={Eigenvalue},
      ylabel={Count},
      height=.45\textwidth,
      width=.5\textwidth,
      xmin=-0.08,
      ymin=-4,
      xmax=0.9,
      ymax=1000,
      xtick={0,0.35,0.7},
      xticklabels={0,0.5,1},
      bar width=3pt,
      grid=none,
      ymajorgrids=false,
      scaled ticks=true,
      scaled y ticks=base 10:-2,
      ]
\addplot[area legend,ybar,mark=none,color=white,fill=RED] coordinates{
(0.013922050835754755,5999)(0.03834835377640079,427)(0.06277465671704684,529)(0.08720095965769287,410)(0.11162726259833891,308)(0.13605356553898496,216)(0.160479868479631,109)(0.18490617142027704,0)(0.2093324743609231,0)(0.23375877730156913,0)(0.25818508024221515,0)(0.2826113831828612,0)(0.30703768612350724,0)(0.3314639890641533,0)(0.3558902920047993,0)(0.3803165949454454,0)(0.4047428978860914,0)(0.42916920082673743,0)(0.4535955037673835,0)(0.4780218067080295,0)(0.5024481096486756,0)(0.5268744125893216,0)(0.5513007155299676,0)(0.5757270184706137,0)(0.6001533214112597,0)(0.6245796243519057,0)(0.6490059272925518,0)(0.6734322302331978,0)(0.6978585331738438,0)(0.7222848361144899,1)(0.746711139055136,0)(0.7711374419957819,0)(0.795563744936428,20)
};
\addlegendentry{ {Eigs of $\tilde \K$} };
\coordinate (spypoint) at (axis cs:0.8,40);
\coordinate (magnifyglass) at (axis cs:0.4,300);
\end{axis}
\spy[black!50!white,size=1.5cm,circle,connect spies,magnification=3] on (spypoint) in node [fill=none] at (magnifyglass);
\end{tikzpicture}
  \end{tabular}
&
  \begin{tabular}{cc}
      \begin{tikzpicture}[font=\large,spy using outlines, inner sep=1.2]
    \renewcommand{\axisdefaulttryminticks}{4} 
    \pgfplotsset{every major grid/.append style={densely dashed}}       
    \tikzstyle{every axis y label}+=[yshift=-10pt] 
    \tikzstyle{every axis x label}+=[yshift=5pt]
    \pgfplotsset{every axis legend/.append style={cells={anchor=west},fill=white, at={(0.98,0.98)}, anchor=north east, font=\Large }}
    \begin{axis}[
      xlabel={Eigenvalue},
      ylabel={Count},
      height=.45\textwidth,
      width=.5\textwidth,
      xmin=-2,
      ymin=-0.3,
      xmax=25,
      ymax=70,
      xtick={0,10,20},
      xticklabels={0,15,30},
      bar width=3pt,
      grid=none,
      ymajorgrids=false,
      scaled ticks=true,
      scaled y ticks=base 10:-1,
      ]
      \addplot[area legend,ybar,mark=none,color=white,fill=BLUE] coordinates{
(0.0038880662164545543,3147)(0.6340298102640483,21.0)(1.264171554311642,9.0)(1.8943132983592357,5.0)(2.5244550424068293,4.0)(3.154596786454423,3.0)(3.7847385305020165,2.0)(4.414880274549611,1.0)(5.045022018597204,2.0)(5.6751637626447975,0.0)(6.305305506692392,0.0)(6.935447250739985,1.0)(7.5655889947875785,0.0)(8.195730738835172,0.0)(8.825872482882767,0.0)(9.45601422693036,1.0)(10.086155970977954,0.0)(10.716297715025547,0.0)(11.34643945907314,1.0)(11.976581203120736,0.0)(12.606722947168329,0.0)(13.236864691215922,0.0)(13.867006435263516,0.0)(14.49714817931111,1.0)(15.127289923358703,0.0)(15.757431667406298,0.0)(16.38757341145389,0.0)(17.017715155501485,0.0)(17.64785689954908,0.0)(18.27799864359667,0.0)(18.908140387644266,0.0)(19.538282131691858,0.0)(20.168423875739453,0.0)(20.79856561978705,0.0)(21.42870736383464,1.0)(22.058849107882235,0.0)(22.688990851929827,0.0)(23.319132595977422,0.0)(23.949274340025017,1)
};
\addlegendentry{ {Eigs of $\K$} };
\coordinate (spypoint1) at (axis cs:24,3);
\coordinate (magnifyglass1) at (axis cs:20,30);
\coordinate (spypoint2) at (axis cs:10.4,3);
\coordinate (magnifyglass2) at (axis cs:8,30);
\end{axis}
\spy[black!50!white,size=1.5cm,circle,connect spies,magnification=3] on (spypoint1) in node [fill=none] at (magnifyglass1);
\spy[black!50!white,size=1.5cm,circle,connect spies,magnification=3] on (spypoint2) in node [fill=none] at (magnifyglass2);
\end{tikzpicture}
&
\begin{tikzpicture}[font=\large,spy using outlines, inner sep=1.2]
    \renewcommand{\axisdefaulttryminticks}{4} 
    \pgfplotsset{every major grid/.append style={densely dashed}}       
    \tikzstyle{every axis y label}+=[yshift=-10pt] 
    \tikzstyle{every axis x label}+=[yshift=5pt]
    \pgfplotsset{every axis legend/.append style={cells={anchor=west},fill=white, at={(0.98,0.98)}, anchor=north east, font=\Large }}
    \begin{axis}[
      xlabel={Eigenvalue},
      ylabel={Count},
      height=.45\textwidth,
      width=.5\textwidth,
      xmin=-2,
      ymin=-0.3,
      xmax=25,
      ymax=70,
      xtick={0,10,20},
      xticklabels={0,15,30},
      bar width=3pt,
      grid=none,
      ymajorgrids=false,
      scaled ticks=true,
      scaled y ticks=base 10:-1,
      ]
      \addplot[area legend,ybar,mark=none,color=white,fill=RED] coordinates{
(0.0038880662164545543,3121.0)(0.6340298102640483,37.0)(1.264171554311642,12.0)(1.8943132983592357,8.0)(2.5244550424068293,3.0)(3.154596786454423,4.0)(3.7847385305020165,1.0)(4.414880274549611,3.0)(5.045022018597204,2.0)(5.6751637626447975,1.0)(6.305305506692392,2.0)(6.935447250739985,0.0)(7.5655889947875785,0.0)(8.195730738835172,1.0)(8.825872482882767,0.0)(9.45601422693036,0.0)(10.086155970977954,0.0)(10.716297715025547,1.0)(11.34643945907314,0.0)(11.976581203120736,1.0)(12.606722947168329,0.0)(13.236864691215922,0.0)(13.867006435263516,0.0)(14.49714817931111,0.0)(15.127289923358703,0.0)(15.757431667406298,0.0)(16.38757341145389,0.0)(17.017715155501485,1.0)(17.64785689954908,0.0)(18.27799864359667,0.0)(18.908140387644266,0.0)(19.538282131691858,0.0)(20.168423875739453,0.0)(20.79856561978705,0.0)(21.42870736383464,1.0)(22.058849107882235,0.0)(22.688990851929827,0.0)(23.319132595977422,0.0)(23.949274340025017,1.0)
};
\addlegendentry{ {Eigs of $\tilde \K$} };
\coordinate (spypoint1) at (axis cs:24,3);
\coordinate (magnifyglass1) at (axis cs:20,30);
\coordinate (spypoint2) at (axis cs:11.3,3);
\coordinate (magnifyglass2) at (axis cs:8,30);
\end{axis}
\spy[black!50!white,size=1.5cm,circle,connect spies,magnification=3] on (spypoint1) in node [fill=none] at (magnifyglass1);
\spy[black!50!white,size=1.5cm,circle,connect spies,magnification=3] on (spypoint2) in node [fill=none] at (magnifyglass2);
\end{tikzpicture}
  \end{tabular}
  \\
\begin{tikzpicture}[font=\large, inner sep=1.2]
    \renewcommand{\axisdefaulttryminticks}{4} 
    \pgfplotsset{every major grid/.append style={densely dashed}}       
    \tikzstyle{every axis y label}+=[yshift=-10pt] 
    \tikzstyle{every axis x label}+=[yshift=5pt]
    \pgfplotsset{every axis legend/.append style={cells={anchor=west},fill=white, at={(0.98,0.98)}, anchor=north east, font=\large }}
    \begin{axis}[
      xlabel={Index},
      ylabel={Amplitude},
      height=.4\textwidth,
      width=.8\textwidth,
      xmin=0,
      ymin=-0.02,
      xmax=1600,
      ymax=0.02,
      xtick={0,800,1600},
      xticklabels={0,4000,8000},
      ytick={-0.01,0,0.01},
      yticklabels={-1,0,1},
      grid=none,
      ymajorgrids=false,
      scaled ticks=true,
      ]
\addplot [smooth,BLUE,line width=0.5pt] file {eigenvector_gaussion_K.txt};
\addplot [densely dashed,RED,line width=0.5pt] file {eigenvector_gaussion_tildeK.txt};
\end{axis}
\end{tikzpicture} 
&
\begin{tikzpicture}[font=\large, inner sep=1.2]
    \renewcommand{\axisdefaulttryminticks}{4} 
    \pgfplotsset{every major grid/.append style={densely dashed}}       
    \tikzstyle{every axis y label}+=[yshift=-10pt] 
    \tikzstyle{every axis x label}+=[yshift=5pt]
    \pgfplotsset{every axis legend/.append style={cells={anchor=west},fill=white, at={(0.98,0.98)}, anchor=north east, font=\large }}
    \begin{axis}[
      xlabel={Index},
      ylabel={Amplitude},
      height=.4\textwidth,
      width=.8\textwidth,
      xmin=0,
      ymin=-0.04,
      xmax=1066,
      ymax=0.04,
      xtick={0,533,1066},
      xticklabels={0,1600,3200},
      ytick={-0.02,0,0.02},
      yticklabels={-2,0,2},
      grid=none,
      ymajorgrids=false,
      scaled ticks=true,
      ]
\addplot [smooth,BLUE,line width=0.5pt] file {eigenvector_bernuli_K.txt};
\addplot [densely dashed,RED,line width=0.5pt] file {eigenvector_bernuli_tildeK.txt};
\end{axis}
\end{tikzpicture}
\end{tabular}}
\caption{Eigenvalue histograms (\textbf{top}) and dominant eigenvectors (\textbf{bottom}) of last-layer CK matrices $\K_{\CK}$ ({\BLUE \textbf{blue}}) defined in \eqref{eq:def_K_CK_K_NTK} (with expectation estimated from $1\,000$ independent realizations of $\W$s) and the asymptotic equivalent $\tilde \K_{\CK}$ ({\RED \textbf{red}}) matrices.
(\textbf{Left}) Gaussian $\W$ on two-class GMM data, with $p=2\,000$, $n=8\,000$, $\bmu_a=[\zo_{8(a-1)};~8;~\zo_{p-8a+7}], \C_a=(1+8(a-1)/\sqrt{p})\I_p$, $a \in \{ 1,2\}$ using $[\ReLU,~\ReLU,~\ReLU]$ activations, here $\| \K_{\CK} - \tilde \K_{\CK} \| = 0.15$; and (\textbf{right}) symmetric Bernoulli $\W$ on MNIST data (number $6$ versus $8$) \cite{lecun1998gradient}, with $p=784$, $n=3\,200$, using $[\poly,~\ReLU,~\ReLU]$ activations, $\| \K_{\CK} - \tilde \K_{\CK} \|  = 6.86$. $\x_1, \ldots, \x_{n/2} \in \mathcal{C}_1$ and $\x_{n/2+1}, \ldots, \x_n \in \mathcal{C}_2$ in both cases.}

\label{fig:recursion relation}
\end{figure}

Figure~\ref{fig:recursion relation} compares the eigenvalues and dominant eigenvectors of the CK matrices $\K_{\CK}$ defined in \eqref{eq:def_K_CK_K_NTK} versus those of their asymptotic approximations $\tilde \K_{\CK}$ given in Theorem~\ref{theo:CK}, in the case of fully-connected DNNs having three hidden layers (of width $d_1 = 2\,000, d_2 = 2\,000, d_3 =1\,000$).
For different types of activations: $\poly(t) $ = $0.2t^2+t$ and $\ReLU(t) = \max(t,0)$, different weight distributions (Gaussian and symmetric Bernoulli), and on synthetic GMM as well as MNIST data,

we consistently observe a close match between the eigenvalues and dominant eigenvectors of $\K_{\CK}$ and $\tilde \K_{\CK}$, as a consequence of the spectral norm convergence in Theorem~\ref{theo:CK} (and Remark~\ref{rem:spectral}), suggesting a possibly wider applicability of the proposed results beyond GMM data.\footnote{ Small eigenvalues of $\K_{\CK},\tilde \K_{\CK}$ close to zero are removed from Figure~\ref{fig:recursion relation} for better visualization. }

Following the idea of CK matching in Figure~\ref{fig:recursion relation}, Figure~\ref{fig:classif-perf} depicts the test accuracies of (i) the original dense and unquantized network with three fully-connected layers, (ii) the proposed ``lossless'' compression scheme described in Corollary~\ref{coro:sparse_quantized}~and~Algorithm~\ref{alg:sparse_quantized} via CK matching, as well as its variant having ternarized weights but \emph{dense and unquantized} activations, (iii) the popular magnitude-based pruning approach as in \cite{gale2019state}, together with (iv) two ``heuristic'' compression approaches: (iv-i) sparsification by uniformly zeroing out $80\%$ of the weights (we \emph{cannot} do more, as the resultant performance is too poor to be visually compared with other curves)
, and (iv-ii) binarization using $\sigma(t) =  1_{t < -1}+1_{t > 1}$, for different choices of width per layer, and the ten-class classification problems of MNIST and CIFAR10. 
Specifically, neural networks before and after compression have three fully-connected layers, and the original network uses $\ReLU$ activation in each layer.
Classification is performed on a concatenated and trainable fully-connected layer.
For MNIST datasets, raw data are taken as the network input; for CIFAR10 dataset, flattened output of the 16th convolutional layer of VGG19 \cite{simonyan2014very} are taken as the network input.

We observe from Figure~\ref{fig:classif-perf} that the proposed ``lossless'' compression scheme produces significantly sparser networks (up to $90\%$ of weights set to zero) with minimal performance loss, while occupying (up to) a factor of $10^3$ less memory, when compared to the original or the heuristically compressed nets. 
Also, higher accuracies are obtained with the proposed approach than, e.g., the popular magnitude-based pruning under the same memory budget. 
In addition, the ternary weights variant (with \emph{quantized} weights \emph{only}) of the proposed scheme can achieve even better performance (with virtually no performance loss compared to the original dense net), however at the price of not conducive to inference accelerating, since the activations are \emph{unquantized}. 
We also see that the sparsity level $\varepsilon$ has limited impact on the classification accuracy, as in line with our theory.

These experimental results show that the proposed approach achieves a better \emph{performance-complexity trade-off} than commonly used heuristic DNN compression methods.

\begin{figure}[bht]
  \centering
\begin{tikzpicture}[font=\footnotesize][scale=0.5]
\pgfplotsset{every major grid/.style={style=densely dashed}}
\pgfplotsset{set layers=standard,cell picture=true}

\begin{groupplot}[
    legend style={
    at={(1.5,-0.8)},
    anchor= south east},
      group style={group size=1 by 2,ylabels at=edge left},
      ylabel style={text height=0.02\textwidth,inner ysep=0pt},
      height=0.475\linewidth,width=0.475\linewidth,/tikz/font=\small,
      ]
    \nextgroupplot
        [width=0.7\textwidth,height=0.3\textwidth,
        xmode = log,grid=major,xlabel={Memory (bits)},ylabel={Accuracy}, 
        xmin=900000,xmax=30000000000,ymin=0.48, ymax=1,  
         xtick={1000000, 10000000, 100000000, 1000000000, 10000000000}, ytick={0.4, 0.6, 0.8, 1},
    xticklabels={$10^6$, $10^7$, $10^8$,$10^9$,$10^{10}$},
    yticklabels={$0.4$, $0.6$,$0.8$, $1$}
    ]
    \addlegendentry{$\varepsilon = 0\%$}
     \addplot[mark size=1.5pt,thick, BLUE, mark=triangle*]coordinates{   
     (11391000,0.7544)(77904000,0.8129)(107970000,0.8764)(315830000,0.8812)(1271682000,0.9152)
    };
    \addlegendentry{$\varepsilon = 50\%$}
     \addplot[mark size=1.5pt,thick, BLUE, mark=square*]coordinates{            
    (5700000,0.7636)(38964000,0.8071)(54000000,0.8751)(157940000,0.8822)(635892000,0.9162)
    };
    \addlegendentry{$\varepsilon = 90\%$}
     \addplot[mark size=1.5pt,thick, BLUE , mark=*]coordinates{   
    (1147200,0.7515)(7812000,0.8159)(10824000,0.8788)(31628000,0.8855)(127260000,0.9176)
    };
    \addlegendentry{original dense}
     \addplot[mark size=1.5pt,thick, RED, mark=*]coordinates{            
    (182256000,0.9279)(1246464000,0.9435)(1727520000,0.9529)(5053280000,0.9549)(20346912000,0.9819)
    };
    \addlegendentry{ naive quantized}
     \addplot[mark size=1.5pt,thick, GREEN, mark=*]coordinates{            
    (182163000,0.4837)(1246154000, 0.5740)(1727210000,0.6704)(5052660000,0.6705)(20345672000,0.8178)
    };
    \addlegendentry{naive sparse}
     \addplot[mark size=1.5pt,thick, PURPLE, mark=*]coordinates{            
    (36566400, 0.744)(249600000,0.7581)(345888000,0.7926)(1011296000,0.7869)(4070688000,0.8710)
    };
    \addlegendentry{mag-based pruning}
    \addplot[mark size=1.5pt,thick, densely dashdotted, ORANGE, mark=*]coordinates{
          (18355200 , 0.9118)(124992000,0.9234 )(173184000, 0.9241)(506048000, 0.9287)(2036160000,0.9185)};
    \addlegendentry{ternary weights}
    \addplot[mark size=1.5pt,thick, densely dashdotted, BROWN, mark=*]coordinates{            
    (1282200, 0.951)(8172000,0.958)(11274000,0.9713)(32378000,0.9719)(128790000,0.9769)
    };
    \nextgroupplot[width=0.7\textwidth,height=0.3\textwidth,
          xmode = log, grid=major,xlabel={Memory (bits)},ylabel={Accuracy}, xmin=44000,xmax=10000000000,ymin=0.725,ymax=0.9,   xtick={ 100000, 1000000, 10000000, 100000000, 1000000000, 10000000000}, ytick={0.725,  0.8, 0.85, 0.9},
      xticklabels={$10^5$, $10^6$,$10^7$, $10^8$,$10^9$,$10^{10}$},
      yticklabels={$0.65$, $0.8$, $0.85$, $0.9$},
      ]
  \addplot[mark size=1.5pt, thick ,BLUE, mark=triangle*]coordinates{
  (945200,0.7998)(4163000,0.841)(27478000,0.8586)(80815000,0.863)(311630000,0.8692)
  };
  \addplot[mark size=1.5pt,thick, BLUE, mark=square*]coordinates{   
  (473600, 0.79665)(2084000,0.838)(13746000,0.8615)(40420000,0.8686)(155840000,0.8718)
  };
  \addplot[mark size=1.5pt,thick, BLUE, mark=*]coordinates{            
  (96320, 0.7949)(420800, 0.836)(2760400, 0.8547)(8104000, 0.8680)(31208000, 0.8685)
  };
  \addplot[mark size=1.5pt,thick, RED, mark=*]coordinates{            
  (15123200,0.8502)(66608000,0.8714)(439648000,0.8719)(1293040000,0.8721)(4986080000,0.8726)
  };
  \addplot[mark size=1.5pt,thick, GREEN, mark=*]coordinates{            
  (15098400,0.746)(66546000, 0.7594)(439462000,0.7777)(1292730000,0.77795)(4985460000,0.782)
  };
  \addplot[mark size=1.5pt,thick, PURPLE, mark=*]coordinates{            
  (3050240, 0.7755)(13385600,0.8273)(88108800, 0.8602)(258928000,0.8521)(997856000,0.8633)
  };
  \addplot[mark size=1.5pt,thick, densely dashdotted, ORANGE, mark=*]coordinates{
      (746880 , 0.8057)(3307200,0.8529 )(21897600, 0.8635)(64536000, 0.8747)(249072000,0.8603)};
  \addplot[mark size=1.5pt, thick, densely dashdotted,  BROWN, mark=*]coordinates{
      (46680 , 0.8590)(206700,0.8437 )(1368600, 0.8669)(4033500, 0.8720)(15567000,0.8708)};
  \end{groupplot}
\end{tikzpicture}  
  \caption{ Classification accuracies of different compressed fully-connected nets on MNIST~\cite{lecun1998gradient} (\textbf{top}) and CIFAR10~\cite{krizhevsky2009learning} (\textbf{bottom}) datasets.  
  {\BLUE \bf Blue} curves represent the proposed compression approach with different levels of sparsity $\varepsilon \in \{ 0\%, 50\%, 90\% \}$, 
  {\PURPLE \bf purple} curves represent the heuristic sparsification approach by uniformly zeroing out $80\%$ of the weights,
  {\GREEN \bf green} curves represent the heuristic quantization approach using the binary activation
  $\sigma (t) =  1_{t < -1}+ 1_{t > 1}$ 
  , {\RED \bf red} curves represent the original network,
  {\BROWN \bf brown} curves represent the proposed compression approach \emph{without} activation quantization, with $\varepsilon=90\%$ for MNIST (\textbf{top}) and $\varepsilon=95\%$ for CIFAR10 (\textbf{bottom}),
  and {\ORANGE \bf orange} curves represent magnitude-based pruning \cite{gale2019state} with the same sparsity level $\varepsilon$ as {\BROWN \bf brown}.
  Memory varies due to the \textbf{change of layer width} of the network.}
  \label{fig:classif-perf}
\end{figure}
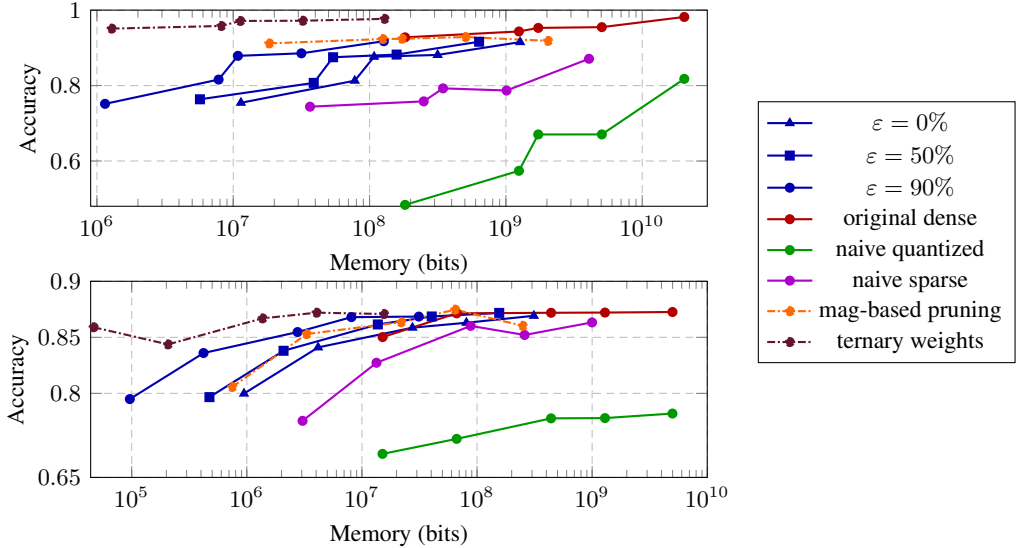

\section{Conclusion and perspectives}
\label{sec:conclusion}

In this paper, built upon recent advances in random matrix theory and high-dimensional statistics, we provide \emph{precise} characterizations of the eigenspectra of both conjugate kernel and neural tangent kernel matrices, for high-dimensional Gaussian mixture data and fully-connected multi-layer neural nets.
These results further allows us to sparsify and quantize fully-connected deep nets, resulting in a factor of $10^3$ less memory consumption with virtually no performance degradation.

Extending the present theoretical framework to more involved settings such as convolutional nets requires refined analysis on the block Toeplitz weights $\W$ and on their connection to the corresponding CK and NTK matrices \cite{arora2019exact,xiao2018dynamical}. 
Also, since the NTK eigenspectra determine the gradient descent dynamics of ultra-wide deep nets, the asymptotic characterizations in Theorem~\ref{theo:NTK} can be applied to assess the learning dynamics \cite{liao2018dynamics,advani2020high,advani2020high} of fully-connected DNN models, in and possibly beyond the infinitely wide NTK regime.

\begin{ack}
ZL would like to acknowledge the CCF-Hikvision Open Fund (20210008), the National Natural Science Foundation of China (via fund NSFC-62206101 and NSFC-12141107), the Fundamental Research Funds for the Central Universities of China (2021XXJS110), the Key Research and Development Program of Hubei (2021BAA037), and the Key Research and Development Program of Guangxi (GuiKe-AB21196034) for providing partial support.
\end{ack}

{
\small

\bibliography{liao}
\bibliographystyle{plain}
}

\clearpage

\begin{center}
  {\Large \textbf{Supplementary Material\\}} \vskip 0.1in \textbf{``Lossless'' Compression of Deep Neural Networks: \\A High-dimensional Neural Tangent Kernel Approach}
\end{center}
\vskip 0.3in

\appendix
\section{Proofs of theorems and auxiliary results}
\label{sec:proofs_and_auxiliary}

Here, we provide the mathematical proofs of our technical results of Theorems~\ref{theo:CK}~and~\ref{theo:NTK} in Sections~\ref{subsec:proof_theo_CK}~and~\ref{subsec:proof_theo_NTK}, respectively.
In Section~\ref{subsec:equivalent_centering}, we provide further discussion on Theorems~\ref{theo:CK}~and~\ref{theo:NTK} in the case of single-hidden-layer NN and compare with the results in \cite{ali2022random}.
In Section~\ref{subsec:proof_coro_sparse_quantized} we provide the proof of Corollary~\ref{coro:sparse_quantized}.
In particular, as mentioned above in Footnote~\ref{foot:errata2}, we fix here the algebraic mistake in the proof of Theorem~\ref{theo:NTK} from the \href{https://papers.nips.cc/paper_files/paper/2022/hash/185087ea328b4f03ea8fd0c8aa96f747-Abstract-Conference.html}{online version} of this paper in the NeurIPS 2022 proceeding.

\subsection{Proof of Theorem~\ref{theo:CK}}
\label{subsec:proof_theo_CK}


In this section, we provide the detailed proof of Theorem~\ref{theo:CK}.
Before going into details of the proof, we first recall our system model and working assumptions as follows.

We consider $n$ data vectors $\x_1, \ldots, \x_n \in \RR^p$ independently drawn from one of the $K$-class Gaussian mixtures $\mathcal C_1, \ldots, \mathcal C_K$ and denote $\X = [\x_1, \ldots, \x_n] \in \RR^{p \times n}$, with class $\mathcal C_a$ having cardinality $n_a$; that is, for $\x_i \in \mathcal C_a$ we have
\begin{equation}\label{eq:def_GMM_proof}
    \x_i \sim \mathcal N(\bmu_a/\sqrt p, \C_a/p),
\end{equation}
for mean vector $\bmu_a \in \RR^p$ and non-negative definite covariance $\C_a \in \RR^{p \times p}$ associated with class $\mathcal C_a$.

We position ourselves in the high-dimensional and non-trivial classification regime as stated in Assumption~\ref{ass:high-dimen}, that is: As $n \to \infty$, we have, for $a \in \{1,\ldots,K\}$ that,
\begin{enumerate}
    \item[(i)] $p/n \to c \in (0,\infty)$ and $n_a/n \to c_a \in (0, 1)$; and
    \item[(ii)] $\| \bmu_a \| = O(1)$; and
    \item[(iii)] for $\C^\circ \equiv \sum_{a=1}^K \frac{n_a}n \C_a$ and $\C_a^\circ \equiv \C_a - \C^\circ$, we have $\| \C_a\| = O(1)$, $\tr \C_a^\circ = O(\sqrt p)$ and $\tr (\C_a \C_b) = O(p)$ for $a,b \in \{1,\ldots,K\}$; and
    \item[(iv)] $\tau_0 \equiv \sqrt{\tr \C^\circ/p}$ converges in $(0,\infty)$.
\end{enumerate}

\begin{Remark}[Beyond Gaussian mixture data]\label{rem:beyond_gaussian_data}\normalfont
    Despite derived here for Gaussian mixture data, we conjecture that our results hold more generally beyond the Gaussian setting.
    As concrete examples, many results in random matrix theory and high dimensional statistics such as the popular Mar\u{c}enko-Pastur law \cite{marvcenko1967distribution}, the semicircular law \cite{wigner1955characteristic}, as well as the circular laws \cite{tao2008random}, have all been shown \emph{universal} in the sense that they do \emph{not} depend on the distribution of the (independent entries of the) data, as long as they are normalized to have zero mean and unit variance.
    In a machine learning (ML) context, such \emph{universal} behavior are observed to hold beyond the above models, and extends to \emph{nonlinear} models such as kernel matrices \cite{seddik2019kernel} and neural nets \cite{seddik2020random}, in that, say, for data drawn from the family of concentrated random vectors \cite{ledoux2005concentration,louart2019concentration} (so \emph{not} necessarily Gaussian), the performance on those ML models are the \emph{same}, in the larger $n,p$ setting, \emph{as if they were mere Gaussian mixtures} with the same means and covariances.
    We refer the interested readers to \cite[Chapter~8]{couillet_liao_2022} for more discussions on this point.
\end{Remark}

\subsubsection{Setup and notations}
We consider the fully-connected neural network model of depth $L$ and of successive widths $d_1, \ldots, d_L$ as defined in \eqref{eq:Sigma_ell}. 
We further denote $\W_\ell \in \RR^{d_\ell \times d_{\ell-1}}$ and $\sigma_\ell(\cdot)$ the weight matrix and activation at layer $\ell \in \{ 1, \ldots, L \}$, respectively.




We assume that the following conditions hold for the random weight matrices $\W_\ell$s and the activation function $\sigma_\ell$s for $\ell \in \{1,\ldots, L\}$, as demanded in Assumptions~\ref{ass:W}~and~\ref{ass:activation}:
\begin{itemize}
    \item[(i)]
        The weight matrices $\W_\ell$s are independent and have i.i.d.\@ entries of zero mean, unit variance, and finite eight-order moment.
    \item[(ii)]
        The activations $\sigma_\ell$s are at least four-times weakly differentiable with respect to standard normal distribution, in the sense that $\max_{k \in \{0,1,2,3,4\}} \{ |\EE[\sigma_\ell^{(k)}(\xi)]| \} < C$ for some universal constant $C > 0$ and $\xi \sim \NN(0,1)$.
\end{itemize}

Our objective of interest in this section is the Conjugate Kernel (CK) matrix defined via the following recursive relation \cite{jacot2018neural,bietti2019inductive}
\begin{equation}\label{eq:K_CK_relation_proof}
    [\K_{\CK,\ell}]_{ij} = \EE_{u,v } [\sigma_\ell(u) \sigma_\ell(v)], \quad  \K_{\CK,0} = \X^\T \X,
\end{equation}
with
\begin{equation}
    u,v \sim \NN \left(\zo, \begin{bmatrix} [\K_{\CK,\ell-1}]_{ii} & [\K_{\CK,\ell-1}]_{ij} \\ [\K_{\CK,\ell-1}]_{ij} & [\K_{\CK,\ell-1}]_{jj} \end{bmatrix}\right).
\end{equation}
The derivation and discussion of the closely related neural tangent kernel (NTK) matrix is given in Section~\ref{subsec:proof_theo_NTK}.

Let Assumptions~\ref{ass:high-dimen}--\ref{ass:activation} hold, and let $\tau_0, \tau_1, \ldots, \tau_L \geq 0$ be a sequence of non-negative numbers recursively defined via
\begin{equation}
    \tau_\ell = \sqrt{  \EE[ \sigma_\ell^2 (\tau_{\ell-1} \xi) ] },
\end{equation}
as in \eqref{eq:def_tau}. 
Further assume that the activation functions $\sigma_\ell(\cdot)$s are ``centered'' such that $\EE[\sigma_\ell(\tau_{\ell-1} \xi)] = 0$.
Then, to prove Theorem~\ref{theo:CK} it suffices to show that,
\begin{itemize}
    \item[(i)]
        the CK matrix $\K_{\CK,\ell}$ of layer $\ell \in \{0, 1, \ldots, L\}$ defined in \eqref{eq:K_CK_relation_proof} satisfies
        \begin{equation}
            \| \K_{\CK,\ell} - \tilde \K_{\CK,\ell} \| \to 0,
        \end{equation}
        almost surely as $n,p \to \infty$, with $\tilde \K_{\CK,\ell}$ taking the generic form of
        \begin{equation}
            \tilde \K_{\CK,\ell} \equiv \alpha_{\ell,1} \X^\T \X + \V \A_\ell \V^\T + \alpha_{\ell,0} \I_n, \quad \alpha_{\ell,0} = \tau_\ell^2 - \tau_0^2 \alpha_{\ell,1},
        \end{equation}
        for all $\ell \in \{1 \, \ldots, L \}$, $\J = [\mathbf{j}_1, \ldots, \mathbf{j}_K] \in \RR^{n \times K}$,  second-order data fluctuation vector $\bpsi = \{ \| \x_i - \EE[\x_i] \|^2 -  \EE[ \|\x_i - \EE[\x_i] \|^2 ] \}_{i=1}^n \in \RR^n$, second-order discriminative statistics $\bt = \{ \tr \C_a^\circ/\sqrt p \}_{a=1}^K \in \RR^K$, $\bT = \{ \tr \C_a \C_b/p \}_{a,b=1}^K \in \RR^{K \times K}$, and
        \begin{equation}
            \V = [\J/\sqrt p,~\bpsi] \in \RR^{n \times (K+1)}, \quad \A_\ell = \begin{bmatrix} \alpha_{\ell,2} \bt \bt^\T + \alpha_{\ell,3} \bT & \alpha_{\ell,2}\bt \\ \alpha_{\ell,2}\bt^\T & \alpha_{\ell,2} \end{bmatrix} \in \RR^{(K+1)  \times (K+1)};
        \end{equation}
    \item[(ii)]
        the coefficients $\alpha_{\ell,1}, \alpha_{\ell,2}, \alpha_{\ell,3} $ are non-negative and satisfy the following recursive relations
        \begin{align*}
            \alpha_{\ell,1} & = \EE [ \sigma_\ell'( \tau_{\ell-1} \xi)]^2 \alpha_{\ell-1,1}, \quad \alpha_{\ell,2} = \EE [ \sigma_\ell'( \tau_{\ell-1} \xi)]^2 \alpha_{\ell-1,2} + \frac14 \EE [ \sigma_\ell''( \tau_{\ell-1} \xi)]^2 \alpha_{\ell-1,4}^2, 
            \\
            \alpha_{\ell,3} & = \EE [ \sigma_\ell'( \tau_{\ell-1} \xi)]^2 \alpha_{\ell-1,3} + \frac12 \EE [ \sigma_\ell''( \tau_{\ell-1} \xi)]^2 \alpha_{\ell-1,1}^2.                                                                                      
        \end{align*}
        with $\alpha_{\ell,4} = \alpha_{\ell-1,4} \EE \left[ (\sigma_\ell'(\tau_{\ell - 1} \xi) )^2 + \sigma_\ell(\tau_{\ell - 1} \xi) \sigma_\ell''(\tau_{\ell - 1} \xi) \right]$ for $\xi \sim \NN(0,1)$.
\end{itemize}

We first introduce the following notations that will be consistently used in the proof: for $\x_i, \x_j \in \RR^p$ with $i \neq j$, let
\begin{equation}\label{eq:xi}
    \x_i = \bmu_i/\sqrt{p} + \z_i/\sqrt{p}, \quad \x_j = \bmu_j/\sqrt{p} + \z_j/\sqrt{p},
\end{equation}
so that $\z_i \sim \NN(\zo, \C_i)$, $\z_j \sim \NN(\zo, \C_j)$, and
\begin{align*}
    \x_i^\T \x_j &=  \underbrace{\frac1p \z_i^\T \z_j}_{O(p^{-1/2})} + \underbrace{\frac1p \bmu_i^\T \bmu_j + \frac1p (\bmu_i^\T \z_j + \bmu_j^\T \z_i)}_{O(p^{-1})}, \\
    t_i    & \equiv \frac1p \tr \C_i^\circ = O(p^{-1/2}), \quad \psi_i = \frac1p \| \z_i \|^2 - \frac1p \tr \C_i = O(p^{-1/2}),                                      \\
    \tau_0 & \equiv \sqrt{\frac1p \tr \C^\circ} = O(1),                                                                                                              \\
    \chi_i & \equiv  \underbrace{t_i + \psi_i}_{O(p^{-1/2})} + \underbrace{\| \bmu_i \|^2/p + 2 \bmu_i^\T \z_i/p}_{O(p^{-1})} = \| \x_i \|^2 - {\tau_0}^{2},
\end{align*}
where we note that the notations $\tau_0, \psi_i$ and $t_i$ (with a slight abuse of notation to denote $\C_i = \C_a$ for $\x_i \in \mathcal C_a$) are in line with those defined in Assumptions~\ref{ass:high-dimen}~and Theorem~\ref{theo:CK}, and we denote $S_{ij}$ \emph{any} term of the form
\begin{equation}\label{eq:def_Sij}
    S_{ij} \equiv S_{ij}(\gamma_1, \gamma_2) = \frac1p \z_i^\T \z_j \left(  \gamma_1 (t_i + \psi_i) + \gamma_2 (t_j + \psi_j) \right),
\end{equation}
for random \emph{or} deterministic scalars $ \gamma_1, \gamma_2 = O(1)$ (with high probability when being random). 

\begin{Remark}\label{rem:S}\normalfont
The introduction of $S_{ij}$ will, as we shall see below, greatly simplify our analysis.
Notably, we have $S_{ij} = O(p^{-1})$ and more importantly, it leads to, in matrix form, a matrix of spectral norm order $O(p^{-1/2})$, see~\cite{couillet2016kernel}.
This (somewhat counterintuitive) spectral norm result will be consistently exploited in the remainder of the proof.
\end{Remark}

To prove the above results, it suffices to establish the following entry-wise approximation of both the diagonal and non-diagonal entries of $\K_{\CK,\ell}$, the proof of which is given in Section~\ref{subsubsec:proof_of_lem_entry-wise-approx-CK-center} below.
\begin{Lemma}\label{lem:entry-wise-approx-CK-center}
Under the assumptions and notations of Theorem~\ref{theo:CK}, for centered activation such that $\EE[\sigma_\ell(\tau_{\ell-1} \xi)] = 0$ with $\xi \sim \NN(0,1)$ as in the statement of Theorem~\ref{theo:CK}, we have, for all $\ell \in \{0, \ldots, L \}$ and $\K_{\CK,\ell}$ defined in \eqref{eq:K_CK_relation_proof} that  uniformly over all $i \neq j \in \{1, \ldots, n\}$,
\begin{equation}\label{eq:K_CK_ij_center}
    [\K_{\CK,\ell}]_{ij} = \alpha_{\ell,1} \x_i^\T \x_j + \alpha_{\ell,2} (t_i + \psi_i) (t_j + \psi_j) + \alpha_{\ell,3} \left(\frac1p \z_i^\T \z_j \right)^2 + S_{ij} + O(p^{-3/2}),
\end{equation}
and
\begin{equation}\label{eq:K_CK_ii_center}
    [\K_{\CK,\ell}]_{ii} = \tau_{\ell}^2 + \alpha_{\ell-1,4} \chi_i + \alpha_{\ell,5} (t_i + \psi_i)^2 + O(p^{-3/2}),
\end{equation}
for $\tau_\ell = \sqrt{ \EE[\sigma_\ell^2(\tau_{\ell - 1} \xi)] }$ and non-negative scalars $\alpha_{\ell,1}, \alpha_{\ell,2}, \alpha_{\ell,3}, \alpha_{\ell,4}, \alpha_{\ell,5}$ satisfy the following recursion
\begin{align*}
  \alpha_{\ell,1} & = \EE [ \sigma_\ell'( \tau_{\ell-1} \xi)]^2 \alpha_{\ell-1,1}, \quad \alpha_{\ell,2} = \EE [ \sigma_\ell'( \tau_{\ell-1} \xi)]^2 \alpha_{\ell-1,2} + \frac14 \EE [ \sigma_\ell''( \tau_{\ell-1} \xi)]^2 \alpha_{\ell-1,4}^2, \\
    \alpha_{\ell,3} & = \EE [ \sigma_\ell'( \tau_{\ell-1} \xi)]^2 \alpha_{\ell-1,3} + \frac12 \EE [ \sigma_\ell''( \tau_{\ell-1} \xi)]^2 \alpha_{\ell-1,1}^2, \\
    \alpha_{\ell,4} &= \alpha_{\ell-1,4} \EE \left[ (\sigma_\ell'(\tau_{\ell - 1} \xi) )^2 + \sigma_\ell(\tau_{\ell - 1} \xi) \sigma_\ell''(\tau_{\ell - 1} \xi) \right],                 \\
    \alpha_{\ell,5} & = \alpha_{\ell-1,5} \EE \left[ (\sigma_\ell'(\tau_{\ell - 1} \xi) )^2 + \sigma_\ell(\tau_{\ell - 1} \xi) \sigma_\ell''(\tau_{\ell - 1} \xi) \right]                                                                                            \\
                    &\quad + \frac{ \alpha_{\ell-1,4}^2 }4 \EE\left[  \sigma_\ell(\tau_{\ell - 1} \xi) \sigma_\ell''''(\tau_{\ell - 1} \xi) + 4 \sigma_\ell'(\tau_{\ell - 1} \xi) \sigma_\ell'''(\tau_{\ell - 1} \xi) + 3 (\sigma_\ell''(\tau_{\ell - 1} \xi))^2 \right],
\end{align*}
with initializations
\begin{equation}
  \alpha_{0,1} = 1, \quad \alpha_{0,2}= 0, \quad \alpha_{0,3} = 0, \quad \alpha_{0,4} = 1, \quad \alpha_{0,5} = 0.
\end{equation}
\end{Lemma}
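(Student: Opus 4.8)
The plan is to prove Lemma~\ref{lem:entry-wise-approx-CK-center} by induction on the layer index $\ell$. The base case $\ell = 0$ is immediate: $\K_{\CK,0} = \X^\T\X$ has off-diagonal entries $\x_i^\T\x_j$ and diagonal entries $\|\x_i\|^2 = \tau_0^2 + \chi_i$, which matches the claimed form with an empty $S_{ij}$ term and the stated initializations. For the inductive step, I would assume the two expansions hold at layer $\ell-1$ and substitute them into the CK recursion \eqref{eq:K_CK_relation_proof}, $[\K_{\CK,\ell}]_{ij} = \EE_{u,v}[\sigma_\ell(u)\sigma_\ell(v)]$, with $(u,v)$ a centered Gaussian whose covariance is the $2\times 2$ principal submatrix of $\K_{\CK,\ell-1}$ on $\{i,j\}$. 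The crucial structural input is that this covariance is a small perturbation of $\tau_{\ell-1}^2\I_2$: by the induction hypothesis $a \equiv [\K_{\CK,\ell-1}]_{ii} = \tau_{\ell-1}^2 + \delta_i$ and $b \equiv [\K_{\CK,\ell-1}]_{jj} = \tau_{\ell-1}^2 + \delta_j$ with $\delta_i, \delta_j = O(p^{-1/2})$ (driven by $\chi_i = t_i + \psi_i + O(p^{-1})$, with $\psi_i$ concentrating by Hanson--Wright), while $c \equiv [\K_{\CK,\ell-1}]_{ij} = O(p^{-1/2})$, all uniformly over $i \neq j$ on a high-probability event obtained from a union bound over the $O(n^2)$ pairs.

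On that event I would parametrize $(u,v)$ through independent standard normals, $u = \sqrt a\,\xi$ and $v = (c/\sqrt a)\,\xi + \sqrt{b - c^2/a}\,\eta$, and Taylor-expand $\sigma_\ell(u)$ around $\tau_{\ell-1}\xi$ and $\sigma_\ell(v)$ around $\tau_{\ell-1}\eta$ in the $O(p^{-1/2})$ quantities $\sqrt a - \tau_{\ell-1}$, $\sqrt b - \tau_{\ell-1}$ and $c/\sqrt a$. Taking $\EE$ over $\xi \perp \eta$, the centering hypothesis $\EE[\sigma_\ell(\tau_{\ell-1}\xi)] = 0$ kills the leading term; the surviving $O(p^{-1/2})$ term is $c\,\EE[\sigma_\ell'(\tau_{\ell-1}\xi)]^2$ (a Gaussian integration by parts turns $\EE[\xi\sigma_\ell(\tau_{\ell-1}\xi)]$ into $\tau_{\ell-1}\EE[\sigma_\ell'(\tau_{\ell-1}\xi)]$), and plugging the layer-$(\ell-1)$ expansion of $c$ there yields the $\alpha_{\ell,1}\x_i^\T\x_j$ term together with the propagated $\EE[\sigma_\ell'(\tau_{\ell-1}\xi)]^2\alpha_{\ell-1,2}$ and $\EE[\sigma_\ell'(\tau_{\ell-1}\xi)]^2\alpha_{\ell-1,3}$ contributions. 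The non-$S_{ij}$ part at order $O(p^{-1})$ comes from just two further terms: the product of the two first-order-in-$\delta$ pieces, $\tfrac14\delta_i\delta_j\,\EE[\sigma_\ell''(\tau_{\ell-1}\xi)]^2$, giving the $\alpha_{\ell-1,4}^2$ correction to $\alpha_{\ell,2}$ via $\delta_i\delta_j = \alpha_{\ell-1,4}^2(t_i+\psi_i)(t_j+\psi_j) + O(p^{-3/2})$; and the second-order-in-$c$ piece, $\tfrac12 c^2\,\EE[\sigma_\ell''(\tau_{\ell-1}\xi)]^2$, giving the $\tfrac12\EE[\sigma_\ell''(\tau_{\ell-1}\xi)]^2\alpha_{\ell-1,1}^2$ term in $\alpha_{\ell,3}$ via $c^2 = \alpha_{\ell-1,1}^2(\tfrac1p\z_i^\T\z_j)^2 + O(p^{-3/2})$. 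Every cross term $\delta_i c$ or $\delta_j c$ carries one factor $\tfrac1p\z_i^\T\z_j$ and one factor $t_i+\psi_i$ or $t_j+\psi_j$, hence is of the form \eqref{eq:def_Sij} and is collected into $S_{ij}$; the rest is $O(p^{-3/2})$ once the Taylor remainder is bounded. The diagonal entry $[\K_{\CK,\ell}]_{ii} = \EE[\sigma_\ell^2(\sqrt a\,\xi)]$ is the same computation in one variable, carried to second order in $\delta_i$, where iterating the Gaussian integration by parts on $\EE[\xi^2(\sigma_\ell^2)''(\tau_{\ell-1}\xi)]$ brings in the fourth derivative of $\sigma_\ell$ and reproduces the $\alpha_{\ell,4}$, $\alpha_{\ell,5}$ recursions; non-negativity of $\alpha_{\ell,1}, \alpha_{\ell,2}, \alpha_{\ell,3}$ is then inherited inductively, since each recursion only adds a square-weighted non-negative term.

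The main obstacle I anticipate is the remainder control: showing the discarded higher-order Taylor terms are $O(p^{-3/2})$ \emph{uniformly} over the $O(n^2)$ pairs $(i,j)$. This requires (i) a union bound against the exponential-type concentration of $\psi_i$ and $\tfrac1p\z_i^\T\z_j$, so that $|\delta_i|, |\delta_j|, |c|$ are simultaneously $O(\sqrt{\log n/p})$, and (ii) a quantitative bound on the Taylor remainder of the Gaussian integral, controlling $\EE[\,|\sigma_\ell^{(k)}(\tau_{\ell-1}\xi + \text{small})|\cdot(\text{polynomial in }\xi,\eta)\,]$ for $k \le 4$ --- precisely what Assumption~\ref{ass:activation} (weak differentiability to fourth order with bounded normal moments, equivalently polynomial growth of $|\sigma_\ell|$) supplies, once more via Gaussian integration by parts to move derivatives off $\sigma_\ell$ at perturbed arguments. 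A secondary but essential bookkeeping point is keeping the $O(p^{-1})$ structures $(t_i+\psi_i)(t_j+\psi_j)$ and $(\tfrac1p\z_i^\T\z_j)^2$ explicitly separate from $S_{ij}$: although entry-wise of the same order, the first two carry the $\A_\ell$-structure of Theorem~\ref{theo:CK} and must survive, whereas $S_{ij}$-type matrices and the $O(p^{-3/2})$ remainder both have $O(p^{-1/2})$ spectral norm and are discarded only when this entry-wise lemma is later upgraded to the operator-norm statement of Theorem~\ref{theo:CK}.
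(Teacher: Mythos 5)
Your proposal is correct and follows essentially the same route as the paper's proof: induction on $\ell$, the Gram--Schmidt parametrization $u=\sqrt a\,\xi$, $v=(c/\sqrt a)\xi+\sqrt{b-c^2/a}\,\eta$, Taylor expansion of $\sigma_\ell$ around $\tau_{\ell-1}\xi$ and $\tau_{\ell-1}\eta$, Gaussian integration by parts, the centering hypothesis to kill the leading term, and the same identification of the surviving $O(p^{-1})$ structures versus the $S_{ij}$-type cross terms. Your explicit attention to uniformity over the $O(n^2)$ pairs and to Taylor-remainder control is in fact more careful than what the paper writes out.
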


By Lemma~\ref{lem:entry-wise-approx-CK-center}, we have in particular
\begin{equation}
    [\K_{\CK,\ell}]_{ii} = \tau_\ell^2 + O(p^{-1/2}),
\end{equation}
so that in matrix form (by using the fact that $\| \A \| \leq n \| \A \|_{\max} $ for $\A \in \RR^{n \times n}$ with $\| \A \|_{\max} = \max_{ij} |\A|_{ij}$ and $\{  S_{ij} \}_{i,j} = O_{\| \cdot \|}(p^{-\frac12})$, see again Remark~\ref{rem:S}, as well as $\{ (\z_i^\T \z_j/p )^2 \}_{i \neq j=1}^n = \frac1p \J \mathbf{T} \J^\T + O_{\| \cdot \|}(p^{-1/2})$, see \cite{couillet2016kernel,kammounCovarianceDiscriminativePower2023}):
\begin{equation}
    \K_{\CK,\ell} = \alpha_{\ell,1} \X^\T \X + \V \A_\ell \V^\T + (  \tau_\ell^2 - \tau_0^2 \alpha_{\ell,1} ) \I_n + O_{\| \cdot \|}(p^{-1/2}),
\end{equation}
where $O_{\| \cdot \|}(p^{-1/2})$ denotes matrices of spectral norm order $O(p^{-1/2})$, with
\begin{equation}
    \V = [\J/\sqrt p,~\bpsi] \in \RR^{n \times (K+1)}, \quad \A_\ell = \begin{bmatrix} \alpha_{\ell,2} \bt \bt^\T + \alpha_{\ell,3} \bT & \alpha_{\ell,2}\bt \\ \alpha_{\ell,2}\bt^\T & \alpha_{\ell,2} \end{bmatrix},
\end{equation}
and
\begin{equation}
    \bT = \left\{ \frac1p \tr \C_a \C_b \right\}_{a,b=1}^K, \quad \bt = \left\{ \frac1{\sqrt p} \tr \C_a^\circ \right\}_{a=1}^K,
\end{equation}
as in the statement of Theorem~\ref{theo:CK}.
This thus concludes the proof of Theorem~\ref{theo:CK}.

\subsubsection{Proof of Lemma~\ref{lem:entry-wise-approx-CK-center}}
\label{subsubsec:proof_of_lem_entry-wise-approx-CK-center}

In the following, we will prove Lemma~\ref{lem:entry-wise-approx-CK-center} by induction on $\ell \in \{ 0, 1, \ldots, L\}$.

We start our mathematical induction by considering $\ell = 0$.
In this case, we have by definition $\K_{\CK,0} = \X^\T \X$ so that
\begin{equation}
    \K_{\CK,0} = \tilde \K_{\CK,0} = \X^\T \X,
\end{equation}
with $\alpha_{0,1} = 1$, $\alpha_{0,2} = 0$, $\alpha_{0,3} = 0$ for non-diagonal entries and $\alpha_{0,4} = 1$, $\alpha_{0,5} = 0$ for diagonal entries.

We then assume the entry-wise approximation in Lemma~\ref{lem:entry-wise-approx-CK-center} holds at layer $\ell-1$ with\footnote{ Recall that the introduction of the term $S_{ij}$ does \emph{not} alter the form of $\K_{\CK}$ or $\tilde \K_{\CK}$ in a spectral norm sense, see again Remark~\ref{rem:S} above. }
\begin{equation}\label{eq:K_CK_ij}
    [\K_{\CK,\ell-1}]_{ij} = \alpha_{\ell-1,1} \x_i^\T \x_j + \alpha_{\ell-1,2} (t_i + \psi_i) (t_j + \psi_j) + \alpha_{\ell-1,3} \left(\frac1p \z_i^\T \z_j \right)^2 + S_{ij} + O(p^{-3/2}),
\end{equation}
for $i \neq j$, and
\begin{equation}\label{eq:K_CK_ii}
    [\K_{\CK,\ell-1}]_{ii} = \tau_{\ell-1}^2 + \alpha_{\ell-1,4} \chi_i + \alpha_{\ell-1,5} (t_i + \psi_i)^2 + O(p^{-3/2}).
\end{equation}
and work on the CK matrix $\K_{\CK,\ell}$ at layer $\ell$.

By definition in \eqref{eq:K_CK_relation_proof}, using the Gram-Schmidt orthogonalization procedure for standard Gaussian random variable as in \cite{fan2020spectra,ali2022random}, we write
\begin{equation}
    u = \sqrt{ [\K_{\CK,\ell-1}]_{ii} } \cdot \xi_i, \quad v = \frac{[\K_{\CK,\ell-1}]_{ij}}{ \sqrt{ [\K_{\CK,\ell-1}]_{ii} } } \cdot \xi_i + \sqrt{ [\K_{\CK,\ell-1}]_{jj} - \frac{[\K_{\CK,\ell-1}]_{ij}^2 }{  [\K_{\CK,\ell-1}]_{ii}  } } \cdot \xi_j,
\end{equation}
for \emph{independent} $\xi_i, \xi_j \sim \NN(0,1)$. 
As such, we have, for layer $\ell$ that
\begin{align}
    [\K_{\CK,\ell}]_{ii} & = \EE \left[ \sigma_\ell^2 \left( \sqrt{ [\K_{\CK,\ell-1}]_{ii} } \cdot \xi_i  \right) \right] \label{eq:def_K_CK_ii_ell}                                                                                                                                              \\
    [\K_{\CK,\ell}]_{ij} & = \EE\left[ \sigma_\ell \left( \sqrt{ [\K_{\CK,\ell-1}]_{ii} } \cdot \xi_i \right) \right. \nonumber                                                                                                                                                                   \\
                         & \left. \times \sigma_\ell \left( \frac{[\K_{\CK,\ell-1}]_{ij}}{ \sqrt{ [\K_{\CK,\ell-1}]_{ii} } } \cdot \xi_i + \sqrt{ [\K_{\CK,\ell-1}]_{jj} - \frac{[\K_{\CK,\ell-1}]_{ij}^2 }{  [\K_{\CK,\ell-1}]_{ii}  } } \cdot \xi_j \right) \right], \label{eq:def_K_CK_ij_ell}
\end{align}
where the expectations are now taken with respect to the \emph{independent} random variables $\xi_i$ and $\xi_j$ (and in fact, conditioned on the random vectors $\x_i$ and $\x_j$).


The objective is then to derive the approximation of $[\K_{\CK,\ell}]_{ij}$ and $[\K_{\CK,\ell}]_{ii}$ at layer $\ell$, both to terms of order $O(p^{-3/2})$, and to subsequently derive the recursive relations between the key coefficients of layer $\ell -1$:
\begin{equation}
    \{ \alpha_{\ell-1,1}, \alpha_{\ell-1,2}, \alpha_{\ell-1,3}, \alpha_{\ell-1,4}, \alpha_{\ell-1,5} \},
\end{equation}
and those of layer $\ell$
\begin{equation}
    \{ \alpha_{\ell,1}, \alpha_{\ell,2}, \alpha_{\ell,3}, \alpha_{\ell,4}, \alpha_{\ell,5} \},
\end{equation}
as given in the statement of Lemma~\ref{lem:entry-wise-approx-CK-center}~and~Theorem~\ref{theo:CK}.


\paragraph{On the diagonal.}
We start with the diagonal entries of $\K_{\CK,\ell}$, which, as per its expression in \eqref{eq:def_K_CK_ii_ell}, depends on the diagonal entries $[\K_{\CK,\ell-1}]_{ii}$ at layer $\ell-1$ as defined in \eqref{eq:K_CK_ii}. 
By Taylor-expanding the nonlinear function $f(t) = \sqrt{t}$ around the leading order term $t \simeq \tau_{\ell-1}^2 = O(1)$, one gets
\begin{align*}
    \sqrt{ [\K_{\CK,\ell-1}]_{ii}} & = \sqrt{ \tau_{\ell-1}^2 + \alpha_{\ell-1,4} \chi_i + \alpha_{\ell-1,5} (t_i + \psi_i)^2 + O(p^{-3/2}) }                                                                                                 \\
                                   & = \tau_{\ell-1} + \frac1{2 \tau_{\ell-1} } \left( \alpha_{\ell-1,4} \chi_i + \alpha_{\ell-1,5} (t_i + \psi_i)^2 \right) - \frac{ \alpha_{\ell-1,4}^2 }{8 \tau_{\ell-1}^3} (t_i + \psi_i)^2 + O(p^{-3/2}) \\
                                   & = \tau_{\ell-1} + \frac1{2 \tau_{\ell-1} } \alpha_{\ell-1,4} \chi_i + \frac{ 4 \tau_{\ell-1}^2 \alpha_{\ell-1,5} - \alpha_{\ell-1,4}^2 }{8 \tau_{\ell-1}^3} (t_i + \psi_i)^2 + O(p^{-3/2}).
\end{align*}
Further Taylor-expand $\sigma_\ell^2(\sqrt{ [\K_{\CK,\ell-1}]_{ii}} \cdot \xi_i) = f(\sqrt{ [\K_{\CK,\ell-1}]_{ii}} \cdot \xi_i)$ around $\tau_{\ell-1} \xi_i$, we get
\begin{align*}
    [\K_{\CK,\ell}]_{ii} & = \EE \left[ \sigma_\ell^2 \left( \sqrt{ [\K_{\CK,\ell-1}]_{ii} } \cdot \xi \right) \right] = \EE \left[ f \left( \sqrt{ [\K_{\CK,\ell-1}]_{ii} } \cdot \xi \right) \right]                                                                        \\
     & = \EE\left[ f(\tau_{\ell - 1} \xi) + f'(\tau_{\ell - 1} \xi) \xi \left( \frac1{2 \tau_{\ell-1} } \alpha_{\ell-1,4} \chi_i + \frac{ 4 \tau_{\ell-1}^2 \alpha_{\ell-1,5} - \alpha_{\ell-1,4}^2 }{8 \tau_{\ell-1}^3} (t_i + \psi_i)^2 \right) \right] \\
     & \quad + \EE \left[ \frac12 f''(\tau_{\ell - 1} \xi) \xi^2 \right] \frac{\alpha_{\ell-1,4}^2 }{4 \tau_{\ell-1}^2 } (t_i + \psi_i)^2 + O(p^{-3/2})                                                                                                   \\
     & = \EE[f(\tau_{\ell - 1} \xi)] + \EE[f''(\tau_{\ell - 1} \xi)] \left( \frac12 \alpha_{\ell-1,4} \chi_i + \frac{ 4 \tau_{\ell-1}^2 \alpha_{\ell-1,5} - \alpha_{\ell-1,4}^2 }{8 \tau_{\ell-1}^2 } (t_i + \psi_i)^2 \right)                            \\
     & \quad + \EE \left[ \frac12 f''(\tau_{\ell - 1} \xi) \xi^2 \right] \frac{\alpha_{\ell-1,4}^2 }{4 \tau_{\ell-1}^2 } (t_i + \psi_i)^2 + O(p^{-3/2})                                                                                                   \\
     & = \EE[f(\tau_{\ell - 1} \xi)] +  \frac{\alpha_{\ell-1,4}}2 \EE[f''(\tau_{\ell - 1} \xi)] \chi_i \\
     &  \quad + \frac{ 4 \alpha_{\ell-1,5}\EE[f''(\tau_{\ell - 1} \xi)] +  \alpha_{\ell-1,4}^2 \EE [ f''''(\tau_{\ell - 1} \xi) ]  }{8 } (t_i + \psi_i)^2 + O(p^{-3/2}),
\end{align*}
where we denote the shortcut $f(x) = \sigma_\ell^2(x)$ and use the facts that
\begin{equation}
    \EE [ f'(\tau_{\ell - 1} \xi) \xi] = \tau_{\ell-1} \EE[f''(\tau_{\ell - 1} \xi)], \quad
    \EE [ f''(\tau_{\ell - 1} \xi) (\xi^2 - 1) ] = \tau_{\ell-1}^2 \EE[f''''(\tau_{\ell - 1} \xi)],
\end{equation}
for $\xi \sim \NN(0,1)$, as a consequence of the Gaussian integration by parts formula.

As a consequence, we obtain the following relation
\begin{align*}
    \tau_\ell       & = \sqrt{ \EE[\sigma_\ell^2(\tau_{\ell - 1} \xi)] }, \quad \alpha_{\ell,4} = \alpha_{\ell-1,4} \EE \left[ (\sigma_\ell'(\tau_{\ell - 1} \xi) )^2 + \sigma_\ell(\tau_{\ell - 1} \xi) \sigma_\ell''(\tau_{\ell - 1} \xi) \right],                 \\
    \alpha_{\ell,5} & = \alpha_{\ell-1,5} \EE \left[ (\sigma_\ell'(\tau_{\ell - 1} \xi) )^2 + \sigma_\ell(\tau_{\ell - 1} \xi) \sigma_\ell''(\tau_{\ell - 1} \xi) \right]                                                                                            \\
                    &\quad + \frac{ \alpha_{\ell-1,4}^2 }4 \EE\left[  \sigma_\ell(\tau_{\ell - 1} \xi) \sigma_\ell''''(\tau_{\ell - 1} \xi) + 4 \sigma_\ell'(\tau_{\ell - 1} \xi) \sigma_\ell'''(\tau_{\ell - 1} \xi) + 3 (\sigma_\ell''(\tau_{\ell - 1} \xi))^2 \right].
\end{align*}
This concludes the proof for the diagonal entries of $\K_{\CK,\ell}$ in Lemma~\ref{lem:entry-wise-approx-CK-center}.

\paragraph{Off the diagonal.}
We now move on to the more involved non-diagonal entries of $\K_{\CK,\ell}$. First note, for $i \neq j$, that
\begin{align*}
    \frac{[\K_{\CK,\ell-1}]_{ij}}{ \sqrt{ [\K_{\CK,\ell-1}]_{ii} } } & = \frac{ [\K_{\CK,\ell-1}]_{ij} }{ \tau_{\ell-1} + \frac1{2 \tau_{\ell-1} } \alpha_{\ell-1,4} \chi_i + \frac{ 4 \tau_{\ell-1}^2 \alpha_{\ell-1,5} - \alpha_{\ell-1,4}^2 }{8 \tau_{\ell-1}^3} (t_i + \psi_i)^2 + O(p^{-3/2}) } \\
     & = [\K_{\CK,\ell-1}]_{ij} \left( \frac1{  \tau_{\ell-1}} - \frac1{ \tau_{\ell-1}^2 } \left(  \frac{\alpha_{\ell-1,4}}{2 \tau_{\ell-1} } (t_i + \psi_i) + O(p^{-1}) \right)  \right) + O(p^{-3/2})                              \\
     & = \frac1{\tau_{\ell-1}} [\K_{\CK,\ell-1}]_{ij} - \frac{\alpha_{\ell-1,4} \alpha_{\ell-1,1} }{2 \tau_{\ell-1}^3 } (t_i + \psi_i) \frac1p \z_i^\T \z_j + O(p^{-3/2})                                                            \\
     & =\frac1{\tau_{\ell-1}} [\K_{\CK,\ell-1}]_{ij} + S_{ij} +  O(p^{-3/2}) = O(p^{-1/2}),
\end{align*}
with
\begin{align*}
    [\K_{\CK,\ell-1}]_{ij} & = \alpha_{\ell-1,1} \x_i^\T \x_j + \alpha_{\ell-1,2} (t_i + \psi_i) (t_j + \psi_j) + \alpha_{\ell-1,3} \left(\frac1p \z_i^\T \z_j \right)^2 + S_{ij} + O(p^{-3/2}) = O(p^{-1/2}),
\end{align*}
as in \eqref{eq:K_CK_ij}, where we recall $S_{ij} = \frac1p \z_i^\T \z_j \left(  \gamma_1 (t_i + \psi_i) + \gamma_2 (t_j + \psi_j) \right) = O(p^{-1})$ and is of vanishing spectral norm, see again Remark~\ref{rem:S}.

Then, it follows that
\begin{align*}
     & \sqrt{ [\K_{\CK,\ell-1}]_{jj} - \frac{[\K_{\CK,\ell-1}]_{ij}^2 }{  [\K_{\CK,\ell-1}]_{ii}  } } \\
     & = \sqrt{ \tau_{\ell-1}^2 + \alpha_{\ell-1,4} \chi_j + \alpha_{\ell-1,5} (t_j + \psi_j)^2 - \frac{ \alpha_{\ell-1,1}^2 }{ \tau_{\ell-1}^2 } \left (\frac1p \z_i^\T \z_j \right)^2 + O(p^{-3/2}) }     \\
     & = \tau_{\ell-1} + \frac1{2 \tau_{\ell-1} } \left( \alpha_{\ell-1,4} \chi_j + \alpha_{\ell-1,5} (t_j + \psi_j)^2 - \frac{ \alpha_{\ell-1,1}^2 }{ \tau_{\ell-1}^2 } \left (\frac1p \z_i^\T \z_j \right)^2 \right) - \frac{ \alpha_{\ell-1,4}^2 (t_j + \psi_j)^2 }{8 \tau_{\ell-1}^3 } + O(p^{-3/2})   \\
     & = \tau_{\ell-1} + \frac1{2 \tau_{\ell-1} } \left( \alpha_{\ell-1,4} \chi_j - \frac{ \alpha_{\ell-1,1}^2 }{ \tau_{\ell-1}^2 } \left (\frac1p \z_i^\T \z_j \right)^2 \right) + \frac{ 4 \tau_{\ell-1}^2 \alpha_{\ell-1,5} - \alpha_{\ell-1,4}^2 }{8 \tau_{\ell-1}^3 } (t_j + \psi_j)^2 + O(p^{-3/2}).
\end{align*}

As a consequence, we get, again by Taylor expansion that
\begin{align*}
     & \sigma_\ell \left( \sqrt{ [\K_{\CK,\ell-1}]_{ii}} \cdot \xi_i \right) \sigma_\ell \left( \frac{[\K_{\CK,\ell-1}]_{ij}}{ \sqrt{ [\K_{\CK,\ell-1}]_{ii} } } \cdot \xi_i + \sqrt{ [\K_{\CK,\ell-1}]_{jj} - \frac{[\K_{\CK,\ell-1}]_{ij}^2 }{  [\K_{\CK,\ell-1}]_{ii}  } } \cdot \xi_j \right)                                 \\
     & = \sigma_\ell \left( \tau_{\ell-1} \xi_i + \frac1{2 \tau_{\ell-1} } \alpha_{\ell-1,4} \chi_i \xi_i + \frac{ 4 \tau_{\ell-1}^2 \alpha_{\ell-1,5} - \alpha_{\ell-1,4}^2 }{8 \tau_{\ell-1}^3} (t_i + \psi_i)^2 \xi_i + O(p^{-3/2}) \right)                                                                                    \\
     & \quad \times \sigma_\ell \left( \frac1{\tau_{\ell-1}} [\K_{\CK,\ell-1}]_{ij} \xi_i + \tau_{\ell-1} \xi_j + \frac1{2 \tau_{\ell-1} } \left( \alpha_{\ell-1,4} \chi_j - \frac{ \alpha_{\ell-1,1}^2 }{ \tau_{\ell-1}^2 } \left (\frac1p \z_i^\T \z_j \right)^2 \right) \xi_j \right.                                                         \\
     & \quad \left.+ \frac{ 4 \tau_{\ell-1}^2 \alpha_{\ell-1,5} - \alpha_{\ell-1,4}^2 }{8 \tau_{\ell-1}^3 } (t_j + \psi_j)^2 \xi_j + O(p^{-3/2})\right)    \\        
    & = \left( \sigma_\ell( \tau_{\ell-1} \xi_i)  +  \sigma_\ell'( \tau_{\ell-1} \xi_i) \xi_i \left(  \frac1{2 \tau_{\ell-1} } \alpha_{\ell-1,4} \chi_i + \frac{ 4 \tau_{\ell-1}^2 \alpha_{\ell-1,5} - \alpha_{\ell-1,4}^2 }{8 \tau_{\ell-1}^3} (t_i + \psi_i)^2 \right) \right.                                                 \\
     & \quad \quad \left. + \sigma_\ell''( \tau_{\ell-1} \xi_i) \xi_i^2 \frac{\alpha_{\ell-1,4}^2 }{ 8\tau_{\ell-1}^2 } (t_i + \psi_i)^2 \right)                                                                                                                                                                                        \\
     & \quad \times \left( \sigma_\ell( \tau_{\ell-1} \xi_j)  +  \sigma_\ell'( \tau_{\ell-1} \xi_j) \left( \frac{ [\K_{\CK,\ell-1}]_{ij} }{\tau_{\ell-1}} \xi_i + \frac1{2 \tau_{\ell-1} } \left( \alpha_{\ell-1,4} \chi_j - \frac{ \alpha_{\ell-1,1}^2 }{ \tau_{\ell-1}^2 } \left (\frac1p \z_i^\T \z_j \right)^2 \right) \xi_j  \right. \right. \\
     & \quad \quad \left. +  \frac{ 4 \tau_{\ell-1}^2 \alpha_{\ell-1,5} - \alpha_{\ell-1,4}^2 }{8 \tau_{\ell-1}^3 } (t_j + \psi_j)^2 \xi_j \right)                                                                                                                                                                 \\
     & \quad \quad \left.+ \frac12 \sigma_\ell''( \tau_{\ell-1} \xi_j)  \left( \frac{ \alpha_{\ell-1,1} \frac1p \z_i^\T \z_j }{\tau_{\ell-1}} \xi_i  + \frac{ \alpha_{\ell-1,4} (t_j + \psi_j) }{2 \tau_{\ell-1}} \xi_j \right)^2  \right) +  O(p^{-3/2})                                                                                                                                                                                                                                                                                                        \\
     & \equiv \left( \sigma_\ell( \tau_{\ell-1} \xi_i) + T_{1,i} + T_{2,i} \right) \left( \sigma_\ell( \tau_{\ell-1} \xi_j) + T_{3,ij} + T_{3,j} + T_{4,ij} + T_{4,j} + S_{ij} \right) + O(p^{-3/2}),
\end{align*}

where we denote the shortcuts:
\begin{align*}
    T_{1,i} & = \sigma_\ell'( \tau_{\ell-1} \xi_i) \xi_i \cdot \frac{\alpha_{\ell-1,4}}{2 \tau_{\ell-1} }  \chi_i = O(p^{-1/2}),                                                                                                                                                                               \\
    T_{2,i} & = \left( \frac{ \alpha_{\ell-1,5} \sigma_\ell'( \tau_{\ell-1} \xi_i) \xi_i  }{ 2 \tau_{\ell-1} } + \alpha_{\ell-1,4}^2 \frac{ \sigma_\ell''( \tau_{\ell-1} \xi_i) \xi_i^2 \tau_{\ell-1} -  \sigma_\ell'( \tau_{\ell-1} \xi_i) \xi_i }{ 8 \tau_{\ell-1}^3 } \right) (t_i + \psi_i)^2 = O(p^{-1}),
\end{align*}
that \emph{only} depend on $\xi_i$; and
\begin{align*}
    T_{3,ij} & = \sigma_\ell'( \tau_{\ell-1} \xi_j) \xi_i \cdot \frac{ [\K_{\CK,\ell-1}]_{ij} }{\tau_{\ell-1}} = O(p^{-1/2}),                                   \\
    T_{4,ij} & = \frac12 \sigma_\ell''( \tau_{\ell-1} \xi_j) \xi_i^2 \cdot \frac{ \alpha_{\ell-1,1}^2 }{\tau_{\ell-1}^2 } \left (\frac1p \z_i^\T \z_j \right)^2 = O(p^{-1}),
\end{align*}
that depend on both $\xi_i$ and $\xi_j$; as well as
\begin{align*}
    T_{3,j} & = \sigma_\ell'( \tau_{\ell-1} \xi_j) \xi_j \cdot \left( \frac{\alpha_{\ell-1,4}}{2 \tau_{\ell-1} }  \chi_j - \frac{ \alpha_{\ell-1,1}^2 }{ 2\tau_{\ell-1}^3 } \left (\frac1p \z_i^\T \z_j \right)^2 + \frac{ 4 \tau_{\ell-1}^2 \alpha_{\ell-1,5} - \alpha_{\ell-1,4}^2 }{8 \tau_{\ell-1}^3 } (t_j + \psi_j)^2 \right)  \\
    & = O(p^{-1/2}), \\ 
    T_{4,j} & = \frac12 \sigma_\ell''( \tau_{\ell-1} \xi_j) \xi_j^2 \cdot \frac{ \alpha_{\ell-1,4}^2 }{4 \tau_{\ell-1}^2 } (t_j + \psi_j)^2 = O(p^{-1}),
\end{align*}
that \emph{only} depend on $\xi_j$, where we particularly note that the cross terms are of the form $S_{ij}$.

As such, we have
\begin{align*}
     & \sigma_\ell \left( \sqrt{ [\K_{\CK,\ell-1}]_{ii}} \cdot \xi_i \right) \sigma_\ell \left( \frac{[\K_{\CK,\ell-1}]_{ij}}{ \sqrt{ [\K_{\CK,\ell-1}]_{ii} } } \cdot \xi_i + \sqrt{ [\K_{\CK,\ell-1}]_{jj} - \frac{[\K_{\CK,\ell-1}]_{ij}^2 }{  [\K_{\CK,\ell-1}]_{ii}  } } \cdot \xi_j \right) \\
     & = \sigma_\ell( \tau_{\ell-1} \xi_i) \sigma_\ell( \tau_{\ell-1} \xi_j) + \sigma_\ell( \tau_{\ell-1} \xi_i) (T_{3,ij} + T_{4,ij}) + \sigma_\ell( \tau_{\ell-1} \xi_i) (T_{3,j} + T_{4,j})                                                                                                    \\
     & \quad + \sigma_\ell( \tau_{\ell-1} \xi_j) (T_{1,i} + T_{2,i}) + T_{1,i} (T_{3,ij} + T_{3,j} ) + S_{ij} + O(p^{-3/2}),
\end{align*}
with in particular
\begin{align*}
    T_{1,i} (T_{3,ij} + T_{3,j} ) & = \sigma_\ell'( \tau_{\ell-1} \xi_i) \xi_i \cdot \frac{\alpha_{\ell-1,4}}{2 \tau_{\ell-1} } (t_i + \psi_i) \cdot \sigma_\ell'( \tau_{\ell-1} \xi_j) \xi_i \frac{ \alpha_{\ell-1,1} \frac1p \z_i^\T \z_j }{\tau_{\ell-1}}                \\
                                  & \quad + \sigma_\ell'( \tau_{\ell-1} \xi_i) \xi_i \cdot \frac{\alpha_{\ell-1,4}}{2 \tau_{\ell-1} } (t_i + \psi_i) \cdot \sigma_\ell'( \tau_{\ell-1} \xi_j) \xi_j \frac{\alpha_{\ell-1,4}}{2 \tau_{\ell-1} } (t_j + \psi_j) + O(p^{-3/2}) \\
                                  & =\sigma_\ell'( \tau_{\ell-1} \xi_i) \xi_i \sigma_\ell'( \tau_{\ell-1} \xi_j) \xi_j \frac{\alpha_{\ell-1,4}^2 }{4 \tau_{\ell-1}^2 } (t_i + \psi_i) (t_j + \psi_j) + S_{ij} + O(p^{-3/2}).
\end{align*}

We thus conclude, for $\EE[\sigma_\ell(\tau_{\ell-1} \xi)] = 0$ with $\xi \sim \NN(0,1)$, that
\begin{align*}
&[\K_{\CK,\ell}]_{ij} \\
& = \EE \left[ \sigma_\ell \left( \sqrt{ [\K_{\CK,\ell-1}]_{ii}} \cdot \xi_i \right) \sigma_\ell \left( \frac{[\K_{\CK,\ell-1}]_{ij}}{ \sqrt{ [\K_{\CK,\ell-1}]_{ii} } } \cdot \xi_i + \sqrt{ [\K_{\CK,\ell-1}]_{jj} - \frac{[\K_{\CK,\ell-1}]_{ij}^2 }{  [\K_{\CK,\ell-1}]_{ii}  } } \cdot \xi_j \right) \right]                             \\
& = \EE[ \sigma_\ell( \tau_{\ell-1} \xi_i) (T_{3,ij} + T_{4,ij}) ] + \EE \left[ \sigma_\ell'( \tau_{\ell-1} \xi_i) \xi_i \sigma_\ell'( \tau_{\ell-1} \xi_j) \xi_j \right] \frac{\alpha_{\ell-1,4}^2 }{4 \tau_{\ell-1}^2 } (t_i + \psi_i) (t_j + \psi_j)                                                                                       \\
& \quad + S_{ij} + O(p^{-3/2})                                                                                                                                                                                                                                                                                                                \\
& =\EE \left[ \sigma_\ell( \tau_{\ell-1} \xi_i) \sigma_\ell'( \tau_{\ell-1} \xi_j) \xi_i \frac{ [\K_{\CK,\ell-1}]_{ij} }{\tau_{\ell-1}} \right] + \frac12 \EE \left[ \sigma_\ell( \tau_{\ell-1} \xi_i) \sigma_\ell''( \tau_{\ell-1} \xi_j) \xi_i^2 \frac{ \alpha_{\ell-1,1}^2 }{\tau_{\ell-1}^2 } \left(\frac1p \z_i^\T \z_j\right)^2 \right] \\
& \quad +\EE [ \sigma_\ell'( \tau_{\ell-1} \xi_i) \xi_i] \EE[\sigma_\ell'( \tau_{\ell-1} \xi_j) \xi_j] \frac{\alpha_{\ell-1,4}^2 }{4 \tau_{\ell-1}^2 } (t_i + \psi_i) (t_j + \psi_j)  + S_{ij} + O(p^{-3/2})                                                                                                                                  \\
& =\EE [ \sigma_\ell'( \tau_{\ell-1} \xi)]^2 [\K_{\CK,\ell-1}]_{ij} + \frac{\alpha_{\ell-1,1}^2}2 \EE [ \sigma_\ell''( \tau_{\ell-1} \xi)]^2 \left(\frac1p \z_i^\T \z_j\right)^2                                                                                                                                                              \\
& \quad + \frac{\alpha_{\ell-1,4}^2 }4 \EE [ \sigma_\ell''( \tau_{\ell-1} \xi)]^2 (t_i + \psi_i) (t_j + \psi_j)  + S_{ij} + O(p^{-3/2}),
\end{align*}
where we used again the fact that
\begin{equation}
    \EE[\xi f(\tau \xi)] = \tau \EE[f'(\tau \xi)], \quad \EE[\xi^2 f(\tau \xi)] = \EE[(\xi^2-1) f(\tau \xi)] = \tau^2 \EE[f''(\tau \xi)],
\end{equation}
for $\EE[f(\tau \xi)] = 0$.

This allows us to conclude that
\begin{align}
    \alpha_{\ell,1} & = \EE [ \sigma_\ell'( \tau_{\ell-1} \xi)]^2 \alpha_{\ell-1,1}, \quad \alpha_{\ell,2} = \EE [ \sigma_\ell'( \tau_{\ell-1} \xi)]^2 \alpha_{\ell-1,2} + \frac14 \EE [ \sigma_\ell''( \tau_{\ell-1} \xi)]^2 \alpha_{\ell-1,4}^2, \\
    \alpha_{\ell,3} & = \EE [ \sigma_\ell'( \tau_{\ell-1} \xi)]^2 \alpha_{\ell-1,3} + \frac12 \EE [ \sigma_\ell''( \tau_{\ell-1} \xi)]^2 \alpha_{\ell-1,1}^2.
\end{align}
This concludes the proof for the diagonal entries of $\K_{\CK,\ell}$ in Lemma~\ref{lem:entry-wise-approx-CK-center}, and thus the proof of Lemma~\ref{lem:entry-wise-approx-CK-center}.

\begin{Lemma}[Consistent estimation of $\tau_0$]\label{lem:estim_tau0}
Let Assumption \ref{ass:high-dimen} hold and let $\tau_0 \equiv \sqrt{\tr \C^\circ/p}$. Then, as $n,p \to \infty$ with $p/n \to c \in (0, \infty)$, we have,
\begin{equation}
\frac{1}{n} \sum_{i=1}^n\left\|\x_i\right\|^2-\tau_{0}^{2} \to 0,
\end{equation}
almost surely.
\end{Lemma}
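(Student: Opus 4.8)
The plan is to recycle the componentwise decomposition of $\|\x_i\|^2$ already set up in the proof of Theorem~\ref{theo:CK}: for $\x_i \in \mathcal C_a$ write $\x_i = \bmu_a/\sqrt p + \z_i/\sqrt p$ with $\z_i \sim \NN(\zo, \C_a)$, so that $\|\x_i\|^2 - \tau_0^2 = \chi_i = t_i + \psi_i + \|\bmu_a\|^2/p + 2\,\bmu_a^\T \z_i/p$, where $t_i = \tr \C_a^\circ/p$ and $\psi_i = \|\z_i\|^2/p - \tr \C_a/p$. Averaging over $i \in \{1,\dots,n\}$, it suffices to show each of the four resulting averages tends to $0$ almost surely. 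Two of them are deterministic: grouping by class and using $\sum_{a=1}^K \tfrac{n_a}{n}\C_a^\circ = \C^\circ - \C^\circ = \zo$ gives $\frac1n\sum_{i}t_i = \frac1p\tr\!\big(\sum_{a}\tfrac{n_a}{n}\C_a^\circ\big) = 0$ exactly, while $\frac1n\sum_i \|\bmu_a\|^2/p = \frac1p\sum_a \tfrac{n_a}{n}\|\bmu_a\|^2 \le \frac1p\max_a\|\bmu_a\|^2 = O(p^{-1}) \to 0$ by Assumption~\ref{ass:high-dimen}(ii) and the fact that $K$ is fixed.

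For the two genuinely random averages I would use independence of the $\x_i$ together with Chebyshev's inequality and the Borel--Cantelli lemma. The $\psi_i$ are independent and zero-mean with $\Var(\psi_i) = 2\tr\C_a^2/p^2 = O(p^{-1})$, since $\tr\C_a^2 = \tr(\C_a\C_a) = O(p)$ by Assumption~\ref{ass:high-dimen}(iii); hence $\Var\big(\frac1n\sum_i\psi_i\big) = \frac1{n^2}\sum_i\Var(\psi_i) = O\big(\tfrac1{np}\big) = O(n^{-2})$ because $p/n \to c \in (0,\infty)$. Likewise $2\bmu_a^\T\z_i/p$ is zero-mean Gaussian with variance $4\bmu_a^\T\C_a\bmu_a/p^2 \le 4\|\C_a\|\|\bmu_a\|^2/p^2 = O(p^{-2})$, so its average has variance $O(n^{-3})$. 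By Chebyshev, $\mathbb{P}\big(|\frac1n\sum_i\psi_i| > \epsilon\big) = O(n^{-2}\epsilon^{-2})$, which is summable in $n$, so Borel--Cantelli gives almost sure convergence to $0$, and the cross term is treated identically. Combining the four pieces yields $\frac1n\sum_{i=1}^n\|\x_i\|^2 - \tau_0^2 \to 0$ almost surely.

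No step here is a genuine obstacle --- the lemma is essentially a law-of-large-numbers bookkeeping argument resting on variance bounds that are immediate from Assumption~\ref{ass:high-dimen}, and it is in the same spirit as the entrywise estimates already established for $\K_{\CK}$. The only mild point of care is that the data set changes with $n$ (a triangular array), so one cannot quote the strong law verbatim; routing through summable tail probabilities and Borel--Cantelli avoids this. If exponential rather than polynomial tail bounds are preferred, one may instead apply the Hanson--Wright inequality to $\z_i = \C_a^{1/2}\mathbf{g}_i$ with $\mathbf{g}_i \sim \NN(\zo,\I_p)$, using $\|\C_a\|_F^2 = \tr\C_a^2 = O(p)$ and $\|\C_a\| = O(1)$, which delivers the same conclusion with room to spare.
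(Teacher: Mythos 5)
Your proof is correct and follows essentially the same route as the paper: the identical decomposition $\|\x_i\|^2 = \|\bmu_a\|^2/p + 2\bmu_a^\T\z_i/p + \tr\C^\circ/p + t_i + \psi_i$, with the deterministic terms bounded (or, in your case, cancelled exactly for $\frac1n\sum_i t_i$) and the random terms controlled by Chebyshev plus Borel--Cantelli. The only difference is that the paper invokes the strong law of large numbers for the $\frac1p\|\z_i\|^2$ term while you run the same Chebyshev--Borel--Cantelli argument on the centered $\psi_i$ with the variance bound $2\tr\C_a^2/p^2 = O(p^{-1})$; your variant is, if anything, slightly more careful given that the data form a triangular array.
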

\begin{proof}[Proof of Lemma~\ref{lem:estim_tau0}] 
It follows from \eqref{eq:xi} that
\begin{equation}\label{eq:tau_estimation}
  \frac{1}{n} \sum_{i=1}^n\left\|\x_i\right\|^2=\frac{1}{n} \sum_{a=1}^K \sum_{i=1}^{n_a} \left( \frac{1}{p}\left\|\bmu_a\right\|^2-\frac{2}{p} \bmu_a^{\T} \z_i + \frac{1}{p} \left\|\z_i\right\|^2\right) .
\end{equation}
From Assumption \ref{ass:high-dimen}, we have $\| \bmu_a \| = O(1)$ so that $\frac{1}{n} \sum_{a=1}^K \sum_{i=1}^{n_a} \frac{1}{p}\left\|\bmu_a\right\|^2=O\left(p^{-1}\right)$. 
Since $\EE[\z_i] = \zo$, the second term $\frac{2}{p} \bmu_a^{\T} \z_i$ of \eqref{eq:tau_estimation} is a weighted sum of independent zero mean random variables and vanishes with probability one as $n, p \rightarrow \infty$ by a mere application of Chebyshev's inequality and the Borel--Cantelli lemma. 
Finally, using the strong law of large numbers on the third term of equation \eqref{eq:tau_estimation}, we have almost surely,
\begin{equation}
\frac{1}{n} \frac{1}{p} \sum_{a=1}^K \sum_{i=1}^{n_a} \left\|\z_i\right\|^2 = \frac{1}{p} \sum_{a=1}^K \frac{n_a}{n} \tr \C_a+o(1) = \tr \C^\circ/p + o(1),
\end{equation}
where in the last line we use $\tr \C_a^{\circ}=O(\sqrt{p})$ from Assumption~\ref{ass:high-dimen}, and thus $\frac{1}{n} \sum_{i=1}^n\left\|\z_i\right\|^2-\tau_{0}^{2} \rightarrow 0$ almost surely. 
This concludes the proof of Lemma~\ref{lem:estim_tau0}.
\end{proof}

\subsection{ Proof of Theorem~\ref{theo:NTK} }
\label{subsec:proof_theo_NTK}


In this section, we provide detailed proof of Theorem~\ref{theo:NTK}.
We follows the same notations and working assumptions as in the proof of Theorem~\ref{theo:CK} in Appendix~\ref{subsec:proof_theo_CK}.


As already mentioned in \eqref{eq:K_NTK_relation}, the NTK matrices $\K_{\NTK,\ell}$ of layer $\ell$ can be defined, again in an iterative manner, via the CK matrices $\K_{\CK,\ell}$ and $\K'_{\CK,\ell}$ as follows \cite{jacot2018neural}:
\begin{align*}
    \K_{\NTK, 0}    & = \K_{\CK, 0} = \X^\T \X,                                \\
    \K_{\NTK, \ell} & = \K_{\CK,\ell} + \K_{\NTK, \ell-1} \circ \K'_{\CK,\ell}.
\end{align*}
where `$\A \circ \B$' denotes the Hadamard product between two matrices $\A,\B$, and $\K'_{\CK,\ell} \in \RR^{n \times n}$ denotes a CK matrix with nonlinear activation $\sigma_\ell'(\cdot)$ instead of $\sigma_\ell(\cdot)$ as for $\K_{\CK,\ell}$ in \eqref{eq:def_K_CK_K_NTK}, that is
\begin{equation}
  \label{eq:def_K'}
    [\K'_{\CK,\ell}]_{ij} = \EE_{u,v} [\sigma'_\ell(u) \sigma'_\ell(v)], \quad u,v \sim \NN \left(\zo, \begin{bmatrix} [\K'_{\CK,\ell-1}]_{ii} & [\K'_{\CK,\ell-1}]_{ij} \\ [\K'_{\CK,\ell-1}]_{ij} & [\K'_{\CK,\ell-1}]_{jj} \end{bmatrix}\right).
\end{equation}

To prove Theorem~\ref{theo:NTK}, one may want to apply directly Lemma~\ref{lem:entry-wise-approx-CK-center} to obtain an entry-wise approximation of $\K'_{\CK,\ell}$, with activation $\sigma'(\cdot)$ instead of $\sigma(\cdot)$.
This approach works only if one assumes both $\EE[\sigma_\ell(\tau_{\ell-1} \xi)] = 0$ and $\EE[\sigma_\ell'(\tau_{\ell-1} \xi)] = 0$ hold true for $\xi \sim \NN(0,1)$, which is too restrictive.

Instead, we will prove the following entry-wise approximation result on $\K'_{\CK,\ell}$, by assuming $\EE[\sigma_\ell(\tau_{\ell-1} \xi)] = 0$ only.

\begin{Lemma}
\label{lem:entry-wise-approx-CK'}
Under the assumptions and notations of Theorem~\ref{theo:CK}, for centered activation such that $\EE[\sigma_\ell(\tau_{\ell-1} \xi)] = 0$ with $\xi \sim \NN(0,1)$ as in the statement of Theorem~\ref{theo:CK}, we have, for all $\ell \in \{0, \ldots, L \}$ and $\K'_{\CK,\ell}$ defined in \eqref{eq:def_K'}, that uniformly over all $i \neq j \in \{1, \ldots, n\}$,
\begin{equation}\label{eq:K_CK'_ij}
    [\K_{\CK,\ell}']_{ij} = \dot{\alpha}_{\ell, 0} + \dot{\alpha}_{\ell,1} \x_i^\T \x_j + \dot{\alpha}_{\ell, 2} \left(\chi_i + \chi_j\right) + O(p^{-1}),
\end{equation}
and $[\K_{\CK,\ell}']_{ii} = \dot{\tau}_{\ell}^2 + O(p^{-1/2})$, for $\dot{\tau}_\ell = \sqrt{ \EE[\sigma_\ell'(\tau_{\ell - 1} \xi) \sigma_\ell'(\tau_{\ell - 1} \xi)] }$, $\dot{\alpha}_{\ell,0} = \EE\left[ \sigma_\ell'( \tau_{\ell-1} \xi)\right]^2$, and non-negative scalars $\dot{\alpha}_{\ell,1}, \dot{\alpha}_{\ell,2}$ such that
\begin{align*}
  \dot{\alpha}_{\ell, 1} = \EE\left[\sigma_\ell''( \tau_{\ell-1} \xi)\right]^2 \alpha_{\ell-1, 1}, \quad \dot{\alpha}_{\ell, 2} = \EE\left[\sigma_\ell'( \tau_{\ell-1} \xi)\right]\EE\left[\sigma_\ell'''( \tau_{\ell-1} \xi)\right]
    \frac{\alpha_{\ell-1,4}}{2 },
\end{align*}
with initializations $\dot{\tau}_0 = 0$, $\dot{\alpha}_{0,0} = 0$, $\dot{\alpha}_{0,1} = 1$, and $\dot{\alpha}_{0,2} = 0$.
\end{Lemma}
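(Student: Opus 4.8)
The plan is to reuse, almost verbatim, the machinery from the proof of Lemma~\ref{lem:entry-wise-approx-CK-center}, but applied to the activation $\sigma'_\ell$ in place of $\sigma_\ell$ and to the Gaussian pair $(u,v)$ of \eqref{eq:def_K'}, whose covariance is the \emph{ordinary} previous-layer kernel $\K_{\CK,\ell-1}$ --- for which Lemma~\ref{lem:entry-wise-approx-CK-center} already supplies the entrywise expansion --- and, crucially, \emph{without} imposing $\EE[\sigma'_\ell(\tau_{\ell-1}\xi)]=0$. No fresh induction on $\ell$ is needed beyond the one already performed in Lemma~\ref{lem:entry-wise-approx-CK-center} (consistently, $\dot\alpha_{\ell,1},\dot\alpha_{\ell,2}$ depend on $\alpha_{\ell-1,1},\alpha_{\ell-1,4}$, not on primed quantities); the $\ell=0$ case is simply the convention $[\K'_{\CK,0}]_{ij}=\x_i^\T\x_j$, $[\K'_{\CK,0}]_{ii}=0$ recorded by the stated initializations, compatible with $\K_{\CK,0}=\X^\T\X$ and $\alpha_{0,1}=\alpha_{0,4}=1$. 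Since the target off-diagonal accuracy is only $O(p^{-1})$ --- against the $O(p^{-3/2})$ of Lemma~\ref{lem:entry-wise-approx-CK-center} --- one may discard the nonlinear second-order pieces from the outset and keep only $[\K_{\CK,\ell-1}]_{ij}=\alpha_{\ell-1,1}\x_i^\T\x_j+O(p^{-1})$ and $[\K_{\CK,\ell-1}]_{ii}=\tau_{\ell-1}^2+\alpha_{\ell-1,4}\chi_i+O(p^{-1})$, which keeps the argument short.

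First I would introduce the Gram--Schmidt decomposition $u=\sqrt{[\K_{\CK,\ell-1}]_{ii}}\,\xi_i$ and $v=\frac{[\K_{\CK,\ell-1}]_{ij}}{\sqrt{[\K_{\CK,\ell-1}]_{ii}}}\,\xi_i+\sqrt{[\K_{\CK,\ell-1}]_{jj}-[\K_{\CK,\ell-1}]_{ij}^2/[\K_{\CK,\ell-1}]_{ii}}\,\xi_j$ with $\xi_i,\xi_j$ i.i.d.\ $\NN(0,1)$, and Taylor-expand the square roots about $\tau_{\ell-1}^2$, obtaining $u=\tau_{\ell-1}\xi_i+\frac{\alpha_{\ell-1,4}}{2\tau_{\ell-1}}\chi_i\xi_i+O(p^{-1})$ and $v=\tau_{\ell-1}\xi_j+\frac1{\tau_{\ell-1}}[\K_{\CK,\ell-1}]_{ij}\,\xi_i+\frac{\alpha_{\ell-1,4}}{2\tau_{\ell-1}}\chi_j\xi_j+O(p^{-1})$ (the defect $[\K_{\CK,\ell-1}]_{ij}^2/[\K_{\CK,\ell-1}]_{ii}$ is $O(p^{-1})$ and is absorbed, as is the correction to the denominator of the $\xi_i$-coefficient of $v$). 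Next I would Taylor-expand $\sigma'_\ell(u)\sigma'_\ell(v)$ to first order about $(\tau_{\ell-1}\xi_i,\tau_{\ell-1}\xi_j)$, keep only the $O(1)$ and $O(p^{-1/2})$ contributions, and take the expectation over the \emph{independent} pair $(\xi_i,\xi_j)$, using the Gaussian integration-by-parts identity $\EE[\xi\,g(\tau\xi)]=\tau\,\EE[g'(\tau\xi)]$; note that here, unlike in Lemma~\ref{lem:entry-wise-approx-CK-center}, no centering cancels the leading $\EE[\sigma'_\ell(\tau_{\ell-1}\xi)]^2$.

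The resulting terms then fall exactly into the claimed shape. The zeroth-order term is $\EE[\sigma'_\ell(\tau_{\ell-1}\xi_i)]\,\EE[\sigma'_\ell(\tau_{\ell-1}\xi_j)]=\EE[\sigma'_\ell(\tau_{\ell-1}\xi)]^2=\dot\alpha_{\ell,0}$, the new leading constant. The cross term $\sigma'_\ell(\tau_{\ell-1}\xi_i)\,\sigma''_\ell(\tau_{\ell-1}\xi_j)\,\frac1{\tau_{\ell-1}}[\K_{\CK,\ell-1}]_{ij}\,\xi_i$, after one integration by parts in $\xi_i$, gives $\EE[\sigma''_\ell(\tau_{\ell-1}\xi)]^2\,[\K_{\CK,\ell-1}]_{ij}=\EE[\sigma''_\ell(\tau_{\ell-1}\xi)]^2\alpha_{\ell-1,1}\x_i^\T\x_j+O(p^{-1})=\dot\alpha_{\ell,1}\x_i^\T\x_j+O(p^{-1})$; and the two $\chi$-corrections (from $u$ via $\chi_i$, from $v$ via $\chi_j$) each, after integration by parts, yield $\frac{\alpha_{\ell-1,4}}{2}\EE[\sigma'_\ell(\tau_{\ell-1}\xi)]\,\EE[\sigma'''_\ell(\tau_{\ell-1}\xi)]$ times $\chi_i$, respectively $\chi_j$, i.e.\ $\dot\alpha_{\ell,2}(\chi_i+\chi_j)$; all remaining products are $O(p^{-1})$, which establishes \eqref{eq:K_CK'_ij}. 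For the diagonal, $[\K'_{\CK,\ell}]_{ii}=\EE[(\sigma'_\ell(\sqrt{[\K_{\CK,\ell-1}]_{ii}}\,\xi))^2]$, and since $\sqrt{[\K_{\CK,\ell-1}]_{ii}}-\tau_{\ell-1}=O(p^{-1/2})$ a single-term Taylor expansion gives $\EE[(\sigma'_\ell(\tau_{\ell-1}\xi))^2]+O(p^{-1/2})=\dot\tau_\ell^2+O(p^{-1/2})$.

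The step needing the most care is the loss of centering: one must correctly identify the extra surviving terms (in particular the bare $\EE[\sigma'_\ell(\tau_{\ell-1}\xi)]$ appearing in $\dot\alpha_{\ell,0}$ and $\dot\alpha_{\ell,2}$, which must not be collapsed further) and not mistakenly carry over cancellations valid only under the stronger centering used in Lemma~\ref{lem:entry-wise-approx-CK-center}. A secondary point is the error budget: the $O(p^{-1})$ entrywise accuracy is deliberately coarse, but it suffices because $\K'_{\CK,\ell}$ enters Theorem~\ref{theo:NTK} only through the Hadamard product $\K_{\NTK,\ell-1}\circ\K'_{\CK,\ell}$, whose factor $\K_{\NTK,\ell-1}$ has off-diagonal entries of size $O(p^{-1/2})$; the $O(p^{-1})$ entrywise error thus becomes $O(p^{-3/2})$ off the diagonal and $O(p^{-1})$ on it, hence $O(p^{-1/2})$ in operator norm --- vanishing, as the induction for Theorem~\ref{theo:NTK} requires.
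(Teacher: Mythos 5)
Your proposal is correct and follows essentially the same route as the paper's proof: Gram--Schmidt decomposition with covariance $\K_{\CK,\ell-1}$, Taylor expansion of $\sigma_\ell'(u)\sigma_\ell'(v)$ about $(\tau_{\ell-1}\xi_i,\tau_{\ell-1}\xi_j)$, and Gaussian integration by parts over the independent pair, with the leading $\EE[\sigma_\ell'(\tau_{\ell-1}\xi)]^2$ surviving because $\sigma_\ell'$ is not centered. Your only deviation — discarding the second-order pieces from the outset because the target accuracy is $O(p^{-1})$ rather than $O(p^{-3/2})$ — is a legitimate simplification of the paper's bookkeeping, and your closing remark on why this coarser budget suffices in the Hadamard product of Theorem~\ref{theo:NTK} matches how the lemma is actually used.
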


The proof of Lemma~\ref{lem:entry-wise-approx-CK'} is built upon Lemma~\ref{lem:entry-wise-approx-CK-center} and given as follows.

\subsubsection{ Proof of Lemma~\ref{lem:entry-wise-approx-CK'} }

By its definition in \eqref{eq:def_K'}, we write again using the Gram-Schmidt orthogonalization procedure for standard Gaussian random variable that
\begin{equation}
  u = \sqrt{ [\K_{\CK,\ell-1}]_{ii} } \cdot \xi_i, \quad v = \frac{[\K_{\CK,\ell-1}]_{ij}}{ \sqrt{ [\K_{\CK,\ell-1}]_{ii} } } \cdot \xi_i + \sqrt{ [\K_{\CK,\ell-1}]_{jj} - \frac{[\K_{\CK,\ell-1}]_{ij}^2 }{  [\K_{\CK,\ell-1}]_{ii}  } } \cdot \xi_j,
\end{equation}
for \emph{independent} $\xi_i, \xi_j \sim \NN(0,1)$.
As such, we have, for layer $\ell$ that
\begin{align}
  [\K'_{\CK,\ell}]_{ii} & = \EE \left[ \sigma_\ell' \left( \sqrt{ [\K_{\CK,\ell-1}]_{ii} } \cdot \xi_i  \right) \sigma_\ell' \left( \sqrt{ [\K_{\CK,\ell-1}]_{ii} } \cdot \xi_i  \right) \right] \label{eq:def_K_CK'_ii_ell}                                                                                                                                              \\
  [\K'_{\CK,\ell}]_{ij} & = \EE\left[ \sigma_\ell' \left( \sqrt{ [\K_{\CK,\ell-1}]_{ii} } \cdot \xi_i \right) \right. \nonumber                                                                                                                                                                   \\
                        & \left. \times \sigma_\ell' \left( \frac{[\K_{\CK,\ell-1}]_{ij}}{ \sqrt{ [\K_{\CK,\ell-1}]_{ii} } } \cdot \xi_i + \sqrt{ [\K_{\CK,\ell-1}]_{jj} - \frac{[\K_{\CK,\ell-1}]_{ij}^2 }{  [\K_{\CK,\ell-1}]_{ii}  } } \cdot \xi_j \right) \right], \label{eq:def_K_CK'_ij_ell}
\end{align}
where the expectations taken with respect to the \emph{independent} random variables $\xi_i$ and $\xi_j$, as in the proof of Lemma~\ref{lem:entry-wise-approx-CK-center}.

It follows from Lemma~\ref{lem:entry-wise-approx-CK-center} that for $\EE[\sigma_\ell(\tau_{\ell-1} \xi)] = 0$, we have, for layer $\ell-1$ that:
\begin{equation}
  [\K_{\CK,\ell-1}]_{ij} = \alpha_{\ell-1,1} \x_i^\T \x_j + \alpha_{\ell-1,2} (t_i + \psi_i) (t_j + \psi_j) + \alpha_{\ell-1,3} \left(\frac1p \z_i^\T \z_j \right)^2 + S_{ij} + O(p^{-3/2}),
\end{equation}
for $i \neq j$, and
\begin{equation}
  [\K_{\CK,\ell-1}]_{ii} = \tau_{\ell-1}^2 + \alpha_{\ell-1,4} \chi_i + \alpha_{\ell-1,5} (t_i + \psi_i)^2 + O(p^{-3/2}),
\end{equation}
for non-negative scalars $\alpha_{\ell,1}, \alpha_{\ell,2}, \alpha_{\ell,3}, \alpha_{\ell,4}, \alpha_{\ell,5}$ defined in Lemma~\ref{lem:entry-wise-approx-CK-center}.

Then, the proof of Lemma~\ref{lem:entry-wise-approx-CK'} is akin to that of Lemma~\ref{lem:entry-wise-approx-CK-center},
with activation $\sigma'$ instead of $\sigma$.

\paragraph{On the diagonal.}
Since
\begin{align*}
  \sqrt{ [\K_{\CK,\ell-1}]_{ii}} & = \sqrt{ \tau_{\ell-1}^2 + \alpha_{\ell-1,4} \chi_i + \alpha_{\ell-1,5} (t_i + \psi_i)^2 + O(p^{-3/2}) }                                                                                                 \\
                                 & = \tau_{\ell-1} + \frac1{2 \tau_{\ell-1} } \left( \alpha_{\ell-1,4} \chi_i + \alpha_{\ell-1,5} (t_i + \psi_i)^2 \right) - \frac{ \alpha_{\ell-1,4}^2 }{8 \tau_{\ell-1}^3} (t_i + \psi_i)^2 + O(p^{-3/2}) \\
                                 & = \tau_{\ell-1} + \frac1{2 \tau_{\ell-1} } \alpha_{\ell-1,4} \chi_i + \frac{ 4 \tau_{\ell-1}^2 \alpha_{\ell-1,5} - \alpha_{\ell-1,4}^2 }{8 \tau_{\ell-1}^3} (t_i + \psi_i)^2 + O(p^{-3/2}),
\end{align*}
and denote the shortcut $f(t) = \left( \sigma'(t) \right)^2$, we get, again by Taylor-expansion that
\begin{align*}
  &[\K_{\CK,\ell}']_{ii} = \EE \left[ f \left( \sqrt{ [\K_{\CK,\ell-1}]_{ii} } \cdot \xi \right) \right]                                                                       \\
                       & = \EE\left[ f(\tau_{\ell - 1} \xi) + f'(\tau_{\ell - 1} \xi) \xi \left( \frac1{2 \tau_{\ell-1} } \alpha_{\ell-1,4} \chi_i + \frac{ 4 \tau_{\ell-1}^2 \alpha_{\ell-1,5} - \alpha_{\ell-1,4}^2 }{8 \tau_{\ell-1}^3} (t_i + \psi_i)^2 \right) \right] \\
                       & \quad + \EE \left[ \frac12 f''(\tau_{\ell - 1} \xi) \xi^2 \right] \frac{\alpha_{\ell-1,4}^2 }{4 \tau_{\ell-1}^2 } (t_i + \psi_i)^2 + O(p^{-3/2})                                                                                                   \\
                       & = \EE[f(\tau_{\ell - 1} \xi)] + \EE[f''(\tau_{\ell - 1} \xi)] \left( \frac12 \alpha_{\ell-1,4} \chi_i + \frac{ 4 \tau_{\ell-1}^2 \alpha_{\ell-1,5} - \alpha_{\ell-1,4}^2 }{8 \tau_{\ell-1}^2 } (t_i + \psi_i)^2 \right)                            \\
                       & \quad + \EE \left[ \frac12 f''(\tau_{\ell - 1} \xi) \xi^2 \right] \frac{\alpha_{\ell-1,4}^2 }{4 \tau_{\ell-1}^2 } (t_i + \psi_i)^2 + O(p^{-3/2})                                                                                                   \\
                       & = \EE[f(\tau_{\ell - 1} \xi)] +  \frac{\alpha_{\ell-1,4}}2 \EE[f''(\tau_{\ell - 1} \xi)] \chi_i                                                                                                                                                    \\
                       & \quad + \frac{ 4 \alpha_{\ell-1,5}\EE[f''(\tau_{\ell - 1} \xi)] +  \alpha_{\ell-1,4}^2 \EE [ f''''(\tau_{\ell - 1} \xi) ]  }{8 } (t_i + \psi_i)^2 + O(p^{-3/2}),
\end{align*}
where we used the facts that
\begin{equation}
  \EE [ f'(\tau_{\ell - 1} \xi) \xi] = \tau_{\ell-1} \EE[f''(\tau_{\ell - 1} \xi)], \quad
  \EE [ f''(\tau_{\ell - 1} \xi) (\xi^2 - 1) ] = \tau_{\ell-1}^2 \EE[f''''(\tau_{\ell - 1} \xi)],
\end{equation}
for $\xi \sim \NN(0,1)$, again as a consequence of the Gaussian integration by parts formula.

This allows us to conclude that the proof of Lemma~\ref{lem:entry-wise-approx-CK'} for the diagonal entries of $\K_{\CK,\ell}'$, with $\dot{\tau}_\ell = \sqrt{ \EE[\sigma_\ell'(\tau_{\ell - 1} \xi) \sigma_\ell'(\tau_{\ell - 1} \xi)]}$.


\paragraph{Off the diagonal.}
We now move on to the non-diagonal entries of $\K_{\CK,\ell}'$. 
First recall from Lemma~\ref{lem:entry-wise-approx-CK-center} that for $i \neq j$, 
\begin{align*}
  \frac{[\K_{\CK,\ell-1}]_{ij}}{ \sqrt{ [\K_{\CK,\ell-1}]_{ii} } } & =\frac1{\tau_{\ell-1}} [\K_{\CK,\ell-1}]_{ij} + S_{ij} +  O(p^{-3/2}) = O(p^{-1/2}),
\end{align*}
with
\begin{align*}
  [\K_{\CK,\ell-1}]_{ij} & = \alpha_{\ell-1,1} \x_i^\T \x_j + \alpha_{\ell-1,2} (t_i + \psi_i) (t_j + \psi_j) + \alpha_{\ell-1,3} \left(\frac1p \z_i^\T \z_j \right)^2 + S_{ij} + O(p^{-3/2}) = O(p^{-1/2}),
\end{align*}
and therefore
\begin{align*}
   & \sqrt{ [\K_{\CK,\ell-1}]_{jj} - \frac{[\K_{\CK,\ell-1}]_{ij}^2 }{  [\K_{\CK,\ell-1}]_{ii}  } }                                                                                                                                                                                       \\
   & = \tau_{\ell-1} + \frac1{2 \tau_{\ell-1} } \left( \alpha_{\ell-1,4} \chi_j - \frac{ \alpha_{\ell-1,1}^2 }{ \tau_{\ell-1}^2 } \left (\frac1p \z_i^\T \z_j \right)^2 \right) + \frac{ 4 \tau_{\ell-1}^2 \alpha_{\ell-1,5} - \alpha_{\ell-1,4}^2 }{8 \tau_{\ell-1}^3 } (t_j + \psi_j)^2 \\
   & \quad + O(p^{-3/2}).
\end{align*}

As a consequence, we get, again by Taylor expansion that
\begin{align*}
   & \sigma_\ell' \left( \sqrt{ [\K_{\CK,\ell-1}]_{ii}} \cdot \xi_i \right) \sigma_\ell' \left( \frac{[\K_{\CK,\ell-1}]_{ij}}{ \sqrt{ [\K_{\CK,\ell-1}]_{ii} } } \cdot \xi_i + \sqrt{ [\K_{\CK,\ell-1}]_{jj} - \frac{[\K_{\CK,\ell-1}]_{ij}^2 }{  [\K_{\CK,\ell-1}]_{ii}  } } \cdot \xi_j \right)                                                 \\
   & = \sigma_\ell' \left( \tau_{\ell-1} \xi_i + \frac1{2 \tau_{\ell-1} } \alpha_{\ell-1,4} \chi_i \xi_i + \frac{ 4 \tau_{\ell-1}^2 \alpha_{\ell-1,5} - \alpha_{\ell-1,4}^2 }{8 \tau_{\ell-1}^3} (t_i + \psi_i)^2 \xi_i + O(p^{-3/2}) \right)                                                                                                     \\
   & \quad \times \sigma_\ell' \left( \frac1{\tau_{\ell-1}} [\K_{\CK,\ell-1}]_{ij} \xi_i + \tau_{\ell-1} \xi_j + \frac1{2 \tau_{\ell-1} } \left( \alpha_{\ell-1,4} \chi_j - \frac{ \alpha_{\ell-1,1}^2 }{ \tau_{\ell-1}^2 } \left (\frac1p \z_i^\T \z_j \right)^2 \right) \xi_j \right.                                                           \\
   & \quad \left.+ \frac{ 4 \tau_{\ell-1}^2 \alpha_{\ell-1,5} - \alpha_{\ell-1,4}^2 }{8 \tau_{\ell-1}^3 } (t_j + \psi_j)^2 \xi_j + O(p^{-3/2})\right)                                                                                                                                                                                             \\
   & = \left( \sigma_\ell'( \tau_{\ell-1} \xi_i)  +  \sigma_\ell''( \tau_{\ell-1} \xi_i) \xi_i \left(  \frac1{2 \tau_{\ell-1} } \alpha_{\ell-1,4} \chi_i + \frac{ 4 \tau_{\ell-1}^2 \alpha_{\ell-1,5} - \alpha_{\ell-1,4}^2 }{8 \tau_{\ell-1}^3} (t_i + \psi_i)^2 \right) \right.                                                                 \\
   & \quad \quad \left. + \sigma_\ell'''( \tau_{\ell-1} \xi_i) \xi_i^2 \frac{\alpha_{\ell-1,4}^2 }{ 8\tau_{\ell-1}^2 } (t_i + \psi_i)^2 \right)                                                                                                                                                                                                   \\
   & \quad \times \left( \sigma_\ell'( \tau_{\ell-1} \xi_j)  +  \sigma_\ell''( \tau_{\ell-1} \xi_j) \left( \frac{ [\K_{\CK,\ell-1}]_{ij} }{\tau_{\ell-1}} \xi_i + \frac1{2 \tau_{\ell-1} } \left( \alpha_{\ell-1,4} \chi_j - \frac{ \alpha_{\ell-1,1}^2 }{ \tau_{\ell-1}^2 } \left (\frac1p \z_i^\T \z_j \right)^2 \right) \xi_j  \right. \right. \\
   & \quad \quad \left. +  \frac{ 4 \tau_{\ell-1}^2 \alpha_{\ell-1,5} - \alpha_{\ell-1,4}^2 }{8 \tau_{\ell-1}^3 } (t_j + \psi_j)^2 \xi_j \right)                                                                                                                                                                                                  \\
   & \quad \quad \left.+ \frac12 \sigma_\ell'''( \tau_{\ell-1} \xi_j)  \left( \frac{ \alpha_{\ell-1,1} \frac1p \z_i^\T \z_j }{\tau_{\ell-1}} \xi_i  + \frac{ \alpha_{\ell-1,4} (t_j + \psi_j) }{2 \tau_{\ell-1}} \xi_j \right)^2  \right) +  O(p^{-3/2})                                                                                          \\
   & \equiv \left( \sigma_\ell'( \tau_{\ell-1} \xi_i) + T_{1,i} + T_{2,i} \right) \left( \sigma_\ell'( \tau_{\ell-1} \xi_j) + T_{3,ij} + T_{3,j} + T_{4,ij} + T_{4,j} + S_{ij} \right) + O(p^{-3/2}),
\end{align*}

where we denote the shortcuts:
\begin{align*}
  T_{1,i} & = \sigma_\ell''( \tau_{\ell-1} \xi_i) \xi_i \cdot \frac{\alpha_{\ell-1,4}}{2 \tau_{\ell-1} }  \chi_i = O(p^{-1/2}),                                                                                                                                                                                 \\
  T_{2,i} & = \left( \frac{ \alpha_{\ell-1,5} \sigma_\ell''( \tau_{\ell-1} \xi_i) \xi_i  }{ 2 \tau_{\ell-1} } + \alpha_{\ell-1,4}^2 \frac{ \sigma_\ell'''( \tau_{\ell-1} \xi_i) \xi_i^2 \tau_{\ell-1} -  \sigma_\ell''( \tau_{\ell-1} \xi_i) \xi_i }{ 8 \tau_{\ell-1}^3 } \right) (t_i + \psi_i)^2 = O(p^{-1}),
\end{align*}
that \emph{only} depend on $\xi_i$; and
\begin{align*}
  T_{3,ij} & = \sigma_\ell''( \tau_{\ell-1} \xi_j) \xi_i \cdot \frac{ [\K_{\CK,\ell-1}]_{ij} }{\tau_{\ell-1}} = O(p^{-1/2}),                                                \\
  T_{4,ij} & = \frac12 \sigma_\ell'''( \tau_{\ell-1} \xi_j) \xi_i^2 \cdot \frac{ \alpha_{\ell-1,1}^2 }{\tau_{\ell-1}^2 } \left (\frac1p \z_i^\T \z_j \right)^2 = O(p^{-1}),
\end{align*}
that depend on both $\xi_i$ and $\xi_j$; and
\begin{align*}
  T_{3,j} & = \sigma_\ell''( \tau_{\ell-1} \xi_j) \xi_j \cdot \left( \frac{\alpha_{\ell-1,4}}{2 \tau_{\ell-1} }  \chi_j - \frac{ \alpha_{\ell-1,1}^2 }{ 2\tau_{\ell-1}^3 } \left (\frac1p \z_i^\T \z_j \right)^2 + \frac{ 4 \tau_{\ell-1}^2 \alpha_{\ell-1,5} - \alpha_{\ell-1,4}^2 }{8 \tau_{\ell-1}^3 } (t_j + \psi_j)^2 \right) \\
          & = O(p^{-1/2}),                                                                                                                                                                                                                                                                                                         \\ 
  T_{4,j} & = \frac12 \sigma_\ell'''( \tau_{\ell-1} \xi_j) \xi_j^2 \cdot \frac{ \alpha_{\ell-1,4}^2 }{4 \tau_{\ell-1}^2 } (t_j + \psi_j)^2 = O(p^{-1}),
\end{align*}
that \emph{only} depend on $\xi_j$, where we particularly note that the cross terms are of the form $S_{ij}$.

As such, we have
\begin{align*}
   & \sigma_\ell' \left( \sqrt{ [\K_{\CK,\ell-1}]_{ii}} \cdot \xi_i \right) \sigma_\ell' \left( \frac{[\K_{\CK,\ell-1}]_{ij}}{ \sqrt{ [\K_{\CK,\ell-1}]_{ii} } } \cdot \xi_i + \sqrt{ [\K_{\CK,\ell-1}]_{jj} - \frac{[\K_{\CK,\ell-1}]_{ij}^2 }{  [\K_{\CK,\ell-1}]_{ii}  } } \cdot \xi_j \right) \\
   & = \sigma_\ell'( \tau_{\ell-1} \xi_i) \sigma_\ell'( \tau_{\ell-1} \xi_j) + \sigma_\ell'( \tau_{\ell-1} \xi_i) (T_{3,ij} + T_{4,ij}) + \sigma_\ell'( \tau_{\ell-1} \xi_i) (T_{3,j} + T_{4,j})                                                                                                  \\
   & \quad + \sigma_\ell'( \tau_{\ell-1} \xi_j) (T_{1,i} + T_{2,i}) + T_{1,i} (T_{3,ij} + T_{3,j} ) + S_{ij} + O(p^{-3/2})
   \\
   & = \sigma_\ell'( \tau_{\ell-1} \xi_i) \sigma_\ell'( \tau_{\ell-1} \xi_j) + \sigma_\ell'( \tau_{\ell-1} \xi_i) (T_{3,ij} + O(p^{-1})) + \sigma_\ell'( \tau_{\ell-1} \xi_i) (T_{3,j} + O(p^{-1}))                                                                                                  \\
   & \quad + \sigma_\ell'( \tau_{\ell-1} \xi_j) (T_{1,i} + O(p^{-1})) + O(p^{-1}) + S_{ij} + O(p^{-3/2})
   \\
   & = \sigma_\ell'( \tau_{\ell-1} \xi_i) \sigma_\ell'( \tau_{\ell-1} \xi_j) + \sigma_\ell'( \tau_{\ell-1} \xi_i) \left(\sigma_\ell''( \tau_{\ell-1} \xi_j) \xi_i \cdot \frac{ [\K_{\CK,\ell-1}]_{ij} }{\tau_{\ell-1}} + O(p^{-1})\right) \\
   & \quad + \sigma_\ell'( \tau_{\ell-1} \xi_i) \left(\sigma_\ell''( \tau_{\ell-1} \xi_j) \xi_j \cdot \left( \frac{\alpha_{\ell-1,4}}{2 \tau_{\ell-1} }  \chi_j + O(p^{-1})\right) + O(p^{-1})\right)                                                                                                  \\
   & \quad + \sigma_\ell'( \tau_{\ell-1} \xi_j) \left(\sigma_\ell''( \tau_{\ell-1} \xi_i) \xi_i \cdot \frac{\alpha_{\ell-1,4}}{2 \tau_{\ell-1} }  \chi_i + O(p^{-1})\right) + O(p^{-1}) + S_{ij} + O(p^{-3/2})
   \\
   & = \sigma_\ell'( \tau_{\ell-1} \xi_i) \sigma_\ell'( \tau_{\ell-1} \xi_j) + \sigma_\ell'( \tau_{\ell-1} \xi_i) \xi_i \sigma_\ell''( \tau_{\ell-1} \xi_j) \cdot \frac{ [\K_{\CK,\ell-1}]_{ij} }{\tau_{\ell-1}} \\
   & \quad + \sigma_\ell'( \tau_{\ell-1} \xi_i) \sigma_\ell''( \tau_{\ell-1} \xi_j) \xi_j \cdot \frac{\alpha_{\ell-1,4}}{2 \tau_{\ell-1} }  \chi_j  +  \sigma_\ell''( \tau_{\ell-1} \xi_i) \xi_i \sigma_\ell'( \tau_{\ell-1} \xi_j) \cdot \frac{\alpha_{\ell-1,4}}{2 \tau_{\ell-1} }  \chi_i + S_{ij}+ O(p^{-1}) 
\end{align*}

This further leads to the following approximation of $[\K'_{\CK,\ell}]_{ij}$ as
\begin{align*}
  &\EE\left[ \sigma_\ell' \left( \sqrt{ [\K_{\CK,\ell-1}]_{ii} } \cdot \xi_i \right) \times  \sigma_\ell' \left( \frac{[\K_{\CK,\ell-1}]_{ij}}{ \sqrt{ [\K_{\CK,\ell-1}]_{ii} } } \cdot \xi_i + \sqrt{ [\K_{\CK,\ell-1}]_{jj} - \frac{[\K_{\CK,\ell-1}]_{ij}^2 }{  [\K_{\CK,\ell-1}]_{ii}  } } \cdot \xi_j \right) \right]\\
  & = \EE \bigg[ \sigma_\ell'( \tau_{\ell-1} \xi_i) \sigma_\ell'( \tau_{\ell-1} \xi_j) + \sigma_\ell'( \tau_{\ell-1} \xi_i) \xi_i \sigma_\ell''( \tau_{\ell-1} \xi_j) \cdot \frac{ [\K_{\CK,\ell-1}]_{ij} }{\tau_{\ell-1}} \\
   & \quad + \sigma_\ell'( \tau_{\ell-1} \xi_i) \sigma_\ell''( \tau_{\ell-1} \xi_j) \xi_j \cdot \frac{\alpha_{\ell-1,4}}{2 \tau_{\ell-1} }  \chi_j  +  \sigma_\ell''( \tau_{\ell-1} \xi_i) \xi_i \sigma_\ell'( \tau_{\ell-1} \xi_j) \cdot \frac{\alpha_{\ell-1,4}}{2 \tau_{\ell-1} }  \chi_i + S_{ij}+ O(p^{-1})\bigg]\\
   & = \EE\left[ \sigma_\ell'( \tau_{\ell-1} \xi)\right]^2 + 
   \EE\left[\sigma_\ell''( \tau_{\ell-1} \xi)\right]^2 \cdot [\K_{\CK,\ell-1}]_{ij}\\
   & \quad + \EE\left[\sigma_\ell'( \tau_{\ell-1} \xi)\right]\EE\left[\sigma_\ell'''( \tau_{\ell-1} \xi)\right]
   \cdot \frac{\alpha_{\ell-1,4}}{2 }  \left(\chi_i + \chi_j \right) + S_{ij} + O(p^{-1})\\
   & = \EE\left[ \sigma_\ell'( \tau_{\ell-1} \xi)\right]^2 + 
   \EE\left[\sigma_\ell''( \tau_{\ell-1} \xi)\right]^2 \cdot \alpha_{\ell-1, 1} \x_i^\T \x_j  + \EE\left[\sigma_\ell'( \tau_{\ell-1} \xi)\right]\EE\left[\sigma_\ell'''( \tau_{\ell-1} \xi)\right]
   \cdot \frac{\alpha_{\ell-1,4}}{2 }  \left(\chi_i + \chi_j \right) + O(p^{-1})\\
   & = \dot{\alpha}_{\ell, 0} + \dot{\alpha}_{\ell,1}  \x_i^\T \x_j + \dot{\alpha}_{\ell, 2}  \left(\chi_i + \chi_j\right) + O(p^{-1})
\end{align*}
with $\dot{\alpha}_{\ell, 0} = \EE\left[ \sigma_\ell'( \tau_{\ell-1} \xi)\right]^2$ and
\begin{align*}
    \dot{\alpha}_{\ell, 1} = \EE\left[\sigma_\ell''( \tau_{\ell-1} \xi)\right]^2 \alpha_{\ell-1, 1}, \quad \dot{\alpha}_{\ell, 2} = \EE\left[\sigma_\ell'( \tau_{\ell-1} \xi)\right]\EE\left[\sigma_\ell'''( \tau_{\ell-1} \xi)\right]
    \frac{\alpha_{\ell-1,4}}{2 },
\end{align*}
where we used again the Gaussian integration by parts formula.
This concludes the proof of Lemma~\ref{lem:entry-wise-approx-CK'}.

\subsubsection{End of proof of Theorem~\ref{theo:NTK}}

With Lemma~\ref{lem:entry-wise-approx-CK'} at hand, it remains to show, as in the proof of Theorem~\ref{theo:CK} in Section~\ref{subsec:proof_theo_CK}, that the on-~and~off-diagonal entries of $\K_{\NTK,\ell}$ satisfy the following relation:
\begin{equation}
  [\K_{\NTK, \ell}]_{ij} = \beta_{\ell,1} \x_i^\T \x_j + \beta_{\ell,2} (t_i + \psi_i) (t_j + \psi_j) + \beta_{\ell,3} \left(\frac1p \z_i^\T \z_j \right)^2 + S_{ij} + O(p^{-3/2}),
\end{equation}
for all $i \neq j$ and
\begin{equation}
  [\K_{\NTK, \ell}]_{ii} = \kappa_{\ell}^2 + O(p^{-1/2}),
\end{equation}
with $\tau_\ell^2 + \kappa_{\ell-1}^2 \dot{\tau}_{\ell}^2$.

We again proof the above approximation by induction on $\ell \in \{1, \ldots, L \}$.
For the case $\ell = 0$ we have $\K_{\NTK,0} = \X^\T \X$ so that
\begin{equation}
  \beta_{\ell,1} = 1, \quad \beta_{\ell,2} = \beta_{\ell,3} =0, \quad \kappa_0 = \tau_0.
\end{equation}

We then assume the induction hypothesis holds for $\ell - 1$, that is
\begin{align*}
    [\K_{\NTK, \ell-1}]_{ij} &= \beta_{\ell-1,1} \x_i^\T \x_j + \beta_{\ell-1,2} (t_i + \psi_i) (t_j + \psi_j) + \beta_{\ell-1,3} \left(\frac1p \z_i^\T \z_j \right)^2 + S_{ij} + O(p^{-3/2}), \\ 
    [\K_{\NTK, \ell-1}]_{ii} &= \kappa_{\ell-1}^2 + O(p^{-1/2}),
\end{align*}
and work on the off-diagonal and then diagonal entries of $\K_{\NTK,\ell}$ at the layer $\ell$ as follows.

\paragraph{Off the diagonal.}

By Lemma~\ref{lem:entry-wise-approx-CK-center}~and~Lemma~\ref{lem:entry-wise-approx-CK'}, we have, for the non-diagonal entries of $\K_{\NTK,\ell}$ with $i \neq j$ that
\begin{align*}
  [\K_{\NTK, \ell}]_{ij} & = [\K_{\CK, \ell}]_{ij} + [\K_{\NTK, \ell-1}]_{ij} [\K'_{\CK, \ell}]_{ij}                                                                                  \\
                         & = \alpha_{\ell,1} \x_i^\T \x_j + \alpha_{\ell,2} (t_i + \psi_i) (t_j + \psi_j)  + \alpha_{\ell,3} \left(\frac1p \z_i^\T \z_j \right)^2                           \\
                         & \quad + \left(\beta_{\ell - 1,1} \x_i^\T \x_j + \beta_{\ell - 1,2} (t_i + \psi_i) (t_j + \psi_j)
  + \beta_{\ell - 1,3} \left(\frac1p \z_i^\T \z_j \right)^2 \right)                                                                                                                   \\
                         & \quad \times \left(\dot{\alpha}_{\ell, 0} + \dot{\alpha}_{\ell,1} \cdot \x_i^\T \x_j + \dot{\alpha}_{\ell, 2} \cdot \left(\chi_i + \chi_j\right) + O(p^{-1}) \right)  + O(p^{-3/2})                                                    \\
                         & = (\alpha_{\ell,1} +\beta_{\ell - 1,1} \cdot \dot{\alpha}_{\ell, 0})  \x_i^\T \x_j + (\alpha_{\ell,2}+\beta_{\ell - 1,2} \cdot \dot{\alpha}_{\ell, 0}) (t_i + \psi_i) (t_j + \psi_j) \\
                         & \quad + (\alpha_{\ell,3} +\beta_{\ell - 1,3} \cdot \dot{\alpha}_{\ell, 0} + \beta_{\ell - 1,1}\cdot  \dot{\alpha}_{\ell,1}) \left(\frac1p \z_i^\T \z_j \right)^2  + S_{ij} + O(p^{-3/2}),
\end{align*}
so that
\begin{align*}
  \beta_{\ell,1} &=\alpha_{\ell,1} +\beta_{\ell - 1,1} \dot{\alpha}_{\ell, 0},\\
  \beta_{\ell,2} &= \alpha_{\ell,2}+\beta_{\ell - 1,2} \dot{\alpha}_{\ell, 0},\\
  \beta_{\ell,3} &= \alpha_{\ell,3}  +\beta_{\ell - 1,3} \dot{\alpha}_{\ell, 0} + \beta_{\ell - 1,1} \dot{\alpha}_{\ell,1},
\end{align*}
with
\begin{align*}
  \dot{\alpha}_{\ell, 0} = \EE\left[ \sigma_\ell'( \tau_{\ell-1} \xi)\right]^2, \quad \dot{\alpha}_{\ell, 1} = \EE\left[\sigma_\ell''( \tau_{\ell-1} \xi)\right]^2 \alpha_{\ell-1, 1},
\end{align*}
as in Lemma~\ref{lem:entry-wise-approx-CK'}.
This concludes the proof of the non-diagonal entries of $\K_{\NTK,\ell}$.

\paragraph{On the diagonal.} 
We now evaluate the diagonal entries of $\K_{\NTK,\ell}$ as
\begin{align*}
  [\K_{\NTK, \ell}]_{ii} & = [\K_{\CK,\ell}]_{ii} + [\K_{\NTK, \ell-1}]_{ii} \cdot [\K'_{\CK, \ell}]_{ii} = \tau_\ell^2 + \kappa_{\ell-1}^2 \dot{\tau}_{\ell}^2 + O(p^{-1/2}),
\end{align*}
as a consequence of Lemma~\ref{lem:entry-wise-approx-CK-center}, Lemma~\ref{lem:entry-wise-approx-CK'}, and the induction hypothesis.
We thus obtain 
\begin{equation}
  \kappa_\ell^2 = \tau_\ell^2 + \kappa_{\ell-1}^2 \dot{\tau}_{\ell}^2 ,
\end{equation}
with $\kappa_0 = \tau_0 =  \sqrt{\tr \C^\circ/p}$ defined in Assumption~\ref{ass:high-dimen}, where we recall $\dot{\tau}_0 = 0$ from Lemma~\ref{lem:entry-wise-approx-CK'}.

\paragraph{Assembling in matrix form.} 
Putting everything together in matrix form, we obtain, as in the proof of Theorem~\ref{theo:CK} in Appendix~\ref{subsec:proof_theo_CK} that
\begin{equation*}
  [\K_{\NTK, \ell}]_{ij} = \beta_{\ell,1} \x_i^\T \x_j + \beta_{\ell,2} (t_i + \psi_i) (t_j + \psi_j) + \beta_{\ell,3} \left(\frac1p \z_i^\T \z_j \right)^2 + S_{ij} + O(p^{-3/2}),
\end{equation*}
and
\begin{equation*}
  [\K_{\NTK, \ell}]_{ii}  = [\K_{\CK,\ell}]_{ii} + [\K_{\NTK, \ell-1}]_{ii} \cdot [\K'_{\CK, \ell}]_{ii} = \tau_\ell^2 + \kappa_{\ell-1}^2 \dot{\tau}_{\ell}^2 + O(p^{-1/2}) \equiv \kappa_\ell^2 + O(p^{-1/2}),
\end{equation*}
so that in matrix form
\begin{equation}
  \K_{\NTK, \ell}  = \beta_{\ell,1} \X^\T \X + \V \B_\ell \V^\T + (  \kappa_\ell^2 - \tau_0^2 \beta_{\ell,1} ) \I_n + O_{\| \cdot \|}(p^{-\frac12}),
\end{equation}
where $O_{\| \cdot \|}(p^{-1/2})$ denotes matrices of spectral norm order $O(p^{-1/2})$ as $n, p \to \infty$, with
\begin{equation}
  \V = [\J/\sqrt p,~\bpsi] \in \RR^{n \times (K+1)}, \quad \B_\ell = \begin{bmatrix} \beta_{\ell,2} \bt \bt^\T + \beta_{\ell,3} \bT & \beta_{\ell,2}\bt \\ \beta_{\ell,2}\bt^\T & \beta_{\ell,2} \end{bmatrix},
\end{equation}
which concludes the proof of Theorem~\ref{theo:NTK}.

\subsection{Two equivalent centering approaches in the single-hidden-layer case}
\label{subsec:equivalent_centering}

In this section, we aim to show that ``centering'' the CK matrices $\K_{\CK}$ by pre-~and~post-multiplying $\P = \I_n -  \one_n \one_n^\T/n$ performed in \cite[Theorem~1]{ali2022random} is \emph{equivalent} to take $\EE[\sigma(\tau_0 \xi)] = 0$ as in our Theorem~\ref{theo:CK} in the single-hidden-layer $\ell=1$ setting, in the sense that one has
\begin{equation}\label{eq:diff_centering}
    \| \P (\K_{\CK,1} - \tilde \K_{\CK,1}) \P \| \to 0,
\end{equation}
almost surely as $n,p \to \infty$, for the \emph{same} $\tilde \K_{\CK,1}$ as defined in Theorem~\ref{theo:CK} and an \emph{arbitrary} choice of $\EE[\sigma(\tau_0 \xi)]$ (so in particular, one may freely take $\EE[\sigma(\tau_0 \xi)] \neq 0$ which is different from the setting of our Theorem~\ref{theo:CK}).
The proof is as follows.

First note that the assumption $\EE[\sigma(\tau_0 \xi)] = 0$ is \emph{only} used for the off-diagonal entries of the CK matrix $\K_{\CK,1}$, so we focus, in the sequel, only on the off-diagonal terms, while the discussions on the on-diagonal entries are the same as in Appendix~\ref{subsec:proof_theo_CK}.

By its definition in \eqref{eq:K_CK_relation} and the fact that $\K_{\CK,0} = \X^\T \X$, one has
\begin{equation}
    [\K_{\CK,1}]_{ij} = \EE_{u,v} [\sigma_1(u) \sigma_1(v)],~\text{with}~u,v \sim \NN \left(\zo, \begin{bmatrix} \| \x_i \|^2 & \x_i^\T \x_j \\ \x_i^\T \x_j & \| \x_j \|^2 \end{bmatrix}\right),
\end{equation}
so by performing a Gram-Schmidt orthogonalization procedure as in the proof of Theorem~\ref{theo:CK} in Appendix~\ref{subsec:proof_theo_CK}, one has
\begin{equation}
    u = \| \x_i \| \cdot \xi_i, \quad v = \| \x_j \| \left( \angle_{ij} \cdot \xi_i + \sqrt{1 - \angle_{ij}^2} \cdot \xi_j \right),
\end{equation}
for two \emph{independent} standard Gaussian random variables $\xi_i$ and $\xi_j$, where we denote the shortcut $\angle_{ij} \equiv \frac{\x_i^\T \x_j}{ \| \x_i \| \cdot \| \x_j \| }$ for the ``angle'' between data vectors $\x_i$ and $\x_j$.

It can be checked, for $\x_i = \bmu_i/\sqrt p + \z_i/\sqrt p$ with $\EE[\z_i] = \zo$ and $\EE[\z_i \z_i^\T] = \C_i$ that
\begin{align*}
    \| \x_i \|^2 & = \frac1p (\bmu_i + \z_i)^\T (\bmu_i + \z_i) = \frac1p \| \bmu_i \|^2 + \frac2p \bmu_i^\T \z_i + \frac1p \z_i^\T \z_i                                                                                                                             \\
                 & = \frac1p \| \bmu_i \|^2 + \underbrace{\frac2p \bmu_i^\T  \z_i}_{O(p^{-1})} + \underbrace{\frac1p \tr \C^\circ}_{\equiv \tau_0^2 = O(1)} + \underbrace{\frac1p \tr \C^\circ_i}_{ \equiv t_i = O(p^{-1/2}) } + \underbrace{ \psi_i }_{O(p^{-1/2})},
\end{align*}
where we recall the definition $\psi_i \equiv \frac1p \| \z_i \|^2 - \frac1p \tr \C_i = O(p^{-1/2})$.
As such, by Taylor-expanding $\sqrt{\| \x_i \|^2}$ around $\| \x_i \|^2 \simeq \tau_0^2=O(1)$, we get
\begin{align*}
    \| \x_i \| & = \tau_0 + \frac1{2 \tau_0 } (\| \bmu_i \|^2/p + 2 \bmu_i^\T \z_i/p + t_i + \psi_i) - \frac1{8 \tau_0^3} (t_i + \psi_i)^2 + O(p^{-3/2}) \\
               & \equiv \tau_0 + \theta_i + O(p^{-3/2}),
\end{align*}
where we denote the shortcut
\begin{equation}
    \theta_i \equiv \frac1{2 \tau_0 } (\| \bmu_i \|^2/p + 2 \bmu_i^\T \z_i/p + t_i + \psi_i) - \frac1{8 \tau_0^3} (t_i + \psi_i)^2 = O(p^{-1/2}),
\end{equation}
so that
\begin{align*}
    \| \x_j \|\angle_{ij} & =  \frac{ \frac1p (\bmu_i + \z_i)^\T (\bmu_j + \z_j) }{ \| \x_i \|   }                                                                                                                                  \\
                          & = \frac{ \frac1p \bmu_i^\T \bmu_j + \frac1p (\bmu_i^\T \z_j + \bmu_j^\T \z_i) + \frac1p \z_i^\T \z_j }{ \tau_0 + \frac{1}{2\tau_0} (  t_i + \psi_i) + O(p^{-1})  } + O(p^{-3/2})                        \\
                          & = \left( \frac{1}{\tau_0} - \frac{ t_i + \psi_i}{2\tau_0^3} + O(p^{-1}) \right) \left(\frac1p \bmu_i^\T \bmu_j + \frac1p (\bmu_i^\T \z_j + \bmu_j^\T \z_i) + \frac1p \z_i^\T \z_j \right) + O(p^{-3/2}) \\
                          & = \frac{1}{\tau_0} \left(\frac1p \z_i^\T \z_j + \frac1p \bmu_i^\T \bmu_j + \frac1p (\bmu_i^\T \z_j + \bmu_j^\T \z_i) + S_{ij} \right) + O(p^{-3/2})                                                     \\
                          & = \frac1{\tau_0} A_{ij} + S_{ij} + O(p^{-3/2}).
\end{align*}
Therefore, again by Taylor-expansion,
\begin{align*}
    \sqrt{ \| \x_j \|^2- (\| \x_j \|\angle_{ij})^2 } & = \sqrt{(\| \bmu_j\|^2/p + 2\bmu_j^\T \z_j/p + \tau_0^2 + t_j + \psi_j)  - ( A_{ij}/\tau_0 + S_{ij})^2 }                  \\
                                                     & = \tau_0 + \frac1{2 \tau_0 } (\| \bmu_i \|^2/p + 2 \bmu_i^\T \z_i/p + t_i + \psi_i) - \frac1{8 \tau_0^3} (t_i + \psi_i)^2 \\
                                                     & - \frac{1}{2\tau_0^3}\left(\frac1p \z_i^\T \z_j  \right)^2 + S_{ij} + O(p^{-3/2})                                         \\
                                                     & =\tau_0 + \theta_j - \frac{1}{2\tau_0^3} \left(\frac1p \z_i^\T \z_j  \right)^2 + S_{ij} + O(p^{-3/2}).
\end{align*}
Following the same idea, we again Taylor-expand $\sigma_1(\cdot)$ in the definition of $\K_{\CK,1}$ as
\begin{align*}
    \sigma_1(u) & = \sigma_1 ( \tau_0 \xi_i ) + \sigma_1'( \tau_0 \xi_i ) \xi_i \theta_i +  \frac1{8\tau_0^2 } \sigma_1''( \tau_0 \xi_i ) \xi_i^2 (t_i + \psi_i)^2 + O(p^{-3/2}),
\end{align*}
and
\begin{align*}
    \sigma_1(v) & = \sigma_1 \left( \| \x_j \| \angle_{ij} \xi_j + \| \x_j \| \sqrt{1 - \angle_{ij}^2} \xi_i \right)                                                                      \\
                & = \sigma_1 \left(
    \tau_0 \xi_j + \xi_j \theta_j - \xi_j \frac1{2\tau_0^3} \left(\frac1p \z_i^\T \z_j  \right)^2 + \xi_i \frac1{\tau_0} A_{ij} + S_{ij} + O(p^{-3/2})\right)                             \\
                & =\sigma_1 ( \tau_0 \xi_j ) + \sigma_1'( \tau_0 \xi_j ) \xi_j \theta_j +  \frac1{8\tau_0^2 } \sigma_1''( \tau_0 \xi_j ) \xi_j^2 (t_j + \psi_j)^2 + X_{ij} + O(p^{-3/2}),
\end{align*}
with
\begin{align*}
    X_{ij} & = \frac{1}{\tau_0} \xi_i \sigma_1'(\tau_0 \xi_j) A_{ij} + \frac1{2\tau_0^2} \left( \frac1p \z_i^\T \z_j \right)^2 \left( \xi_i^2 \sigma_1''(\tau_0 \xi_j) - \frac{1}{\tau_0}\xi_j \sigma_1'(\tau_0 \xi_j)  \right) + S_{ij} = O(p^{-1/2}),
\end{align*}
where we recall the definition
\begin{equation}
    A_{ij} = \underbrace{\frac1p \z_i^\T \z_j}_{O(p^{-1/2})} + \underbrace{\frac1p \bmu_i^\T \bmu_j + \frac1p (\bmu_i^\T \z_j + \bmu_j^\T \z_i)}_{O(p^{-1})} = \x_i^\T \x_j.
\end{equation}

For independent $\xi_i$ and $\xi_j$, we denote the following coefficients
\begin{equation}
    p_0 = \EE[\sigma_1(\tau_0 \xi)], \quad p_1 = \EE[\sigma_1'(\tau_0 \xi)], \quad p_2 = \EE[\sigma_1''(\tau_0 \xi)], \quad p_3 = \EE[ \sigma_1'''(\tau_0 \xi) ],
\end{equation}
so that
\begin{equation}
    \EE[\xi \sigma_1(\tau_0 \xi)] = \tau_0 \EE[\sigma_1'(\tau_0 \xi)] = \tau_0 p_1, \quad \EE[\xi \sigma_1'(\tau_0 \xi)] = \tau_0 p_2,
\end{equation}
as well as
\begin{equation}
    \EE[ \xi^2 \sigma_1''(\tau_0 \xi) ] = \EE[ (\xi^2 -1) \sigma_1''(\tau_0 \xi) ] + p_2 = \tau_0^2 \EE[ \sigma_1''''(\tau_0 \xi) ] + p_2 \equiv \tau_0^2 p_4 + p_2,
\end{equation}
for $p_4 = \EE[\sigma_1''''(\tau_0 \xi)]$.

This further allows us to write, for $A_{ij} = O(p^{-1/2})$ and $\theta_j = O (p^{-1/2})$ that
\begin{align*}
      & [\K_{\CK,1}]_{ij} = \EE_{u,v} [\sigma_1(u) \sigma_1(v)]                                                                                                                                                 \\
    = & \EE \left[\sigma_1 ( \tau_0 \xi_i ) + \sigma_1'( \tau_0 \xi_i ) \xi_i \theta_i +  \frac1{8\tau_0^2 } \sigma_1''( \tau_0 \xi_i ) \xi_i^2 (t_i + \psi_i)^2 \right]                                        \\
      & \times \EE\left[ \sigma_1 ( \tau_0 \xi_j ) + \sigma_1'( \tau_0 \xi_j ) \xi_j \theta_j +  \frac1{8\tau_0^2 } \sigma_1''( \tau_0 \xi_j ) \xi_j^2 (t_j + \psi_j)^2 \right]                                 \\
      & + \EE \left[ \left( \sigma_1 ( \tau_0 \xi_i ) + \sigma_1'( \tau_0 \xi_i ) \xi_i \theta_i +  \frac1{8\tau_0^2 } \sigma_1''( \tau_0 \xi_i ) \xi_i^2 (t_i + \psi_i)^2 \right) X_{ij} \right] + O(p^{-3/2}) \\
    = & \left(p_0 + \tau_0 p_2 \theta_i + \frac{\tau_0^2 p_4 + p_2}{8 \tau_0^2} (t_i + \psi_i)^2 \right)  \left(p_0 + \tau_0 p_2 \theta_j + \frac{\tau_0^2 p_4 + p_2}{8 \tau_0^2} (t_j + \psi_j)^2 \right)      \\
      & + \EE \left[ \sigma_1 (\tau_0 \xi_i) X_{ij} \right] + S_{ij} + O(p^{-3/2}),
\end{align*}
where the expectation is taken with respect to the \emph{independent} $\xi_i$ and $\xi_j$ (so, in fact, conditioned on $\x_i, \x_j$), so that
\begin{align*}
      [\K_{\CK,1}]_{ij} &= \EE_{u,v} [\sigma_1(u) \sigma_1(v)]                                                                                                                                                                                                                                                \\
    & =  \left(p_0 + \tau_0 p_2 \theta_i + \frac{\tau_0^2 p_4 + p_2}{8 \tau_0^2} (t_i + \psi_i)^2 \right)  \left(p_0 + \tau_0 p_2 \theta_j + \frac{\tau_0^2 p_4 + p_2}{8 \tau_0^2} (t_j + \psi_j)^2 \right)                                                                                                     \\
      & \quad + \EE \left[  \frac1{\tau_0} \xi_i \sigma_1(\tau_0 \xi_i) \sigma_1'(\tau_0 \xi_j) A_{ij} + \frac1{2\tau_0^2} \left(\frac1p \z_i^\T \z_j \right)^2 \right. 
      \\
      &\quad \quad \quad \left. \left( \xi_i^2 \sigma_1(\tau_0 \xi_i) \sigma_1''(\tau_0 \xi_j) - \frac1{\tau_0} \sigma_1(\tau_0 \xi_i) \xi_j \sigma_1'(\tau_0 \xi_j) \right) \right] 
       \\
      & \quad + S_{ij} + O(p^{-3/2})                                                                                                                                                                                                                                                                                 \\
    &=  \left( p_0 + \frac{p_2}2 \chi_i + \frac{p_4}{8} (t_i + \psi_i)^2 \right) \left( p_0 + \frac{p_2}2 \chi_j + \frac{p_4}{8} (t_j + \psi_j)^2 \right)                                                                                                                                                      \\
      & \quad +p_1^2 A_{ij} + \frac1{2\tau_0^2} \left(\frac1p \z_i^\T \z_j \right)^2 \cdot p_2 \left( \EE[(\xi^2 - 1) \sigma_1(\tau_0 \xi)] \right) + S_{ij}+ O(p^{-3/2})                                                                                                                                            \\
    &= p_0^2 + \frac{p_0 p_2}2 (\chi_i + \chi_j) +  \frac{p_0 p_4}{8} \left( (t_i + \psi_i)^2 + (t_j + \psi_j)^2 \right) + \frac{p_2^2}4 (t_i + \psi_i) (t_j + \psi_j)                                                                                                                                        \\
      & \quad +p_1^2 A_{ij} + \frac{p_2^2}{2}
    \left(\frac1p \z_i^\T \z_j \right)^2
    + S_{ij} + O(p^{-3/2}),
\end{align*}
where we recall the shortcut
\begin{equation}
    \theta_i \equiv \frac1{2 \tau_0 } (\| \bmu_i \|^2/p + 2 \bmu_i^\T \z_i/p + t_i + \psi_i) - \frac1{8 \tau_0^2 \tau_0} (t_i + \psi_i)^2 \equiv \frac{\chi_i}{2 \tau_0 } - \frac{(t_i + \psi_i)^2}{8 \tau_0^2 \tau_0} = O(p^{-1/2}),
\end{equation}
with
\begin{equation}
    \chi_i \equiv  t_i + \psi_i + \| \bmu_i \|^2/p + 2 \bmu_i^\T \z_i/p = \| \x_i \|^2 - \tau_0.
\end{equation}
This gives, in matrix form,
\begin{align*}
    \K_{\CK,1} & = p_0^2 \one_n \one_n^\T + p_1^2 \left(\frac1p \Z^\T \Z + \frac1p \J \M^\T \M \J^\T + \frac1p(\J \M^\T \Z + \Z^\T \M \J^\T) \right)                                                                                           \\
               & + \frac{p_0 p_2}2 ( \boldsymbol{\chi} \one_n^\T + \one_n \boldsymbol{\chi}^\T ) + \frac{p_0 p_4}{8} \left( ( \{ t_a \one_{n_a} \}_{a=1}^K + \bpsi)^2 \one_n^\T + \one_n [(\{ t_a \one_{n_a} \}_{a=1}^K + \bpsi)^2]^\T \right) \\
               & + \frac{p_2^2}4 (\{ t_a \one_{n_a} \}_{a=1}^K + \bpsi) ( \{ t_a \one_{n_a} \}_{a=1}^K + \bpsi)^\T + \frac{p_2^2}{2} \left(\frac1p \Z^\T \Z \right)^{\circ 2}                                                                  \\
               & + (\EE[\sigma_1^2(\tau_0 \xi)] - p_0^2 - \tau_0^2 p_1^2)\I_n + O_{\| \cdot \|} (p^{-1/2}),
\end{align*}
where we denote $\boldsymbol{\chi} \equiv \{ \chi_i \}_{i=1}^n \in \RR^n$, $\A^{\circ 2}$ the \emph{entry-wise} square of the matrix $\A \in \RR^{n \times n}$, i.e., $[\A^{\circ 2}]_{ij} = [\A_{ij}]^2$, and use again the fact that $\| \A \| \leq n \| \A \|_\infty $ for $\A \in \RR^{n \times n}$ with $\| \A \|_{\infty} = \max_{ij} |\A|_{ij}$, $\{  S_{ij} \}_{i,j} = O_{\| \cdot \|}(p^{-\frac12})$ as well as $\left(\frac1p \Z^\T \Z \right)^{\circ 2} = \frac1p \J \bT \J^\T + O_{\| \cdot \|} (p^{-1/2})$ according to~\cite{couillet2016kernel}.

Finally, using the fact that for $\P = \I_n -  \one_n \one_n^\T/n$, we have $\one_n^\T \P = \zo, \P \one_n = \zo$, we conclude the proof of \eqref{eq:diff_centering} with the same expression of $\tilde \K_{\CK,1}$ as in the statement of our Theorem~\ref{theo:CK}, \emph{without} the assumption $\EE[\sigma_1(\tau_0 \xi)] = 0$.
This, however, no longer holds in the multi-layer setting with a number of layers $L \geq 1$.


\subsection{Proof and discussions of Corollary~\ref{coro:sparse_quantized}}
\label{subsec:proof_coro_sparse_quantized}

To prove Corollary~\ref{coro:sparse_quantized}, it can be easily checked that the i.i.d.\@ entries of the weights $\W$ defined in \eqref{eq:def_W} have zero mean and unit variance. So we focus on the design of the activations.

To ensure that the activation functions $\sigma_\ell(\cdot)$s are ``centered'' and satisfy $\EE[\sigma_\ell(\tau_{\ell-1} \xi)] = 0$, we define, with a slight abuse of notation, for the non-negative sequence $\tau_1, \ldots, \tau_L$ defined in Theorem~\ref{theo:CK},
\begin{equation}
    \sigma_T(t)=a \cdot (1_{t < s_1}+1_{t > s_2}), \quad \sigma_Q(t)=b_1 \cdot (1_{t < r_1}+1_{t > r_4}) + b_2 \cdot 1_{r_2 \leq t \leq r_3},
\end{equation}
and take $\alpha_{\ell, 0} \equiv \EE[\sigma_T(\tau_{\ell-1} \xi)]$, $\sigma_T(\tau_{\ell-1} \xi) \equiv \tilde \sigma_T(\tau_{\ell-1} \xi)=\sigma_T(\tau_{\ell-1} \xi)-\alpha_{\ell, 0}$, which serves as the activation of the first $\ell = 1, \ldots , L-1$ layers, and $a$, $s_1$ and $s_2$ satisfying the following equations
\begin{equation}
    \EE [ \sigma_T'( \tau_{\ell-1} \xi)]=\frac{a}{\sqrt{2\pi}\tau_{\ell-1}} \cdot \left(e^{-s_2^2 / (2\tau ^2_{\ell-1})}-e^{-s_1^2 / (2\tau ^2_{\ell-1})} \right),
\end{equation}
\begin{equation}
    \EE [ \sigma_T''( \tau_{\ell-1} \xi)]=\frac{a}{\sqrt{2\pi}\tau^3_{\ell-1}} \cdot \left(s_2e^{-s_2^2/(2\tau^2_{\ell-1})}-s_1e^{-s_1^2 / (2\tau ^2_{\ell-1})} \right),
\end{equation}
\begin{equation}
    \EE [ (\sigma_T^2(\tau_{\ell-1}\xi))'']=\frac{a^2-2a\cdot \alpha_{\ell, 0}}{\sqrt{2\pi}\tau^3_{\ell-1}}\cdot \left(s_2e^{-s_2^2/(2\tau^2_{\ell-1})}-s_1e^{-s_1^2 / (2\tau ^2_{\ell-1})} \right),
\end{equation}
\begin{equation}
    \EE[\sigma_T^2(\tau_{\ell-1}\xi)] = \frac{a^2}{2}\left(\erf \left( \frac{s_1}{\sqrt{2}\tau_{\ell-1}} \right)-\erf\left(  \frac{s_2}{\sqrt{2}\tau_{\ell-1}} \right) +2 \right) - \alpha_{\ell, 0}^2,
\end{equation}
and $\alpha_{L, 0} \equiv \EE[\sigma_Q(\tau \xi)]$, $\sigma_T(\tau \xi) \equiv \tilde \sigma_T(\tau \xi)=\sigma_T(\tau \xi)-\alpha_{L, 0}$, which serves as the activation of the last and $L$th layer, and $b_1$, $b_2$, $r_1$, $r_2$, $r_3$ and $r_4$ satisfying the following equations

\begin{align}
     & \EE [ \sigma_Q'( \tau \xi)] = \frac{b_1 \left(e^{-r_4^{2}/(2 \tau^2)} - e^{-r_2^{2}/(2 \tau^2)} \right)}{\sqrt{2\pi}\tau}
    + \frac{b_2 \left(e^{-r^2_2/(2\tau^2)}
    - e^{-r^3_2/(2\tau^2)}  \right)}{\sqrt{2\pi}\tau},                                                                               \\
     & \EE [ \sigma_Q''( \tau \xi)] =\frac{b_1 \left(r_4e^{-r_4^2/(2\tau^2)}-r_1e^{-r_1^2 / (2\tau ^2)} \right)}{\sqrt{2\pi}\tau^3}
    + \frac{b_2 \left(r_2 e^{-r^2_2/(2\tau^2)}
        - r_3 e^{-r^3_2/(2\tau^2)}  \right)}{\sqrt{2\pi}\tau^{3}},
\end{align}
\begin{equation}
    \begin{aligned}
        \EE [ (\sigma_Q^2( \tau\xi))''] & =\frac{b_1^2 \left(r_4e^{-r_4^2/(2\tau^2)}-r_1e^{-r_1^2 / (2\tau ^2)} \right)}{\sqrt{2\pi}\tau^3}
        + \frac{b_2^2 \left(r_2 e^{-r^2_2/(2\tau^2)}
        - r_3 e^{-r^3_2/(2\tau^2)}  \right)}{\sqrt{2\pi}\tau^{3}},                                                                           \\
                                        & \quad -2 \alpha_{L,0} \EE [ (\sigma_Q''( \tau\xi))],
    \end{aligned}
\end{equation}
\begin{equation}
    \begin{aligned}
        \EE [(\sigma_Q^2(\tau \xi))] & =\frac{b_1^{2}}{2}    \left(\erf\left(\frac{r_1}{\sqrt{2}\tau}\right)
        - \erf \left(\frac{r_4}{\sqrt{2}\tau}\right) \right)
        + b_1^2 + \frac{b_2^{2}}{2} \left(\erf\left(\frac{r_2}{\sqrt{2}\tau}\right)
        - \erf\left(\frac{r_3}{\sqrt{2}\tau}\right) \right)                                                  \\
                                     & \quad -\alpha_{L,0}^{2}
    \end{aligned}
\end{equation}
with $\tau = \tau_{L-1}$.

A few remarks on Corollary~\ref{coro:sparse_quantized} and Algorithm~\ref{alg:sparse_quantized} are as follows.

\paragraph{On the numerical determinations of $\sigma_T$ and $\sigma_Q$.}

The above system of nonlinear equations does not admit explicit solutions, but can be solved efficiently using, for example, a (numerical) least squares method.
Precisely, we use the numerical least squares method (the {\sf optimize.minimize} function of {\sf SciPy} library) to solve the above system of equations, and run for $1\,000$ times with random and independent initializations to get $1\,000$ solutions, among which we choose the optimal parameters to determine $\sigma_Q$ and $\sigma_T$.

\paragraph{On the two activations.}
Note that in Algorithm~\ref{alg:sparse_quantized} we use the activation $\sigma_T$ and $\sigma_Q$ respectively for the first $\ell = 1 ,\ldots, L-1$ and the final and $L$th layer, since we \emph{only} need to match the key parameters $\alpha_{\ell,1}$, $\alpha_{\ell,2}$ and $\alpha_{\ell,3}$ for the first $\ell = 1, \ldots, L-1$ layer, and the additional parameter $\tau_{\ell}$ for the last $L$th, so as to obtain spectrally equivalent CK and NTK matrices for the whole network of depth $L$.
Also note that the proposed activation functions $\sigma_T$ and $\sigma_Q$ have respectively three and five (in fact six parameters with the symmetric constraint $r_1- r_2 = r_3 - r_4$ as in Figure~\ref{fig:sigma}) parameters that are freely tunable.
And we have respectively three and four (nonlinear) equations to determine these parameters in the system of equations above.

\section{Additional experiments}
\label{sec:additional_experiments}

In this section, we provide additional experiments to demonstrate the advantageous performance of the proposed ``lossless'' compression approach.
Figure~\ref{fig:classif-MNIST-2&5-classes} depicts the classification accuracies using three different neural networks: (i) the original ``dense and unquantized'' nets with three fully-connected layers of $\ReLU$ activations, (ii) the proposed sparse and quantized DNN model as per Algorithm~\ref{alg:sparse_quantized}, and (iii) the ``heuristically'' compressed networks by (iii-i) uniformly and randomly zeroing out $90\%$ of the weights, as well as (iii-ii) natively binarizing using $\sigma(t) = 1_{t<-1} + 1_{t>1}$, on two tasks of MNIST data classification \cite{lecun1998gradient} having five classes (digits $0,1,2,3,4$) and two classes (digits $6$ versus $8$).
This allows us to have a more qualitative assessment of the impact of data and task on the performance of the proposed compression scheme.
We see, as in Figure~\ref{fig:classif-perf} for ten-class MNIST and ten-class CIFAR10, that the proposed compression approach significantly outperform the two ``naive'' compression approaches, and can achieve a memory compression rate of $10^3$ and a level of sparsity up to $90\%$, with virtually no performance loss.
Also note that the experimental settings of Figure~\ref{fig:classif-MNIST-2&5-classes} is almost the \emph{same} as those of Figure~\ref{fig:classif-perf} in Section~\ref{sec:experiments}, except that the former networks have less neurons per layer and slightly higher level of sparsity ($90\%$ here instead of $80\%$ in the setting of Figure~\ref{fig:classif-perf}), to solve the simpler two-class or five-class classification problems.

\begin{figure}[htb]
    \centering
    \begin{tikzpicture}[font=\footnotesize][scale=0.5]
        \pgfplotsset{every major grid/.style={style=densely dashed}}
        \begin{axis}
            [width=0.7\textwidth,height=0.3\textwidth,
                xmode = log, legend pos = south east, grid=major,xlabel={Memory (bits)},ylabel={Accuracy},
                xmin=100000,xmax=520000000,ymin=0.6, ymax=1,
                xtick={100000, 1000000,10000000, 100000000}, ytick={0.6, 0.8, 1},
                xticklabels={$10^5$,$10^6$, $10^7$, $10^8$},
                yticklabels={$0.6$, $0.8$,$1$},legend style={
                        cells={anchor=east},
                        legend pos=outer north east,
                    },
            ]
            \addplot[mark size=2pt, BLUE, mark=triangle*, very thick]coordinates{
            (1789000,0.8327)(5578000, 0.9036)(11367000,0.9284)(19156000,0.9453)(28945000,0.9518)
            };
            \addplot[mark size=2pt, BLUE, mark=square*, very thick]coordinates{
            (896000,0.8398)(2792000,0.9043)(5688000,0.9173)(9584000,0.9388)(14480000,0.9505)
            };
            \addplot[mark size=2pt, BLUE , mark=*, very thick]coordinates{
            (181600,0.8581)(563200,0.8984)(1144800,0.9258)(1926400,0.9486)(2908000,0.9603)
            };
            \addplot[mark size=2pt, RED, mark=*, very thick]coordinates{
            (28624000,0.9857)(89248000,0.9909)(181872000,0.9909)(306496000,0.9915)(463120000,0.9922)
            };
            \addplot[mark size=2pt, GREEN, mark=*, very thick]coordinates{
            (28579000,0.6406)(89158000,0.6803)(181737000,0.6862)(306316000,0.7155)(462895000,0.7650)
            };
            \addplot[mark size=2pt, PURPLE, mark=*,very thick]coordinates{
            (2910400, 0.7884)(9020800,0.8197)(18331200, 0.8320)(30841600,0.8490)(46552000,0.8958)
            };

            \legend{$\varepsilon = 0\%$,$\varepsilon = 50\%$, $\varepsilon = 90\%$,original,naive quantized,naive sparsity};
        \end{axis}
    \end{tikzpicture}
    \\
    \begin{tikzpicture}[font=\footnotesize][scale=0.5]
        \pgfplotsset{every major grid/.style={style=densely dashed}}
        \begin{axis}
            [width=0.7\textwidth,height=0.3\textwidth,
                xmode = log, legend pos = south east, grid=major,xlabel={Memory (bits)},ylabel={Accuracy},
                xmin=1000000,xmax=30000000000,ymin=0.6, ymax=1,
                xtick={1000000, 10000000, 100000000, 1000000000, 10000000000}, ytick={0.6,0.8, 1},
                xticklabels={ $10^6$, $10^7$, $10^8$, $10^9$, $10^{10}$},
                yticklabels={$0.6$, $0.8$, $1$},legend style={
                        cells={anchor=east},
                        legend pos=outer north east,
                    },
            ]
            \addplot[mark size=2pt, BLUE, mark=triangle*, very thick]coordinates{
            (11376000,0.8994)(77884000,0.9352)(107920000,0.9607)(315780000,0.9632)(1271572000,0.9745)
            };
            \addplot[mark size=2pt, BLUE, mark=square*, very thick]coordinates{
            (5692500, 0.8959)(38954000,0.9259)(53975000,0.9546)(157915000, 0.9624)(635837000,0.9720)
            };
            \addplot[mark size=2pt, BLUE , mark=*, very thick]coordinates{
            (1145700, 0.9042)(7810000,0.9216)(10819000,0.9541)(31623000,0.9554)(127249000,0.9693)
            };
            \addplot[mark size=2pt, RED, mark=*, very thick]coordinates{
            (182016000,0.9856)(1246144000,0.9859)(1726720000,0.9904)(5052480000,0.9902)(20345152000,0.9922)
            };
            \addplot[mark size=2pt, GREEN, mark=*, very thick]coordinates{
            (181881000,0.6376)(1245784000,0.7127)(1726270000,0.8042)(5051730000,0.8248)(20343622000,0.9012)
            };
            \addplot[mark size=2pt, PURPLE, mark=*,very thick]coordinates{
            (35618400, 0.8775)(249536000,0.8914)(345728000, 0.9027)(1011136000,0.9118)(4070336000,0.9289)
            };

            \legend{$\varepsilon = 0\%$,$\varepsilon = 50\%$,$\varepsilon = 90\%$, original,naive quantized,naive sparsity};
        \end{axis}
    \end{tikzpicture}
    \caption{Test accuracy of classification on 2-class (\textbf{top}) MNIST dataset - digits $6$ versus $8$ and 5-class (\textbf{bottom}) MNIST dataset - digits $(0,1,2,3,4)$.
            {\BLUE \textbf{Blue}} curves represent the proposed ``lossless'' compression scheme with different levels of sparsity $\varepsilon \in \{ 0\%, 50\%, 90\% \}$,
        {\PURPLE \textbf{purple}} curves represent the heuristic sparsification approach by uniformly zeroing out $90\%$ of the weights,
        {\GREEN \textbf{green}} curves represent the heuristic quantization approach using the binary activation
        $\sigma (t) =  1_{t < -1}+ 1_{t > 1}$ (only applied on the first two layers, otherwise the performance is too poor to be compared to other curves),
        and {\RED \bf red} curves represent the original (dense and unquantized) network.
        All nets have three fully-connected layers, and the original network uses $\ReLU$ activations for all layers.
        Memory varies due to the \textbf{change of layer width} of the network.}
    \label{fig:classif-MNIST-2&5-classes}
\end{figure}
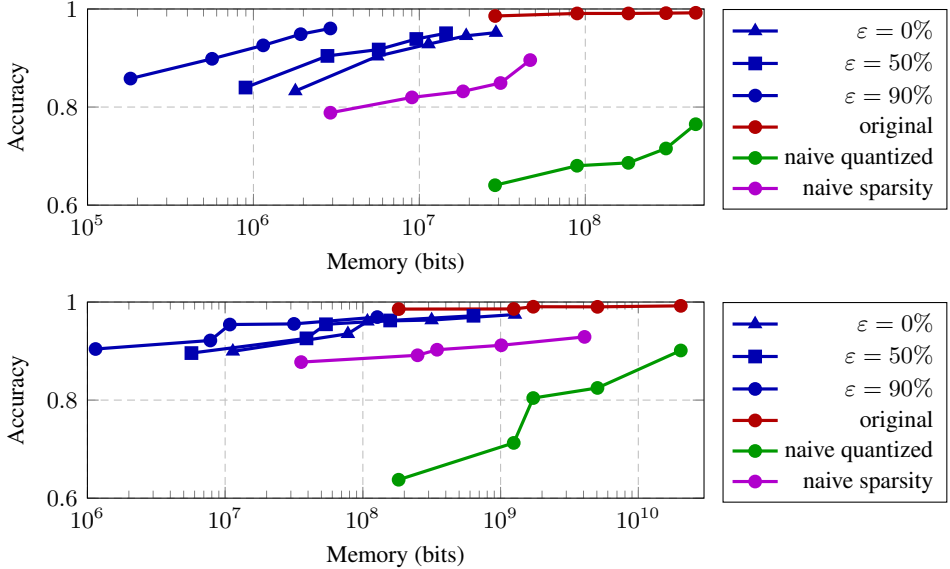

\medskip

In Table~\ref{tab:coe varies on mnist}~and~\ref{tab:coe varies on GMM}, we evaluate the impact of activation functions on the classification performance on data of \emph{different nature}, on a set of fully-connected DNN models having three hidden layers (of width $d_1 = 3\,000, d_2 = 3\,000, d_3 =1\,000$ in each layer) and use the \emph{same} activation $\sigma(\cdot)$ for all layers.

More precisely, Table~\ref{tab:coe varies on mnist} depicts the classification accuracy and the values of the key parameters $\alpha_1$, $\alpha_2$, $\alpha_3$ and $\tau$ for different activations $\sigma(\cdot)$ in the asymptotic equivalent CK matrix $\tilde \K_{\CK}$ defined in Theorem~\ref{theo:CK} of the third and final layer of the network, on a binary classification of MNIST data (class $6$ versus $8$).
Similarly, Table~\ref{tab:coe varies on GMM} compares the classification accuracy and $\alpha_1$, $\alpha_2$, $\alpha_3, \tau$ for different activations, on two-class GMM data with identical mean $\bmu_a=\zo_p$ and different covariance $\C_a=(1+8(a-1)/\sqrt{p})\I_p$, $a \in \{ 1,2\}$,
The numerical experiments are performed on a training set of size $12\,000$, a test set of $1\,800$, with $\x_1, \ldots, \x_{n/2} \in \mathcal{C}_1$ and $\x_{n/2+1}, \ldots, \x_n \in \mathcal{C}_2$, for standard Gaussian $\W$ on both MNIST and GMM data.

We observe from Table~\ref{tab:coe varies on mnist}~and~\ref{tab:coe varies on GMM} that:
\begin{itemize}
    \item[(i)] while in theory, the key parameters $\alpha_1$, $\alpha_2$, $\alpha_3$ and $\tau$ (in Theorem~\ref{theo:CK}) depend on both (the statistics of) the data and the activation, the impact of the activation $\sigma$ appears much more significant;
    \item[(ii)] by using some $\sigma$ (with the corresponding $\alpha_1$, $\alpha_2$ and/or $\alpha_3$ being zero), one asymptotically ``discards'' either the first-order ($\bmu_a$) or the second-order ($\bt, \bT$) statistics of the data (which, per Theorem~\ref{theo:CK}, are respectively weighted by the key parameter $\alpha_1$, $\alpha_2$ and $\alpha_3$), resulting in performance degradation;
    \item[(iii)] precisely, we divide commonly used activations in Table~\ref{tab:coe varies on mnist}~and~\ref{tab:coe varies on GMM} into the following three categories:
        \begin{enumerate}
            \item covariance-oriented activations with $\alpha_1 = 0$: this includes $\cos(t)$ and $|t|$; and
            \item mean-oriented activations with $\alpha_2 = 0$ and $\alpha_3 = 0$: this includes $1_{t \geq 0}$, ${\rm sign}(t)$, $(1+e^{-t})^{-1}$~\cite{1995The}, $\sin(t)$, linear function, and the Gaussian error function ${\rm erf(t)}$; and
            \item balanced activations with nonzero $\alpha_1, \alpha_2, \alpha_3$: this includes ReLU activation $\ReLU(t) = \max(t,0)$ and Leaky ReLU activation \cite{maas2013rectifier}.
        \end{enumerate}
        The above classification of activation functions is reminiscent of that proposed in \cite{liao2018spectrum}, which is, however, only valid in a single-hidden-layer setting.
        In line with the observations made in \cite{liao2018spectrum}, we see in Table~\ref{tab:coe varies on mnist} that covariance-oriented activations behave poorly in the classification of MNIST data (that are known to have very different first-order statistics, see for example \cite[Table~3]{liao2018spectrum}), while mean-oriented activations yield unsatisfactory performance on GMM data having different covariance structure in Table~\ref{tab:coe varies on GMM}.
        In a sense, the parameter $\alpha_{1}$ characterizes the ``ability'' of a given net to extract first-order data statistics and $\alpha_{2}, \alpha_3$ the ``ability'' to extract second-order statistics from the input data, respectively.
\end{itemize}

\begin{table}[htb]
    \caption{Classification accuracy and values of $\alpha_1$, $\alpha_2$, $\alpha_3$ and $\tau$ at the third and final layer, on MNIST data (digits $6$ versus $8$).}
    \label{tab:coe varies on mnist}
    \centering
    \begin{tabular}{cccccc}
        \toprule
        $\sigma(t)$                                  & $\alpha_1$ & $\alpha_2$ & $\alpha_3$ & $\tau$ & Accuracy \\
        \midrule
        $\max(0,t)$                                  & 0.0156     & 0.0105     & 0.0112     & 0.1994 & 0.971    \\
        $0.1 t \cdot 1_{t<0} + t \cdot 1_{t \geq 0}$ & 0.0083     & 0.0097     & 0.0081     & 0.1750 & 0.9654   \\
        \midrule
        $1_{t \geq 0}$                               & 0.0642     & 0          & 0          & 0.5    & 0.9665   \\
        ${\rm sign}(t)$                              & 0.1779     & 0          & 0          & 0.4689 & 0.9715   \\
        $1/(1+e^{-t})$                               & 0.0002     & 0          & 0          & 0.0129 & 0.9637   \\
        $\sin(t)$                                    & 0.1779     & 0          & 0          & 0.4689 & 0.9749   \\
        $t$                                          & 1          & 0          & 0          & 1.0021 & 0.981    \\
        $\erf(t)$                                    & 0.2166     & 0          & 0          & 0.5053 & 0.9788   \\
        \midrule
        $\cos(t)$                                    & 0          & 0.0003     & 0          & 0.0116 & 0.5257   \\
        $|t|$                                        & 0          & 0.0209     & 0          & 0.2195 & 0.5709   \\
        \bottomrule
    \end{tabular}
\end{table}

\begin{table}[htb]
    \caption{Classification accuracy and values of $\alpha_1$, $\alpha_2$, $\alpha_3$ and $\tau$ at the third and final layer, on GMM data.}
    \label{tab:coe varies on GMM}
    \centering
    \begin{tabular}{cccccc}
        \toprule
        $\sigma(t)$                                  & $\alpha_1$ & $\alpha_2$ & $\alpha_3$ & $\tau$ & Accuracy \\
        \midrule
        $\max(0,t)$                                  & 0.0156     & 0.0092     & 0.0099     & 0.2128 & 0.8945   \\
        $0.1 t \cdot 1_{t<0} + t \cdot 1_{t \geq 0}$ & 0.0083     & 0.0085     & 0.0071     & 0.1867 & 0.9079   \\
        \midrule
        $1_{t \geq 0}$                               & 0.0564     & 0          & 0          & 0.5    & 0.5028   \\
        ${\rm sign}(t)$                              & 0.2256     & 0          & 0          & 1      & 0.4916   \\
        $1/(1+e^{-t})$                               & 0.0002     & 0          & 0          & 0.0135 & 0.5173   \\
        $\sin(t)$                                    & 0.1512     & 0          & 0          & 0.4729 & 0.5025   \\
        $t$                                          & 1          & 0          & 0          & 1.0693 & 0.5045   \\
        $\erf(t)$                                    & 0.1912     & 0          & 0          & 0.51   & 0.4989   \\
        \midrule
        $\cos(t)$                                    & 0          & 0.0003     & 0          & 0.015  & 0.9598   \\
        $|t|$                                        & 0          & 0.0184     & 0          & 0.2342 & 0.9302   \\
        \bottomrule
    \end{tabular}
\end{table}

\end{document}